\definecolor{tolblue}{HTML}{77AADD}
\definecolor{tolred}{HTML}{EE6677}
\definecolor{tolyellow}{HTML}{CCBB44}
\definecolor{tolorange}{HTML}{EE8866}
\definecolor{tolpink}{HTML}{FFAABB}
\definecolor{tolcyan}{HTML}{99DDFF}
\definecolor{tolmint}{HTML}{44BB99}
\definecolor{tolpear}{HTML}{BBCC33}
\definecolor{tololive}{HTML}{AAAA00}
\definecolor{tolgrey}{HTML}{DDDDDD}
\definecolor{recycle}{RGB}{75,113,43}
\definecolor{mydarkblue}{rgb}{0,0.08,0.45}
\renewcommand*{\backrefalt}[4]{%
    \ifcase #1 \footnotesize{(Not cited.)}%
    \or        \footnotesize{(p.~#2)}%
    \else      \footnotesize{(pp.~#2)}%
    \fi}
\newtheorem{hypothesis}{Working Hypothesis}
\Crefname{observation}{Observation}{Observations}
\Crefname{hypothesis}{Hypothesis}{Hypotheses}
\newtheorem{remark}{Remark}
\newtheorem{assumption}{Assumption}
\newtheorem*{assumption*}{Assumption}
\Crefname{assumption}{Assumption}{Assumptions}
\newtheorem{lemma}{Lemma}
\newtheorem*{lemma*}{Lemma}
\Crefname{lemma}{Lemma}{Lemmas}
\Crefname{remark}{Remark}{Remarks}
\Crefname{proposition}{Proposition}{Propositions}
\newcommand*\R{\mathcal{R}}
\newcommand*\TR{\hat{R}}
\newcommand*{\eg}{e.g.,\@\xspace}
\newcommand*{\versus}{vs.\@\xspace}
\newcommand*{\sut}{s.t.\@\xspace}
\newcommand*{\ie}{i.e.,\@\xspace}
\newcommand*{\wrt}{w.r.t.\@\xspace}
\let\originalleft\left
\let\originalright\right
\renewcommand{\left}{\mathopen{}\mathclose\bgroup\originalleft}
\renewcommand{\right}{\aftergroup\egroup\originalright}
\def\eqref#1{eq.~\ref{#1}}
\def\1{\bm{1}}
\DeclareMathAlphabet{\mathsfit}{\encodingdefault}{\sfdefault}{m}{sl}
\SetMathAlphabet{\mathsfit}{bold}{\encodingdefault}{\sfdefault}{bx}{n}
\def\gO{{\mathcal{O}}}
\DeclareMathOperator*{\argmax}{argmax}
\newcommand\mydots{\makebox[1em][c]{.\hfil.\hfil.}}
\definecolor{bananayellow}{rgb}{1.0, 0.88, 0.21}
\definecolor{blueviolet}{rgb}{0.54, 0.17, 0.89}
\definecolor{cadmiumred}{rgb}{0.89, 0.0, 0.13}
\newcommand*\samethanks[1][\value{footnote}]{\footnotemark[#1]}
\newcommand\blfootnote[1]{%
  \begingroup
  \renewcommand\thefootnote{}\footnote{#1}%
  \addtocounter{footnote}{-1}%
  \endgroup
}
\title{Rewarded soups: towards Pareto-optimal\ifthenelse{\boolean{isarxiv}}{ alignment}{ity} by interpolating weights fine-tuned on diverse rewards}
\author{%
  \textbf{Alexandre~Rame}\textsuperscript{1}\thanks{Project lead, main contributor, correspondence to alexandre.rame@isir.upmc.fr.} ,
  \textbf{Guillaume Couairon}\textsuperscript{1,2}\thanks{Equal experimental contribution, order determined at random.} ,
  \textbf{Mustafa Shukor}\textsuperscript{1}\samethanks[2] ,\\
  \textbf{Corentin Dancette}\textsuperscript{1}\samethanks[2] ,
  \textbf{Jean-Baptiste Gaya}\textsuperscript{1,2}\samethanks[2] ,
  \textbf{Laure Soulier}\textsuperscript{1},
  \textbf{Matthieu~Cord}\textsuperscript{1,3}\\
  \small\textsuperscript{1}Sorbonne Université, CNRS, ISIR, Paris, France
  ~~
  \small\textsuperscript{2}Meta AI
  ~~
  \textsuperscript{3}Valeo.ai
}
\begin{document}

\maketitle

\begin{abstract}
  Foundation models are first pre-trained on vast unsupervised datasets and then fine-tuned on labeled data.
Reinforcement learning, notably from human feedback (RLHF), can further align the network with the intended usage.
Yet the imperfections in the proxy reward may hinder the training and lead to suboptimal~\mbox{results}; the diversity of objectives in real-world tasks and human opinions~exacerbate the issue.
This paper proposes embracing the heterogeneity of diverse rewards by following a multi-policy strategy.
Rather than focusing on a single a priori reward, we aim for Pareto-optimal generalization across the entire space of preferences.
To this end, we propose \textit{rewarded soup}, first specializing multiple networks independently (one for each proxy reward) and then interpolating their weights linearly.
This succeeds empirically because we show that the weights remain linearly connected when fine-tuned on diverse rewards from a shared pre-trained initialization.
We demonstrate the effectiveness of our approach for text-to-text (summarization, Q\&A, helpful assistant, review), text-image (image captioning, text-to-image generation, visual grounding\ifthenelse{\boolean{isshort}}{}{, VQA}), and control (locomotion) tasks.
We hope to enhance the alignment of deep models, and how they interact with the world in all its~diversity.%
\ifthenelse{\boolean{isopen}}{\blfootnote{Further information and resources related to this project can be found on this \href{\urlwebsite}{website}.}}{}
\end{abstract}

\section{Introduction}

Foundation models \cite{bommasani2021opportunities} have emerged as the standard paradigm to learn neural networks' weights.
They are typically first pre-trained through self-supervision \cite{devlin2018bert,NEURIPS2020_1457c0d6,Caron_2021_ICCV,radford2021learning} and then fine-tuned \cite{oquab2014learning,yosinski2014transferable} via supervised learning \cite{vapnik1999overview}.
Yet, collecting labels is expensive, and thus supervision may not cover all possibilities and fail to perfectly align \cite{amodei2016concrete,taylor2016alignment,ngo2022alignment} the trained network with the intended applications.
Recent works \cite{stiennon2020learning,ouyang2022training,pinto2023tuning} showed that deep reinforcement learning (DRL) helps by learning from various types of rewards.
A prominent example is reinforcement learning from human feedback (RLHF) \cite{stiennon2020learning,christiano2017deep,ziegler2019fine,wu2021recursively}, which appears as the current go-to strategy to refine large language models (LLMs) into powerful conversational agents such as ChatGPT \cite{ouyang2022training,openai2023gpt4}.
After pre-training on next token prediction \cite{radford2018gen} using Web data, the LLMs are fine-tuned to follow instructions \cite{wei2022finetuned,wang-etal-2022-super,alpaca2023} before reward maximization.
This RL strategy enhances alignment by evaluating the entire generated sentence instead of each token independently, handling the diversity of correct answers and allowing for negative feedback \cite{rlforlm2023goldberg}.
Similar strategies have been useful in computer vision (CV) \cite{pinto2023tuning,rennie2017self}, for instance to integrate human aesthetics into image generation~\cite{lee2023aligning,wu2023better,zhang2023hive}.

\textbf{Diversity of proxy rewards.}
RL is usually seen as more challenging than supervised training \cite{dulac2021challenges}, notably because the real reward---ideally reflecting the users' preferences---is often not specified at training time.
Proxy rewards are therefore developed to guide the learning, either as hand-engineered metrics \cite{bleu2002,lin2003automatic,vedantam2015cider} or more recently in RLHF as models trained to reflect human preferences \cite{christiano2017deep,kwon2023reward,xu2023imagereward}.
Nonetheless, designing reliable proxy rewards for evaluation is difficult.
This \textit{reward misspecification} \cite{amodei2016concrete,pan2022the} between the proxy reward and the users' actual rewards can lead to unforeseen consequences \cite{michaud2020understanding}.
Moreover, the diversity of objectives in real-world applications complicates the challenge. In particular, human opinions can vary significantly \cite{wildavsky1987choosing,coello2000handling,schwartz2012overview} on subjects such as aesthetics \cite{nadal2019neuroaesthetics}, politics or fairness \cite{lopez2022measuring}.
Humans have also different expectations from machines: for example, while \cite{ganguli2022red} stressed aligning LLMs towards harmless feedback, \cite{bai2022constitutional} requested helpful non-evasive responses, and others' \cite{irvine2023rewarding} interests are to make LLMs engaging and enjoyable.
Even hand-engineered metrics can be in tension: generating shorter descriptions with higher precision can increase the BLEU \cite{bleu2002} score but decrease the ROUGE \cite{lin2003automatic} score due to reduced recall.

\begin{figure}[t!]
    \centering
    \begin{subfigure}[b]{0.64\textwidth}
        \centering
        \includegraphics[width=\textwidth]{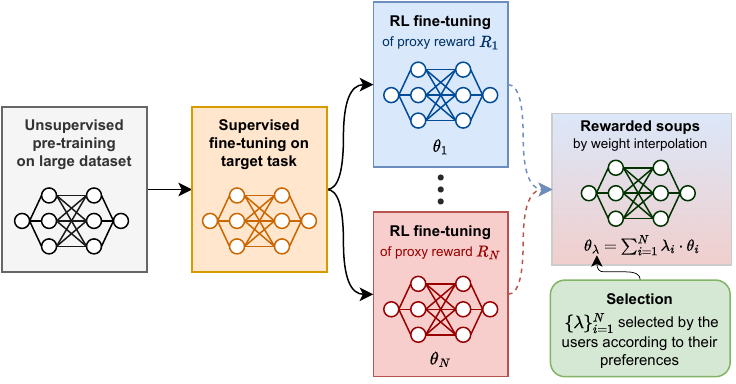}
        \caption{Illustration of our proposed rewarded soup (RS).}
        \label{fig:rewarded_soups}
    \end{subfigure}
    \hfill
    \begin{subfigure}[b]{0.338\textwidth}
        \centering
        \includegraphics[width=\textwidth]{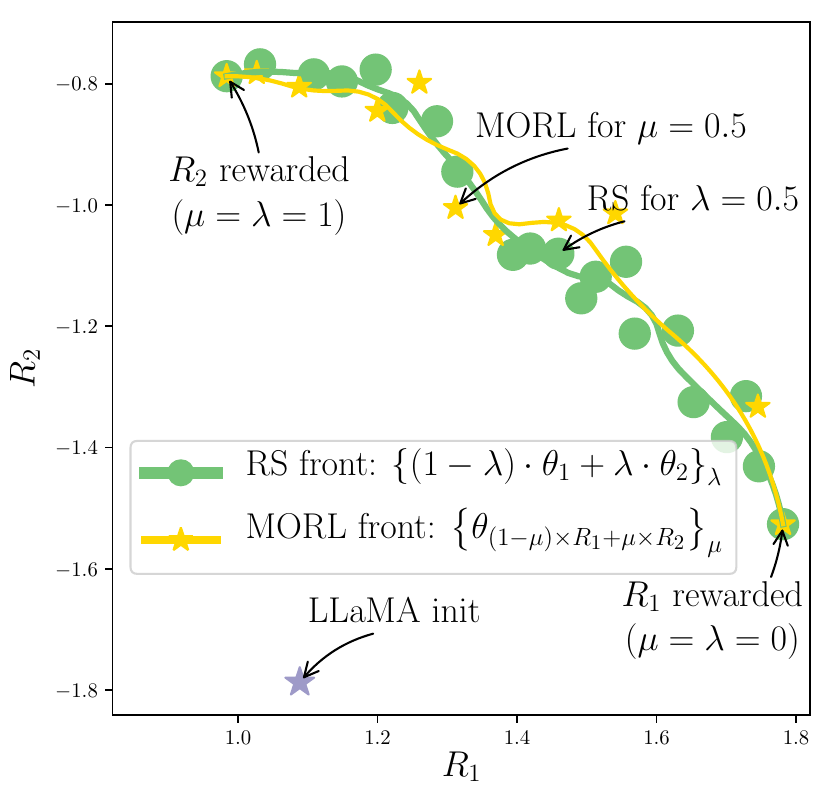}
        \caption{LLaMA RLHF for summarization.}
        \label{fig:pareto_nlp_summarynews}
    \end{subfigure}
    \caption{\Cref{fig:rewarded_soups} details the different steps in rewarded soup. After unsupervised pre-training and supervised fine-tuning, we launch $N$ independent RL fine-tunings on the proxy rewards $\{R_i\}_{i=1}^{N}$. Then we combine the trained networks by interpolation in the weight space. The final weights are adapted at test time by selecting the coefficient $\lambda$. \Cref{fig:pareto_nlp_summarynews} shows our results (extended in \Cref{fig:pareto_nlp_summarynews_full}) with LLaMA-7b \cite{touvron2023llama} instruct fine-tuned on Alpaca \cite{alpaca2023}, when RL fine-tuning for news summarization \cite{stiennon2020learning} with $N=2$ reward models assessing diverse preferences of summaries.
    With only two trainings ($R_1$ and $R_2$ rewarded on \Cref{fig:pareto_nlp_summarynews}), the $\lambda$-interpolation ($0\leq \lambda \leq 1$) reveals the green front of Pareto-optimal solutions, \ie that cannot be improved for one reward without sacrificing the other.
    RS matches the costly yellow front of multi-objective (MORL) \cite{barrett2008learning,li2020deep} requiring multiple trainings on different linear weightings over the rewards $(1-\mu) \times R_1 + \mu \times R_2$ with $0\leq \mu \leq 1$.}%
    \label{fig:fig_tikz_pareto}%
\end{figure}%

\textbf{Towards multi-policy strategies.}
Considering these challenges, a single model cannot be aligned with everyone’s preferences \cite{ouyang2022training}. Existing works align towards a consensus-based user \cite{bakker2022finetuning,ovadya2023generative}, relying on the \enquote{wisdom of the crowd} \cite{bai2022training}, inherently prioritizing certain principles \cite{bai2022constitutional,kovavc2023large}, resulting in unfair representations of marginalized groups \cite{weidinger2021ethical,kirk2023personalisation}.
The trade-offs \cite{pan2023rewards} are decided a priori before training, shifting the responsibility to the engineers, reducing transparency and explainability \cite{hayes2022practical}, and actually aligning towards the \enquote{researchers designing the study} \cite{ouyang2022training,santurkar2023whose}.
These limitations, discussed in \Cref{app:discussion:single_policy}, highlight the inability of single-policy alignment strategies to handle human diversity.
Yet, \enquote{human-aligned artificial intelligence is a multi-objective problem} \cite{vamplew2018human}.
Thus, we draw inspiration from the multi-objective reinforcement learning (MORL) literature \cite{barrett2008learning,li2020deep,tanaka2003multitask,van2014multi,roijers2017multi,ruadulescu2020multi,marta2023aligning,wu2023finegrained} and \cite{hayes2022practical}; they argue that tackling diverse rewards requires shifting from single-policy to multi-policy approaches.
As optimality depends on the relative preferences across those rewards, the goal is not to learn a single network but rather a \textbf{set of Pareto-optimal~networks}~\cite{pareto1964}.

In this paper, we propose \textbf{rewarded soup} (RS), an efficient and flexible multi-policy strategy to fine-tune any foundation model.
As shown in \Cref{fig:rewarded_soups}, we first use RL to learn one network for each proxy reward; then, we combine these expert networks according to user preferences.
This a posteriori selection allows for better-informed trade-offs, improved transparency and increased fairness \cite{hayes2022practical,mannion2021multi}.
The method to combine those networks is our main contribution: we do this through \textbf{linear interpolation in the weight space}, despite the non-linearities in the network. This is in line with recent findings on linear mode connectivity (LMC) \cite{Frankle2020,Neyshabur2020}: weights fine-tuned from a shared pre-trained initialization remain linearly connected and thus can be interpolated.
This LMC inspired a plethora of weight interpolation (WI) strategies \cite{Wortsman2022ModelSA,rame2022diwa,mergefisher21,ilharco2022patching,choshen2022cold,rame2022recycling}, discussed in \Cref{sec:related}.
Actually, the name \textit{rewarded soups} follows the terminology of \textit{model soups} \cite{Wortsman2022ModelSA}, as we combine various \textit{ingredients} each rewarded differently.
Unlike previous works, which focused on supervised learning, we explore LMC in RL, in a challenging setup where each training run uses a different reward.
Perhaps surprisingly, we show that we can trade off the capabilities of multiple weights in a single final model, thus without any computational overhead. This enables the creation of custom weights for any preference over the diverse rewards.
We summarize our contributions as follows:
\begin{itemize}
    \item We advocate a multi-policy paradigm to align deep generative models with human preferences and reduce reward misspecification.
    \item We then propose a new multi-policy strategy, rewarded soup, possible when fine-tuning foundation models with diverse rewards. By weight interpolation, it defines a continuous set of (close to) Pareto-optimal solutions, approximating more costly multi-policy strategies.
\end{itemize}
In \Cref{sec:expe}, we consistently validate the linear mode connectivity and thus the effectiveness of RS across a variety of tasks and rewards: RLHF fine-tuning of LLaMA, multimodal tasks such as image captioning or text-to-image generation with diffusion models, as well as locomotion tasks.

\section{Rewarded soups}
\label{sec:model}
\subsection{RL fine-tuning with diverse rewards}

We consider a deep neural network $f$ of a fixed non-linear architecture (\eg with batch normalization \cite{batchnorm2015}, ReLU layers \cite{agarap2018deep} or self-attention \cite{NIPS2017_3f5ee243}). It defines a policy by mapping inputs $x$ to $f(x,\theta)$ when parametrized by $\theta$.
For a reward $\TR$ (evaluating the correctness of the prediction according to some preferences) and a test distribution $T$ of deployment, our goal is to maximize $\int_{x \in T} \TR\left(f\left(x, \theta\right)\right)$.
For example, with $f$ a LLM, $x$ would be textual prompts, $\TR$ would evaluate if the generated text is harmless \cite{askell2021general}, and $T$ would be the distribution of users' prompts.
Learning the weights $\theta$ is now commonly a three-step process: unsupervised pre-training, supervised fine-tuning, and reward optimization.
Yet $\TR$ is usually not specified before test time, meaning we can only optimize a proxy reward $R$ during training.
This \textbf{reward misspecification} between $R$ and $\TR$ may hinder the alignment of the network with $\TR$.
Moreover, the \textbf{diversity of human preferences} complicates the design of $R$.

Rather than optimizing one single proxy reward, our paper's first key idea is to consider a family of $N$ diverse proxy rewards $\{R_i\}_{i=1}^{N}$. Each of these rewards evaluates the prediction according to different (potentially conflicting) criteria. The goal then becomes obtaining a coverage set of policies that trade-off between these rewards. To this end, we first introduce the costly MORL baseline. Its inefficiency motivates our rewarded soups, which leverages our second key idea: weight interpolation.

\textbf{MORL baseline.}
The standard MORL scalarization strategy \cite{barrett2008learning,li2020deep} (recently used in \cite{wu2023finegrained} to align LLMs) linearizes the problem by interpolating the proxy rewards using $M$ different weightings.
Specifically, during the \textit{training phase}, $M$ trainings are launched, with the $j$-th optimizing the reward $\sum_{i=1}^{N} \mu_i^j R_i$, where ${\forall j \in \{1, \mydots, M\}, \{\mu_i^j\}_{i=1}^{N}\in \Delta_N}$ the $N$-simplex \sut $\sum_{i=1}^{N} \mu_i^j = 1$ and $0 \leq \mu_i^j \leq 1$.
Then, during the \textit{selection phase}, the user's reward $\TR$ becomes known and the $j$-th policy that maximizes $\TR$ on some validation dataset is selected.
We typically expect to select $j$ such that $\sum_{i=1}^{N} \mu_i^j R_i\approx \TR$ linearly approximates the user's reward.
Finally, this $j$-th weight is used during the \textit{inference phase} on test samples.
Yet, a critical issue is that \enquote{minor [preference] variations may result in significant changes in the solution} \cite{vamplew2008limitations}. Thus, a high level of granularity in the mesh of $\Delta_N$ is necessary.
This requires explicitly maintaining a large set of $M \gg N$ networks, practically one for each possible preference.
Ultimately, this MORL strategy is unscalable in deep learning due to the \textbf{computational, memory, and engineering costs} involved (see further discussion in~\Cref{app:discussion:multi_policy}).

\textbf{Rewarded soup (RS).}
In this paper, we draw inspiration from the weight interpolation literature.
The idea is to learn expert weights and interpolate them linearly to combine their abilities.
Specifically, we propose RS, illustrated in \Cref{fig:rewarded_soups} and whose recipe is described below.
RS alleviates MORL's scaling issue as it requires only $M=N$ trainings while being flexible and~transparent.%
\begin{enumerate}
    \item During the \textit{training phase}, we optimize a set of $N$ expert weights $\{\theta_i\}_{i=1}^{N}$, each corresponding to one of the $N$ proxy rewards $\{R_i\}_{i=1}^{N}$, and all from a shared pre-trained initialization.
    \item For the \textit{selection phase}, we linearly interpolate those weights to define a continuous set of rewarded soups policies:
    ${\{\sum_{i=1}^{N} \lambda_i \cdot \theta_i\}_{\{\lambda_i\}_{i=1}^{N} \in \Delta_N}}$.
    Practically, we uniformly sample $M$ interpolating coefficients $\{\{\lambda_i^j\}_{i=1}^{N}\}_{j=1}^{M}$ from the $N$-simplex $\Delta_N$ and select the $j$-th that maximizes the user's reward $\TR$ on validation samples, \ie $\argmax_{j=1}^{M} \TR\left(\sum_{i=1}^{N} \lambda_i^j \theta_i\right)$.
    \item For the \textit{inference phase}, we predict using the network $f$ parameterized by $\sum_{i=1}^{N} \lambda_i^j \theta_i$.
\end{enumerate}
\textbf{While MORL interpolates the rewards, RS interpolates the weights.}
This is a considerable advantage as the appropriate weighting $\lambda$, which depends on the desired trade-off, can be selected \textit{a posteriori}; the selection is achieved without additional training, only via inference on some samples.
In the next \Cref{sec:analysis} we explicitly state the \Cref{hyp:lmc,hyp:pareto} underlying in RS.
These are considered \textit{Working Hypotheses} as they enabled the development of our RS strategy.
Their empirical verification will be the main motivation for our experiments on various tasks in~\Cref{sec:expe}.%
\subsection{Exploring the properties of the rewarded soups set of solutions}%
\label{sec:analysis}%
\subsubsection{Linear mode connectivity of weights fine-tuned on diverse rewards}%
We consider $\{\theta_i\}_{i=1}^N$ (or $\{\theta_i\}_{i}$ for brevity) fine-tuned on $\{R_i\}_{i}$ from a shared pre-trained initialization.
Previous works \cite{Frankle2020,Neyshabur2020,Wortsman2022ModelSA,rame2022recycling} defined linear mode connectivity (LMC) \wrt a single performance measure (\eg accuracy or loss) in supervised learning. We extend this notion in RL with $N$ rewards, and define that the LMC holds if all rewards for the interpolated weights exceed the interpolated rewards.
It follows that the LMC condition which underpins RS's viability is the \Cref{hyp:lmc} below.
\begin{hypothesis}[LMC]
    $\forall \{\lambda_i\}_i\in\Delta_N$ and $k \in \{1, \mydots, N\}$, ${R_k\left(\sum_i \lambda_i \cdot \theta_{i}\right) \geq \sum_i \lambda_i R_k\left(\theta_{i}\right).}$%
    \label{hyp:lmc}%
\end{hypothesis}%
\subsubsection{Pareto optimality of rewarded soups}%
The Pareto front (PF) is the set of undominated weights, for which no other weights can improve a reward without sacrificing another, \ie
$\{\theta \mid \nexists \theta^{\prime} \in \Theta \text{~s.t.~} \left\{R_i\left(\theta^{\prime}\right)\right\}_{i} >_N \left\{R_i\left(\theta\right)\right\}_{i}\}$
where $>_N$ is the dominance relation in $\mathcal{R}^N$.
In practice, we only need to retain one policy for each possible value vector, \ie a Pareto coverage set (PCS).
We now introduce the key \Cref{hyp:pareto}, that state the Pareto-optimality of the solutions uncovered by weight interpolation in RS.
\begin{hypothesis}[Pareto optimality]%
    The set $\{\sum_i \lambda_i \cdot \theta_{i}|\{\lambda_i\}_i\in\Delta_N\} $ is a PCS of~$\{R_i\}_{i}$.%
    \label{hyp:pareto}%
\end{hypothesis}%
Empirically, in \Cref{sec:expe}, we consistently validate \Cref{hyp:lmc,hyp:pareto}.
Theoretically, in \Cref{app:theory_quadratic}, we prove they approximately hold, in a simplified setup (quadratic rewards with co-diagonalizable Hessians) justifiable when weights remain~close.

\begin{remark}
    \label{remark:1}
    \Cref{hyp:lmc,hyp:pareto} rely on a good pre-trained initialization, making RS particularly well-suited to fine-tune foundation models.
    This is because pre-training prevents the weights from diverging during training \cite{Neyshabur2020}.
    When the weights remain close, we can theoretically justify \Cref{hyp:lmc,hyp:pareto} (see \Cref{app:theory_quadratic}) and, more broadly, demonstrate that WI approximates ensembling \cite{hansen1990neural,Lakshminarayanan2017} (see \Cref{lemma:ens}).
    In contrast, the LMC does not hold when training from scratch \cite{Neyshabur2020}.
    Neuron permutations strategies \cite{entezari2022the,ainsworth2022gitrebasin} tried to enforce connectivity by aligning the weights, though (so far) with moderate empirical results: their complementarity with RS is a promising research avenue.%
\end{remark}%
\begin{remark}
    \label{remark:2}
    Pareto-optimality in \Cref{hyp:pareto} is defined \wrt a set of possible weights $\Theta$.
    Yet, in full generality, improvements in initialization, RL algorithms, data, or specific hyperparameters could enhance performances.
    In other words, for real-world applications, the true PF is unknown and needs to be defined \wrt a training procedure. In this case, $\Theta$ represents the set of weights attainable by fine-tuning within a shared procedure.
    As such, in \Cref{sec:expe} we analyze \Cref{hyp:pareto} by comparing the fronts obtained by RS and scalarized MORL while keeping everything else constant.%
\end{remark}%
\subsubsection{Consequences of Pareto optimality if the user's reward is linear in the proxy rewards}%
\begin{lemma}[Reduced reward misspecification in the linear case]
    If \Cref{hyp:pareto} holds, and for linear reward $\TR = \sum_i \hat{\mu}_i R_i$ with $\{\hat{\mu}_i\}_i\in\Delta_N$, then $\exists \{\lambda_i\}_i\in\Delta_N$ such that $\sum_i \lambda_i \cdot \theta_{i}$ is optimal for $\TR$.
    \label{lemma:tr}%
\end{lemma}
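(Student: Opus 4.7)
The plan is to reduce the lemma to a classical fact from multi-objective optimization: a non-negative linear scalarization of a vector-valued reward is always maximized at a Pareto-optimal point, so if a Pareto coverage set is available the maximum is attainable inside it. Concretely, I would take $\theta^\star \in \argmax_{\theta \in \Theta} \TR(\theta)$, argue that one can assume $\theta^\star$ to lie on the Pareto front, then use Hypothesis~\ref{hyp:pareto} to find interpolation coefficients $\{\lambda_i\}_i \in \Delta_N$ whose soup has the same reward vector as $\theta^\star$, and finally invoke linearity of $\TR$ in the $R_i$ to transfer optimality.

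For the first step, suppose $\theta^\star$ is dominated by some $\theta' \in \Theta$ in the sense that $R_i(\theta') \geq R_i(\theta^\star)$ for all $i$ with strict inequality for at least one index. Since $\hat\mu_i \geq 0$, we get $\TR(\theta') \geq \TR(\theta^\star)$, and because $\theta^\star$ is already a maximizer of $\TR$ this must be an equality, so $\theta'$ is itself a maximizer. Iterating this (or appealing to Zorn's lemma on the dominance partial order, or to continuity of the $R_i$ with a compactness assumption on $\Theta$) produces a maximizer of $\TR$ that is Pareto-optimal; I will call it $\theta^\star$ again without loss of generality. If the $\hat\mu_i$ are all strictly positive this step is immediate: a maximizer of a strictly positive linear combination of the $R_i$ is automatically Pareto-optimal.

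For the second step, Hypothesis~\ref{hyp:pareto} states that the set $\{\sum_i \lambda_i \cdot \theta_i \mid \{\lambda_i\}_i \in \Delta_N\}$ is a Pareto coverage set of $\{R_i\}_i$, which by definition of a PCS means that for every Pareto-optimal reward vector there is at least one soup achieving it. Applying this to $\theta^\star$ yields coefficients $\{\lambda_i\}_i \in \Delta_N$ such that $R_i\bigl(\sum_j \lambda_j \theta_j\bigr) = R_i(\theta^\star)$ for every $i$. By linearity of $\TR$ in the $R_i$, it follows that $\TR\bigl(\sum_j \lambda_j \theta_j\bigr) = \sum_i \hat\mu_i R_i(\theta^\star) = \TR(\theta^\star)$, so the soup is also optimal for $\TR$, which is the conclusion.

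The main obstacle, which is really the only subtle point, is the degenerate case where some coordinates $\hat\mu_i$ vanish: then a maximizer of $\TR$ need not a priori be Pareto-optimal for the vector reward, so one must explicitly extract a Pareto-optimal maximizer before invoking the PCS. This requires a mild regularity assumption (compactness of $\Theta$ with continuous rewards, or a Zorn-style argument), but the step is standard and does not interact with the weight-interpolation content of the statement; the remainder of the proof is essentially an unpacking of the definition of a Pareto coverage set combined with the linearity of $\TR$.
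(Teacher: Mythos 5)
Your proposal follows essentially the same route as the paper's proof in \Cref{proof:lemma:tr}: take a maximizer $\theta^\star$ of $\TR$, place it on the Pareto front, use the Pareto-coverage property of \Cref{hyp:pareto} to obtain $\{\lambda_i\}_i\in\Delta_N$ whose soup matches the full reward vector of $\theta^\star$, and conclude by linearity of $\TR$. The one place you diverge is instructive, and it is to your credit: the paper asserts that any maximizer of $\TR$ lies on the Pareto front, arguing that $\theta' >_N \theta^\star$ together with $\forall i,\ \hat{\mu}_i \geq 0$ forces the \emph{strict} inequality $\sum_i \hat{\mu}_i R_i(\theta') > \sum_i \hat{\mu}_i R_i(\theta^\star)$; as you correctly observe, this strict inequality can fail when the only strictly dominated coordinates carry weight $\hat{\mu}_i = 0$, so the paper's contradiction argument is airtight only for $\hat{\mu}$ in the interior of $\Delta_N$. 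Your repair---passing to a dominating maximizer and iterating, or invoking compactness of $\Theta$ with continuous rewards, or Zorn's lemma on the dominance order, to extract a \emph{Pareto-optimal} maximizer with the same value of $\TR$---closes this degenerate boundary case, at the cost of a mild regularity assumption that the paper leaves implicit (and arguably needs). You also spell out the final step, transferring $\TR$-optimality from $\theta^\star$ to the soup via linearity, which the paper's proof leaves to the reader after matching the reward vectors. In short: same decomposition and same key use of the PCS definition, but your version is the more careful one on the boundary of the simplex, where the paper's two-line argument quietly assumes strict positivity of the preference weights.
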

The proof outlined in \Cref{proof:lemma:tr} directly follows the definition of Pareto optimality.
In simpler terms, \Cref{lemma:tr} implies that if \Cref{hyp:pareto} holds, RS mitigates reward misspecification for linear rewards: for any preference $\hat{\mu}$, there exists a $\lambda$ such that the $\lambda$-interpolation over weights maximizes the $\hat{\mu}$-interpolation over rewards.
In practice, as we see in \Cref{fig:analysis_captioning_paretohatmu}, we can set $\lambda = \hat{\mu}$, or cross-validate $\lambda$ on other samples.

\section{Experiments}
\label{sec:expe}
In this section we implement RS across a variety of standard learning tasks: text-to-text generation, image captioning, image generation, visual grounding,\ifthenelse{\boolean{isarxiv}}{ visual question answering,}{} and locomotion. We use either model or statistical rewards.
We follow a systematic procedure.
First, we independently optimize diverse rewards on training samples. For all tasks, we employ the default architecture, hyperparameters and RL algorithm; the only variation being the reward used across runs.
Second, we evaluate the rewards on the test samples: the results are visually represented in series of plots.
Third, we verify \Cref{hyp:lmc} by examining whether RS's rewards exceed the interpolated rewards.
Lastly, as the true Pareto front is unknown in real-world applications, we present empirical support for \Cref{hyp:pareto} by comparing the front defined by RS (sliding $\lambda$ between $0$ and $1$) to the MORL's solutions optimizing the $\mu$-weighted rewards \ifthenelse{\boolean{isarxiv}}{}{for $0\leq\mu\leq1$ }(sometimes only $\mu=0.5$ for computational reasons).\ifthenelse{\boolean{isarxiv}}{ Implementations are released on \href{\urlcode}{github}, and this \href{\urlwebsite}{website} provides additional qualitative~results.}{}

\subsection{Text-to-text: LLaMA with diverse RLHFs}%
\label{expe:nlp}%
\begin{figure}[b!]
    \begin{center}
        \begin{subfigure}[b]{0.325\textwidth}
            \includegraphics[width=1.0\textwidth]{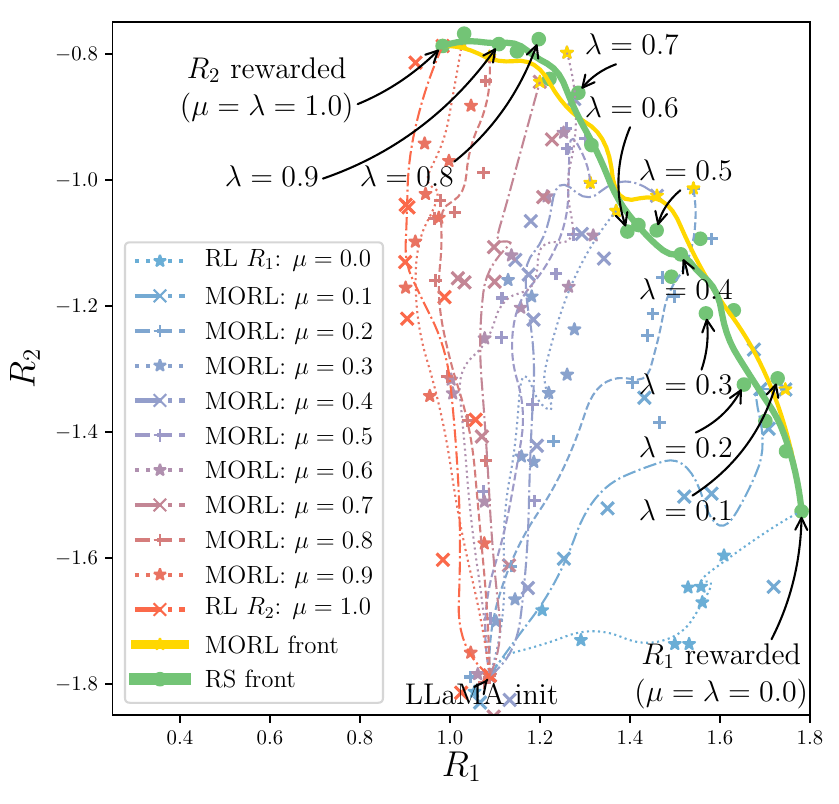}
            \caption{News summary.}
            \label{fig:pareto_nlp_summarynews_full}
        \end{subfigure}
        \hfill
        \begin{subfigure}[b]{0.325\textwidth}
            \includegraphics[width=1.0\textwidth]{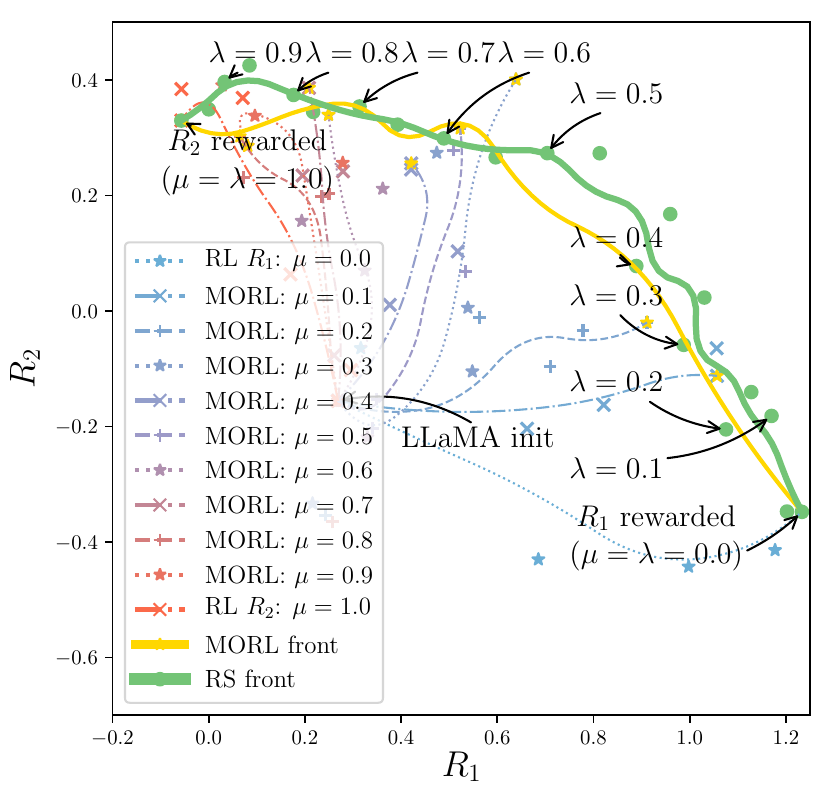}
            \caption{Reddit TL;DR summary.}
            \label{fig:pareto_nlp_summary}
        \end{subfigure}
        \hfill
        \begin{subfigure}[b]{0.325\textwidth}
            \includegraphics[width=1.0\textwidth]{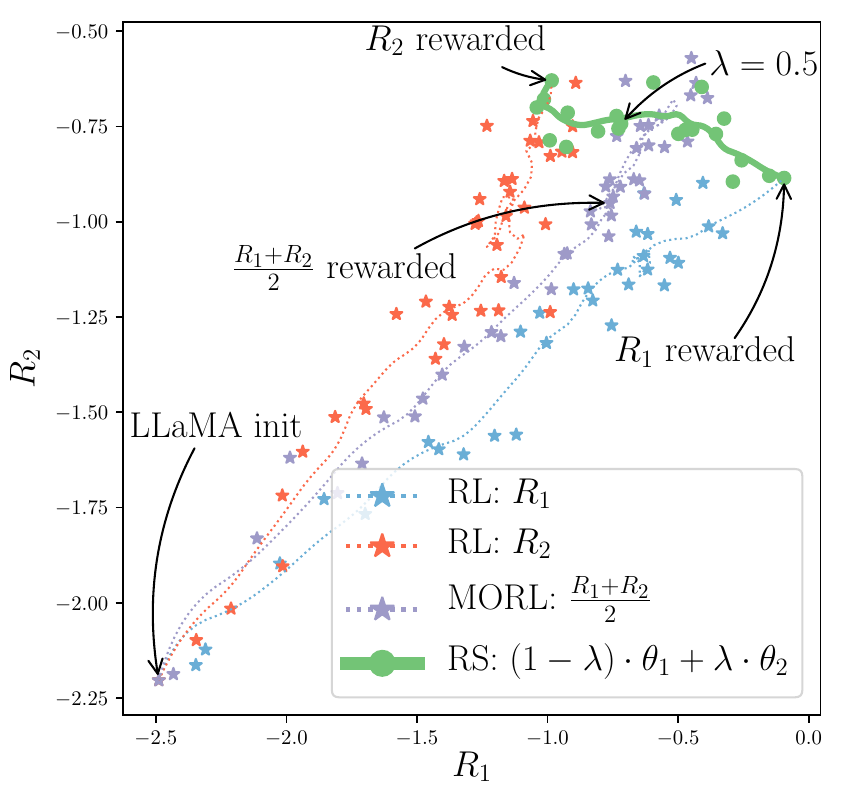}
            \caption{Stack Exchange Q\&A.}
            \label{fig:pareto_nlp_stack}
        \end{subfigure}
    \end{center}
    \begin{center}
        \begin{subfigure}[b]{0.325\textwidth}
            \includegraphics[width=1.0\textwidth]{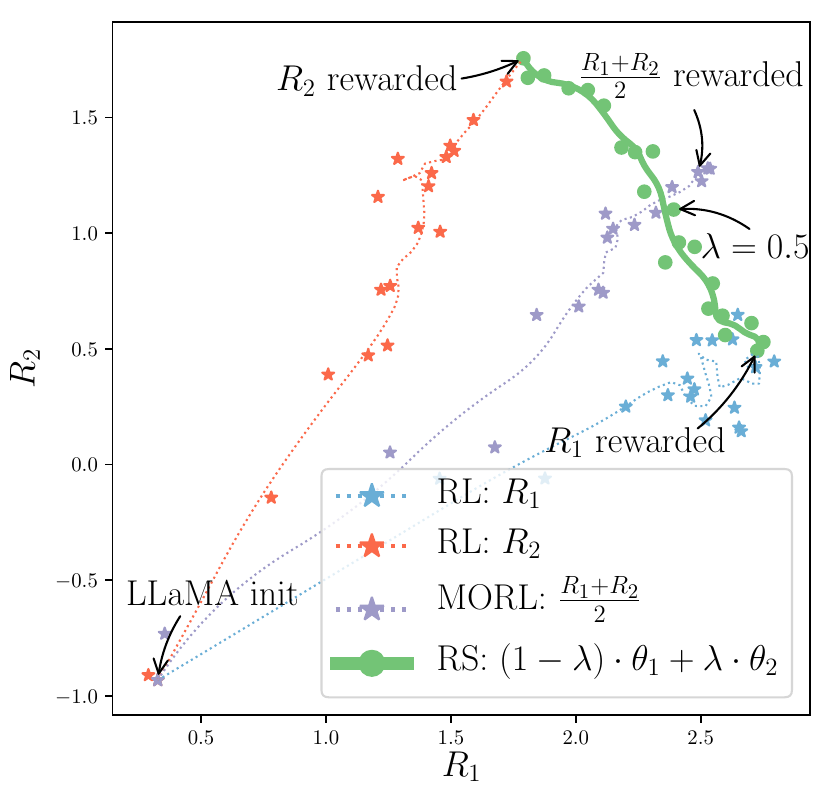}
            \caption{Useful movie review.}
            \label{fig:pareto_nlp_review}
        \end{subfigure}
        \hfill
        \begin{subfigure}[b]{0.325\textwidth}
            \includegraphics[width=1.0\textwidth]{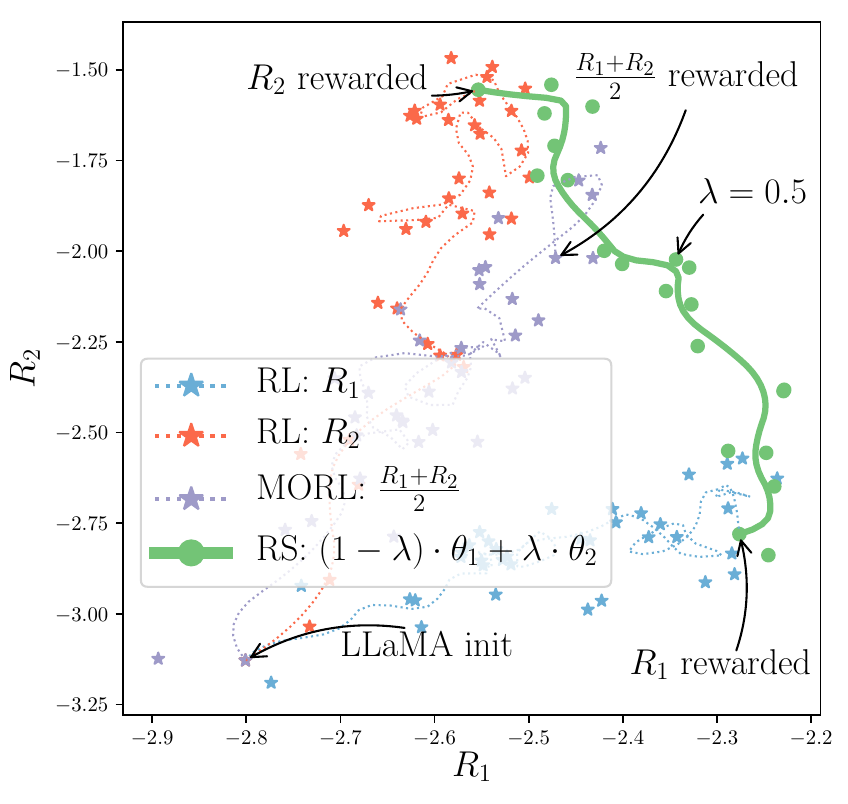}
            \caption{Helpful assistant.}
            \label{fig:pareto_nlp_assistant}
        \end{subfigure}
        \hfill
        \begin{subfigure}[b]{0.325\textwidth}
            \includegraphics[width=1.0\textwidth]{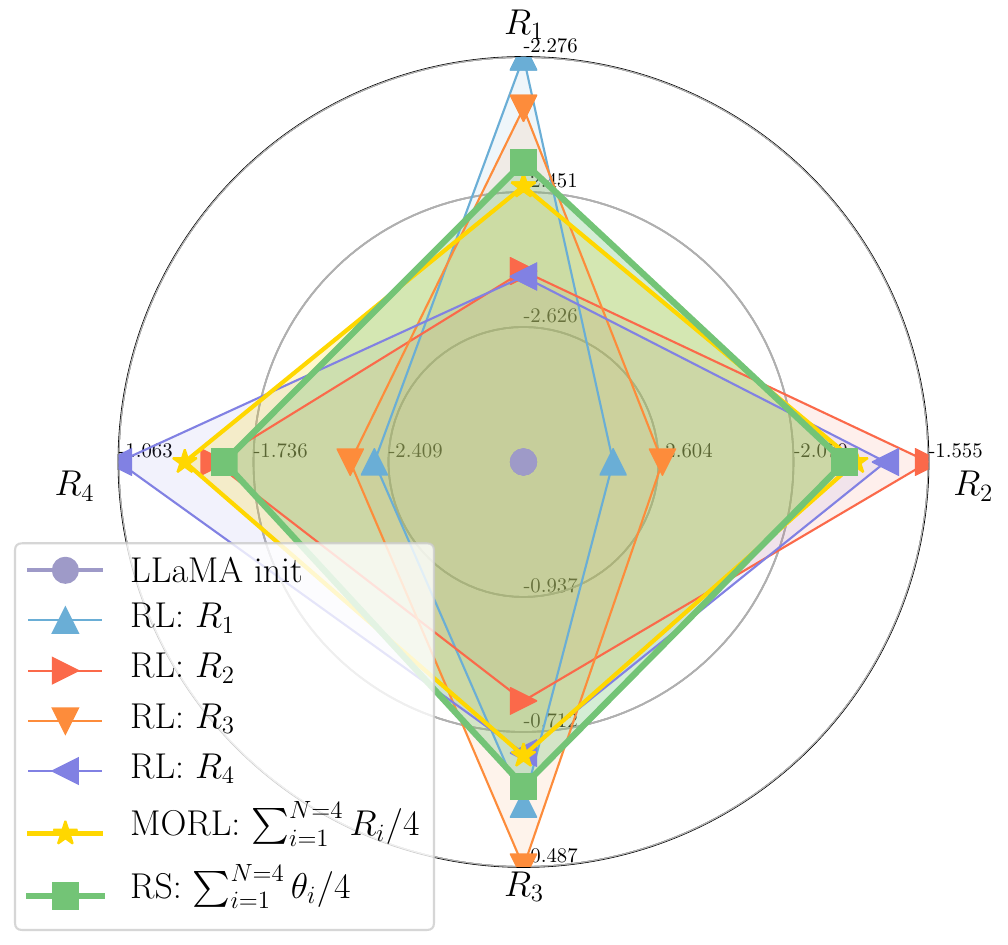}
            \caption{Helpful assistant: spider map.}
            \label{fig:spider_assistant}
        \end{subfigure}
    \end{center}%
    \caption{RLHF results in NLP with LLaMA-7b \cite{touvron2023llama} and reward models $R_i$ from \href{https://huggingface.co/models}{HuggingFace} \cite{wolf-etal-2020-transformers}. The blue line reports checkpoints' results along the training trajectory of $\theta_1$ rewarding $R_1$, the red line $\theta_2$ rewarding $R_2$, and the purple line the MORL rewarding $\frac{R_1+R_2}{2}$. Our rewarded soup (RS) linearly interpolates between the weights $\theta_1$ and $\theta_2$; sliding the interpolation coefficient $\lambda$ from $0$ to $1$ reveals the green solid front of rewarded soups solutions. In \Cref{fig:pareto_nlp_summarynews_full,fig:pareto_nlp_summary}, we additionally show the multiple MORL runs rewarding $(1-\mu)\times R_1+\mu\times R_2$ with preferences $0\leq\mu\leq 1$. It reveals a similar yellow front, yet more costly. In \Cref{fig:spider_assistant}, we uniformly ($\lambda_i=\frac{1}{4}$) average the weights fine-tuned for the assistant task on $N=4$ reward~models.}
    \label{fig:pareto_nlp}
\end{figure}
Given the importance of RLHF to train LLMs, we begin our experiments with text-to-text generation.
Our pre-trained network is LLaMA-7b \cite{touvron2023llama}, instruction fine-tuned \cite{wei2022finetuned,selfinstruct2022} on Alpaca \cite{alpaca2023}.
For RL training with PPO \cite{schulman2017proximal}, we employ the trl package \cite{vonwerra2022trl} and the setup from \cite{trlpeft2023} with low-rank adapters (LoRA) \cite{hu2022lora} for efficiency.
We first consider summarization \cite{stiennon2020learning,wu2021recursively} tasks on two datasets: Reuter news \cite{ahmed2017detecting} in \Cref{fig:pareto_nlp_summarynews,fig:pareto_nlp_summarynews_full} and Reddit TL;DR \cite{volske2017tl} in \Cref{fig:pareto_nlp_summary}. We also consider answering Stack Exchange questions \cite{h4stackexchange} in \Cref{fig:pareto_nlp_stack}, movie review generation in \Cref{fig:pareto_nlp_review}, and helpfulness as a conversational assistant \cite{bai2022training} in \Cref{fig:pareto_nlp_assistant,fig:spider_assistant}.
To evaluate the generation in the absence of supervision, we utilized $N=2$ different reward models (RMs) for each task, except in \Cref{fig:spider_assistant} where $N=4$.
These RMs were trained on human preferences datasets \cite{christiano2017deep} and all open-sourced on \href{https://huggingface.co/models}{HuggingFace} \cite{wolf-etal-2020-transformers}.
For example in summarization, $R_1$ follows the \enquote{Summarize from Human Feedback} paper \cite{stiennon2020learning} and focuses on completeness, while $R_2$ leverages \enquote{contrast candidate generation} \cite{chen2021improving} to evaluate factuality.
For other tasks, we rely on diverse RMs from \href{https://open-assistant.io/}{OpenAssistant} \cite{kopf2023openassistant}; though they all assess if the answer is adequate, they differ by their architectures and procedures.
\Cref{tab:expe_details_nlp} \ifthenelse{\boolean{isarxiv}}{}{further }details the~experiments.

The results are reported in \Cref{fig:pareto_nlp}.
The green front, defined by RS between the two weights specialized on $R_1$ and $R_2$, is above the straight line connecting those two points, validating \Cref{hyp:lmc}.
Second, the front passes through the point obtained by MORL fine-tuning on the average of the two rewards, supporting \Cref{hyp:pareto}.
Moreover, when comparing both full fronts, they have qualitatively the same shape; quantitatively in hypervolume \cite{yen2013performance} (lower is better, the area over the curve \wrt an optimal point), RS's hypervolume is 0.367 \versus 0.340 for MORL in \Cref{fig:pareto_nlp_summarynews_full}, while it is 1.176  \versus 1.186 in \Cref{fig:pareto_nlp_summary}.
Finally, in \Cref{fig:spider_assistant}, we use $N=4$ RMs for the assistant task and uniformly average the $N=4$ weights, confirming that RS can scale and trade-off between more rewards.%

\subsection{Image-to-text: captioning with diverse statistical rewards}%
\label{expe:captioning}
\begin{figure}[!b]
	\begin{center}
		\begin{subfigure}[b]{0.325\textwidth}
			\includegraphics[width=1.0\textwidth]{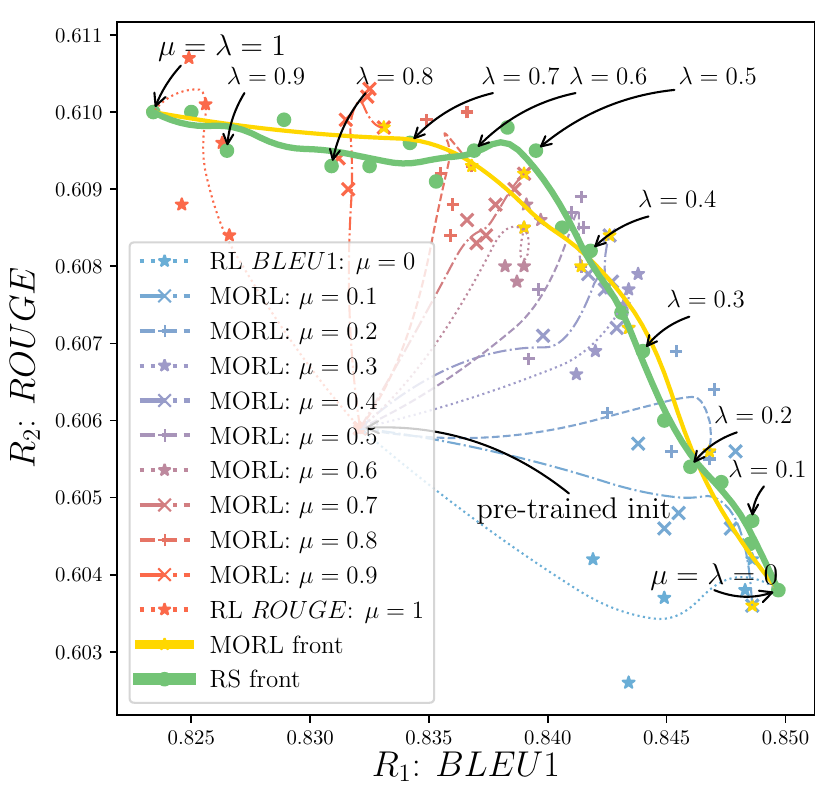}
			\caption{BLEU1 and ROUGE.}
			\label{fig:pareto_captioning_bleu1rouge}
		\end{subfigure}
		\hfill
		\begin{subfigure}[b]{0.325\textwidth}
			\includegraphics[width=1.0\textwidth]{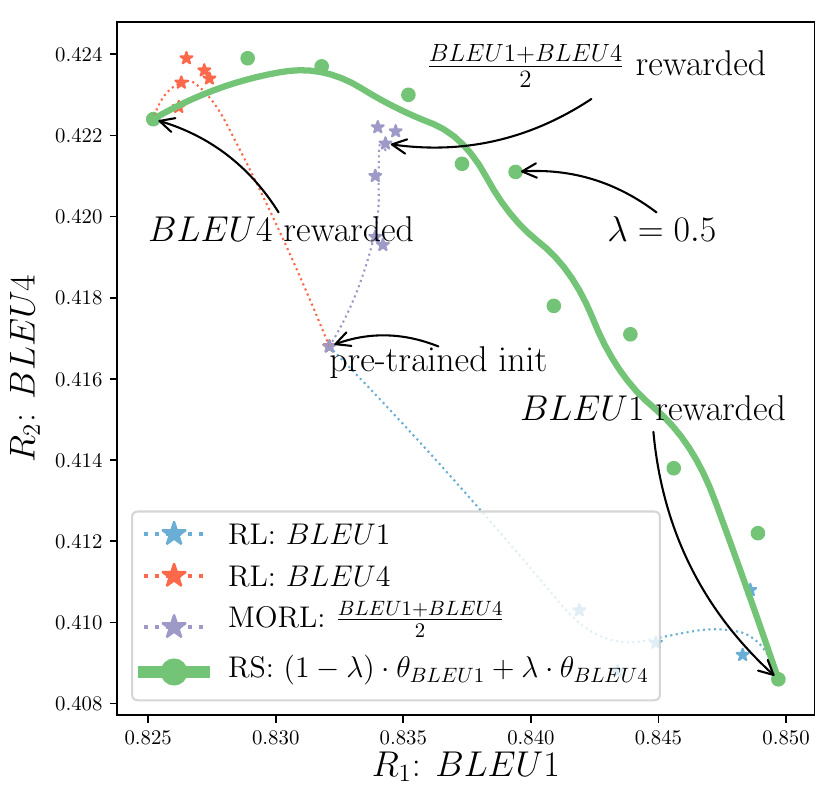}
			\caption{BLEU1 and BLEU4.}
			\label{fig:pareto_captioning_bleu1bleu4}
		\end{subfigure}
		\hfill
		\begin{subfigure}[b]{0.325\textwidth}
			\includegraphics[width=1.0\textwidth]{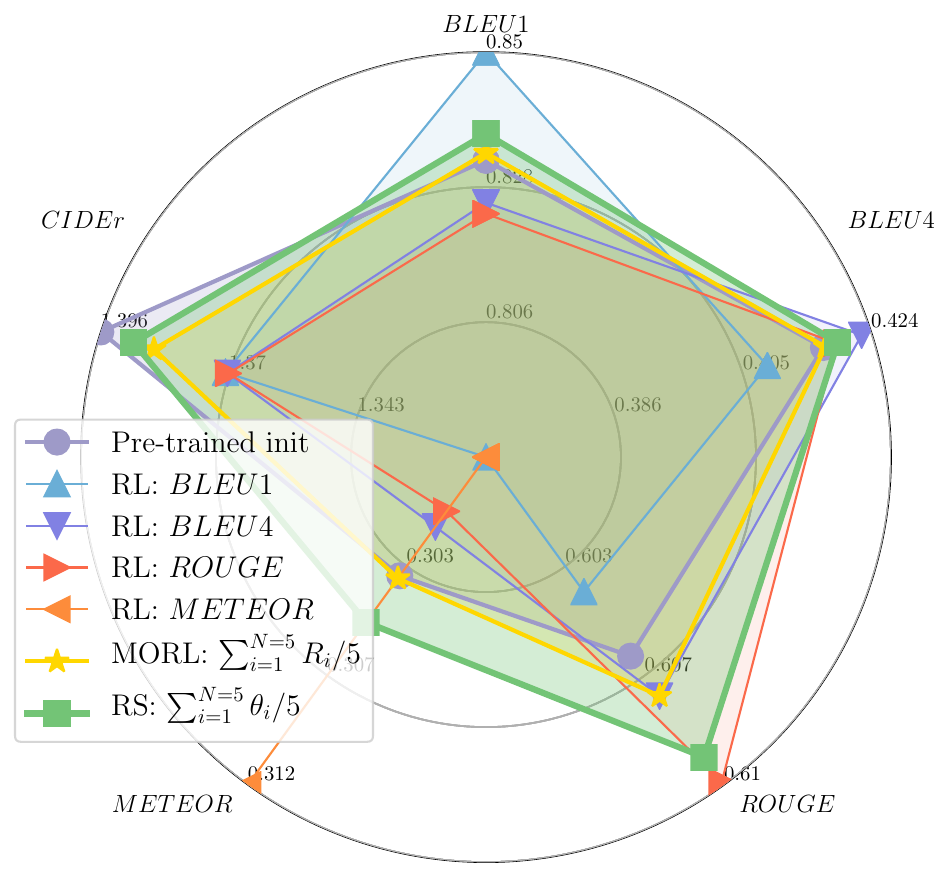}
			\caption{Captioning: spider map.}
			\label{fig:spider_captioning}
		\end{subfigure}
	\end{center}%
	\caption{Results in image captioning on COCO \cite{lin2014microsoft}. As rewards $R_1$ (blue stars every epoch) and $R_2$ (red stars), we consider standard statistical metrics: BLEU1 (1-gram overlap), BLEU4 (4-grams overlap), ROUGE, METEOR and CIDEr.
		\Cref{fig:pareto_captioning_bleu1rouge} include the MORL training trajectories optimizing $(1-\mu) \times BLEU1 + \mu \times ROUGE$, uncovering a yellow front similar to RS's green front. In \Cref{fig:spider_captioning}, RS uniformly averages the $5$ weights (one for each reward), resulting in the largest area and the best trade-off between the $5$ rewards.}%
	\label{fig:pareto_captioning}%

	\begin{center}
		\begin{subfigure}[b]{0.32\textwidth}
			\centering
			\includegraphics[width=1.0\textwidth]{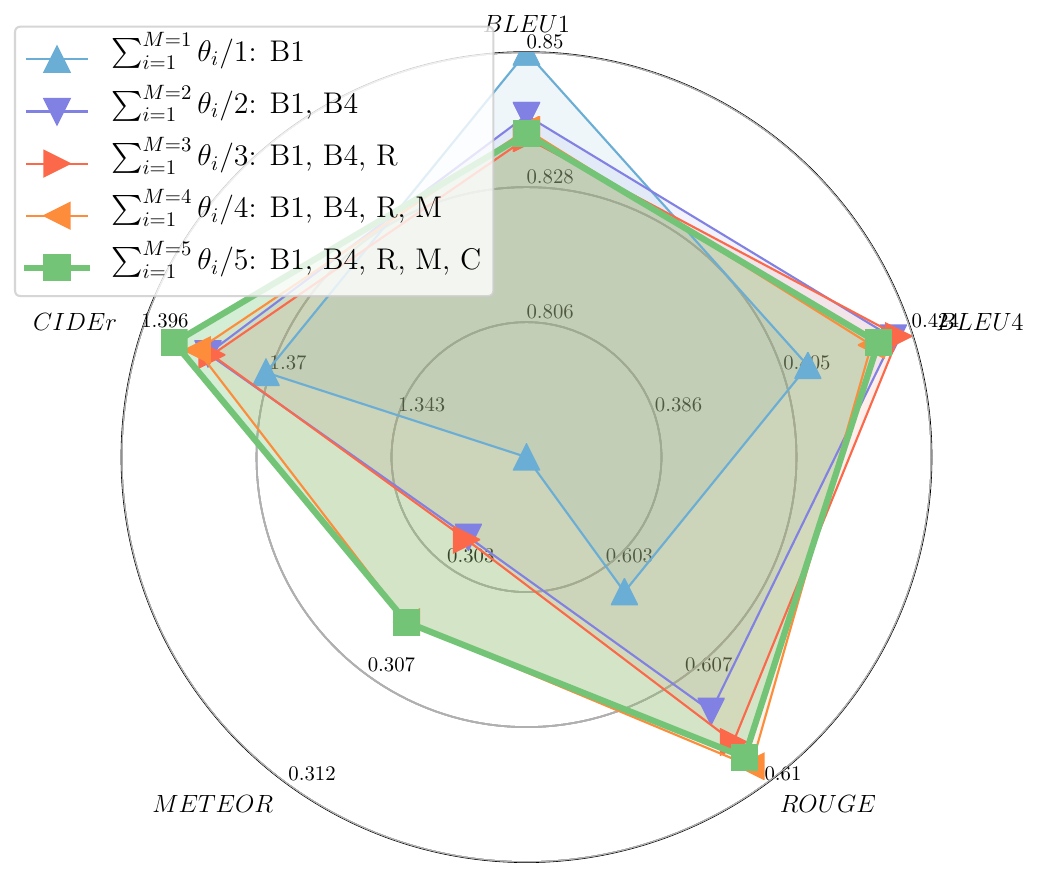}
			\caption{BLEU1 first.}
			\label{spiders_appendix_captioning_bleu1}
		\end{subfigure}
		\hfill
		\begin{subfigure}[b]{0.32\textwidth}
			\centering
			\includegraphics[width=1.0\textwidth]{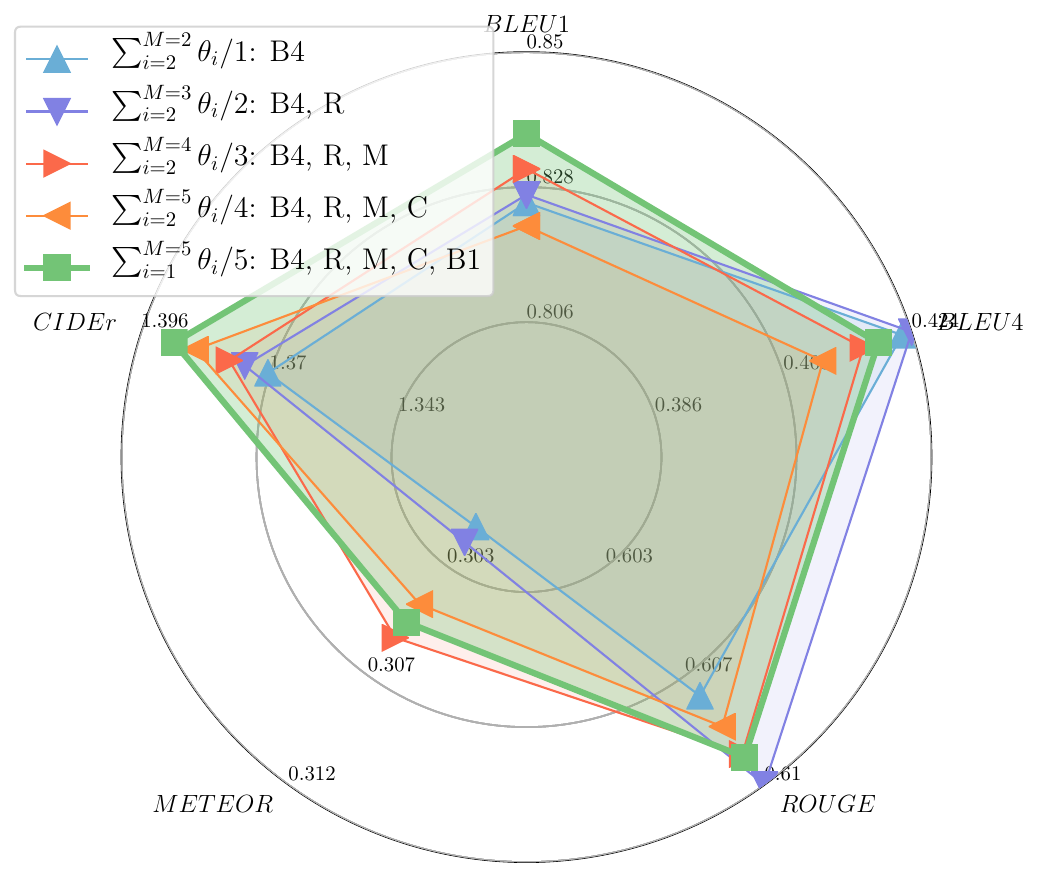}
			\caption{BLEU4 first.}
			\label{spiders_appendix_captioning_bleu4}
		\end{subfigure}
		\hfill
		\begin{subfigure}[b]{0.32\textwidth}
			\centering
			\includegraphics[width=1.0\textwidth]{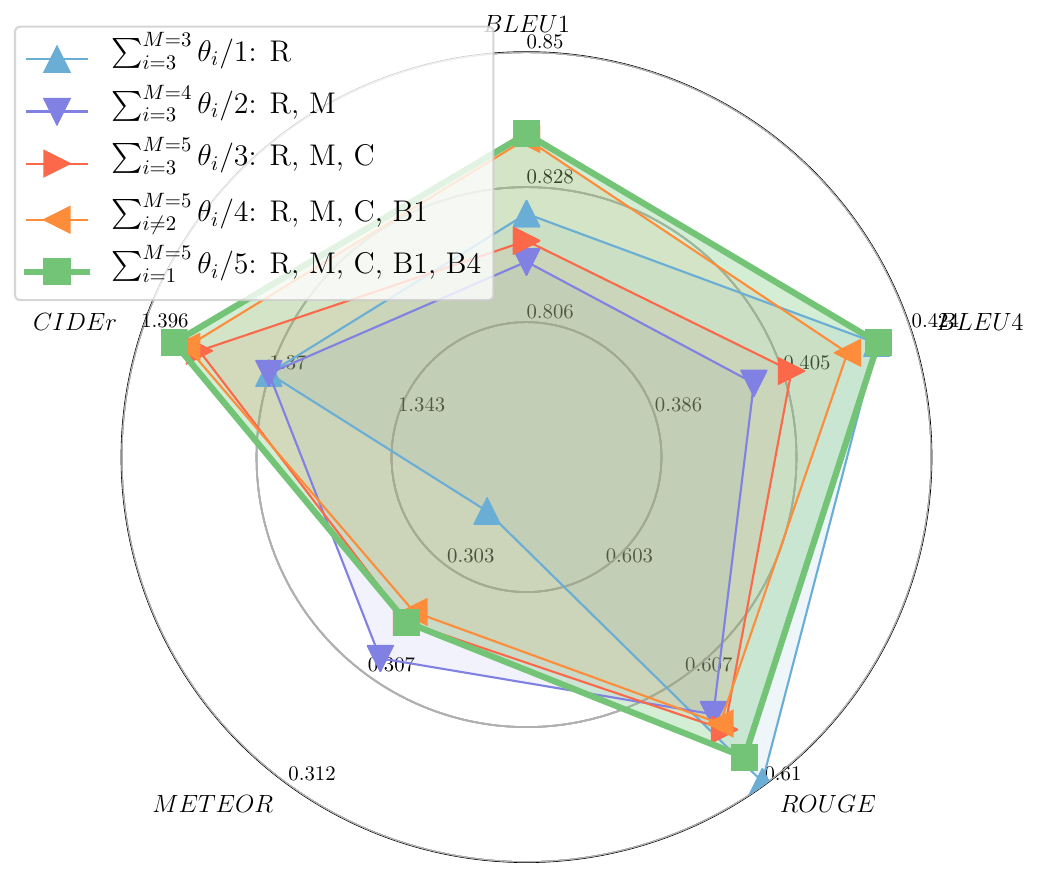}
			\caption{ROUGE first.}
			\label{spiders_appendix_captioning_rouge}
		\end{subfigure}
	\end{center}
	\caption{Those spider maps uniformly average $1\leq M\leq5$ weights for captioning, where $\theta_1$ is fine-tuned on BLEU1 (B1), $\theta_2$ on BLEU4 (B4), $\theta_3$ on ROUGE (R), $\theta_4$ on METEOR (M) and $\theta_5$ on CIDEr (C). To show different combinations among the ${5 \choose M}$ possible, we iterate in a clockwise direction starting in \Cref{spiders_appendix_captioning_bleu1} from $i=1$ (always including $\theta_1$ optimized on BLEU1), in \Cref{spiders_appendix_captioning_bleu4} from $i=2$ (always including $\theta_2$ optimized on BLEU4), and in \Cref{spiders_appendix_captioning_rouge} from $i=3$ (always including $\theta_3$ optimized on ROUGE).
	}%
	\label{fig:spiders_appendix_captioning}%
\end{figure}%
\begin{figure}[h!]
	\begin{center}
		\begin{subfigure}[b]{0.325\textwidth}
			\includegraphics[width=1.0\textwidth]{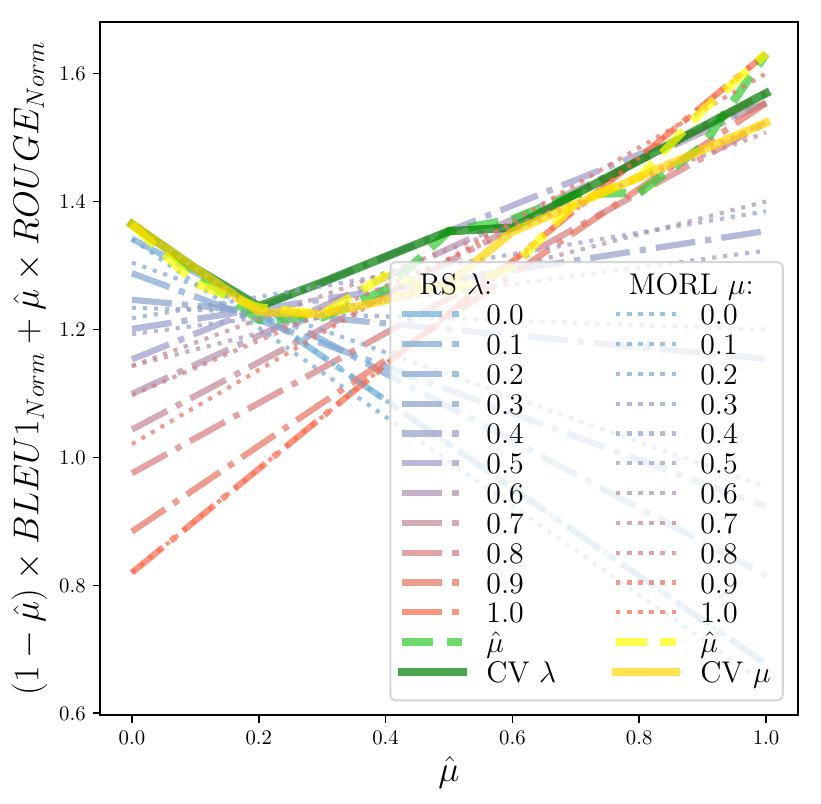}
			\caption{Analysis of \Cref{lemma:tr}.}
			\label{fig:analysis_captioning_paretohatmu}
		\end{subfigure}
		\hfill
		\begin{subfigure}[b]{0.325\textwidth}
			\includegraphics[width=1.0\textwidth]{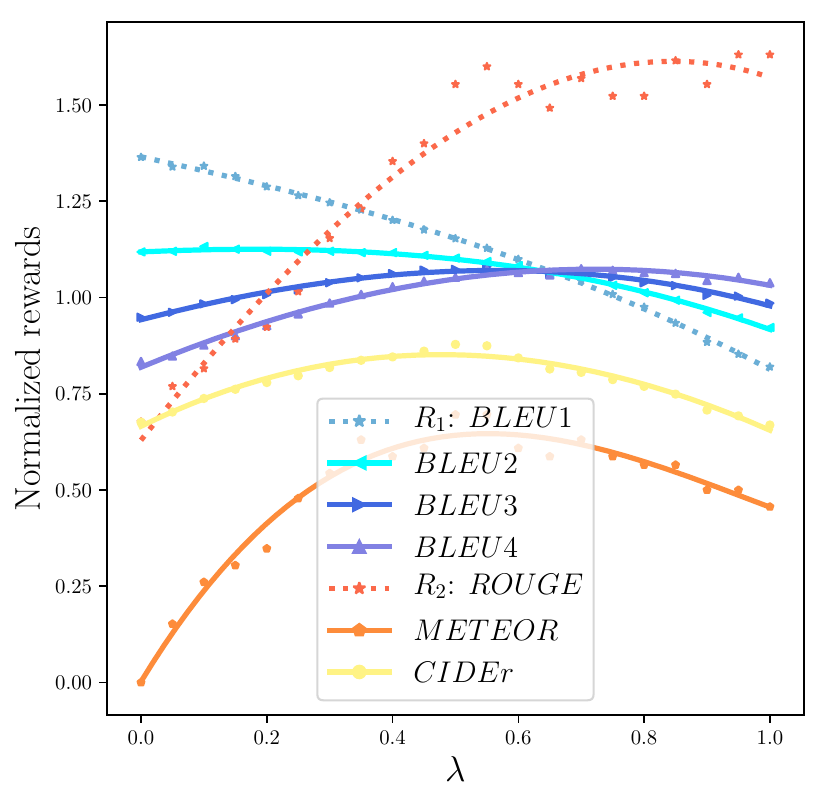}
			\caption{Performances \wrt $\lambda$.}
			\label{fig:analysis_captioning_lambda}
		\end{subfigure}
		\hfill
		\begin{subfigure}[b]{0.325\textwidth}
			\includegraphics[width=1.0\textwidth]{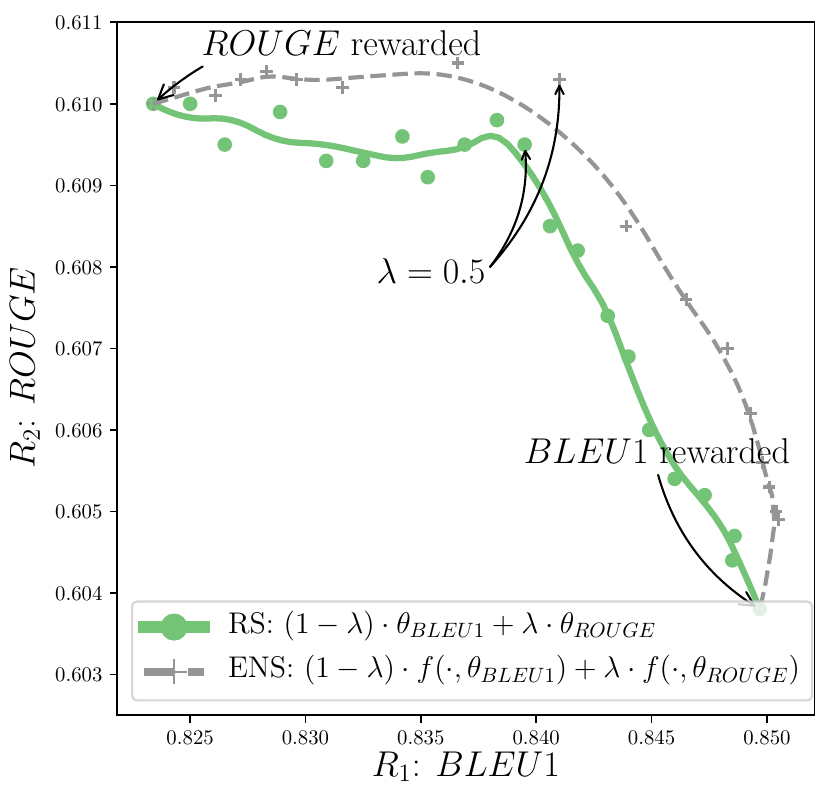}
			\caption{RS \versus prediction ensembling.}
			\label{fig:analysis_captioning_ens}
		\end{subfigure}
	\end{center}
	\caption{Results in captioning for $R_1=BLEU1$ and $R_2=ROUGE$. When normalized, rewards are set to 1 for the init and 0 for the worst model. \Cref{fig:analysis_captioning_paretohatmu} validates \Cref{lemma:tr} by reporting results of RS (for varying $\lambda$) and of MORL (for varying $\mu$) for varying user's preference $\hat{\mu}$. \Cref{fig:analysis_captioning_lambda} evaluates different rewards as a function of the interpolating coefficient. \Cref{fig:analysis_captioning_ens} reports ensembling scores when interpolating~predictions.}%
	\label{fig:analysis_captioning}%
\end{figure}%

RL is also effective for multimodal tasks \cite{pinto2023tuning} such as in image captioning \cite{rennie2017self}, to generate textual descriptions of images.
Precisely evaluating the quality of a prediction \wrt a set of human-written captions is challenging, thus the literature relies on various non-differentiable metrics: \eg the precision-focused BLEU \cite{bleu2002}, the recall-focused ROUGE \cite{lin2003automatic}, METEOR \cite{banerjee2005meteor} handling synonyms and CIDEr \cite{vedantam2015cider} using TF-IDF.
As these metrics are proxies for human preferences, good trade-offs are desirable.
We conduct our experiments on COCO \cite{lin2014microsoft}, with an ExpansionNetv2 \cite{hu2022expansionnet} network and a Swin Transformer \cite{liu2021swinv2} visual encoder, initialized from the state-of-the-art weights of \cite{hu2022expansionnet} optimized on CIDEr.
We then utilize the code of \cite{hu2022expansionnet} and their self-critical \cite{rennie2017self} procedure (a variant of REINFORCE \cite{williams1992simple}) to reward the network on BLEU1, BLEU4, ROUGE or METEOR.
More details and results can be found in \Cref{app:captioning}.

We observe in \Cref{fig:pareto_captioning} that tuning solely BLEU1 sacrifices some points on ROUGE or BLEU4.
Yet interpolating between $\theta_{1}$ and $\theta_{2}$ uncovers a convex set of solutions approximating the ones obtained through scalarization of the rewards in MORL.
When comparing both full fronts in \Cref{fig:pareto_captioning_bleu1rouge}, they qualitatively have the same shape, and quantitatively the same hypervolume \cite{yen2013performance} of 0.140.
One of the strengths of RS is its ability to scale to any number of rewards.
In \Cref{fig:spider_captioning}, we uniformly ($\lambda_i=\frac{1}{5}$) average $N=5$ weights fine-tuned independently. It improves upon the initialization \cite{hu2022expansionnet} and current state-of-the-art on all metrics, except for CIDEr, on which \cite{hu2022expansionnet} was explicitly optimized.
We confirm in \Cref{fig:spiders_appendix_captioning} that RS can handle more than $2$ rewards through additional spider maps. Specifically, we compare the performances across all $N=5$ metrics when averaging $1\leq M \leq N$ networks (each fine-tuned on one of the $N$ rewards, thus leaving out $N-M$ rewards at training) and sequentially adding more networks to the weight average. We consistently observe that adding one additional network specialized on one additional reward extends the scope of the possible rewards that RS can tackle Pareto-optimally.

\Cref{fig:analysis_captioning} refines our analysis of RS.
\Cref{fig:analysis_captioning_paretohatmu} validates \Cref{lemma:tr}: for any linear preference $\hat{\mu}$ over the proxy rewards, there exists an optimal solution in the set described by RS.
Two empirical strategies to set the value of $\lambda$ are close to optimal: selecting $\lambda=\hat{\mu}$ if $\hat{\mu}$ is known, or cross-validating (CV) $\lambda$ if a different data split \cite{karpathy2015deep} is available.
Moreover, \Cref{fig:analysis_captioning_lambda} (and \Cref{app:captioning}) investigate all metrics as evaluation.
Excluding results' variance, we observe monotonicity in both training rewards, linear in BLEU1 and quadratic in ROUGE. For other evaluation rewards that \textbf{cannot be linearly expressed} over the training rewards,
the curves' concavity shows that RS consistently improves the endpoints, thereby mitigating reward misspecification. The optimal $\lambda$ depends on the similarity between the evaluation and training rewards: \eg best BLEU2 are with small $\lambda$.
Lastly, as per \cite{izmailov2018} and \Cref{lemma:ens}, \Cref{fig:analysis_captioning_ens} suggests that RS succeeds because WI approximates \textit{prediction ensembling} \cite{hansen1990neural,Lakshminarayanan2017} when weights remain close, interpolating the predictions rather than the weights.
Actually, ensembling performs better, but it cannot be fairly compared as its inference cost is doubled.
\clearpage
\subsection{Text-to-image: diffusion models with diverse RLHFs}
\label{expe:diffusion}
Beyond text generation, we now apply RS to align text-to-image generation with human feedbacks \cite{lee2023aligning,wu2023better,xu2023imagereward}.
Our network is a diffusion model \cite{ho2020denoising} with 2.2B parameters, pre-trained on an internal dataset of 300M images; it reaches similar quality as Stable Diffusion \cite{rombach2022high}, which was not used for copyright reasons.
To represent the subjectivity of human aesthetics, we employ $N=2$ open-source reward models: \textit{ava}, trained on the AVA dataset \cite{murray2012ava}, and \textit{cafe}, trained on a mix of real-life and manga images.
We first generate 10000 images; then, for each reward, we remove half of the images with the lowest reward's score and fine-tune 10\% of the parameters \cite{xie2023difffit} on the reward-weighted negative log-likelihood \cite{lee2023aligning}. Details and generations for visual inspection are in~\Cref{app:diffusion}.
The results displayed in \Cref{fig:pareto_diffusion_0} validate \Cref{hyp:lmc}, as the front described by RS when sliding $\lambda$ from $0$ and $1$ is convex.
Moreover, RS gives a better front than MORL, validating \Cref{hyp:pareto}.
Interestingly, the \textit{ava} reward model seems to be more general-purpose than \textit{cafe}, as RL training on \textit{ava} also enhances the scores of \textit{cafe}.
In contrast, the model $\theta_{cafe}$ performs poorly in terms of \textit{ava} in \Cref{fig:pareto_diffusion_0}.
Nonetheless, RS with $(1-\lambda) \cdot \theta_{ava} + \lambda \cdot \theta_{cafe}$ outperforms $\theta_{ava}$ alone, not only in terms of \textit{cafe}, but also of \textit{ava} when $\lambda \in\{0.1, 0.2\}$.
These findings confirm that RS can better align text-to-image models with a variety of aesthetic preferences.
This ability to adapt at test time paves the way for a new form of user interaction with text-to-image models, beyond prompt engineering.

\begin{figure}[h!]
    \begin{center}
        \begin{subfigure}[b]{0.325\textwidth}
            \includegraphics[width=0.93\textwidth]{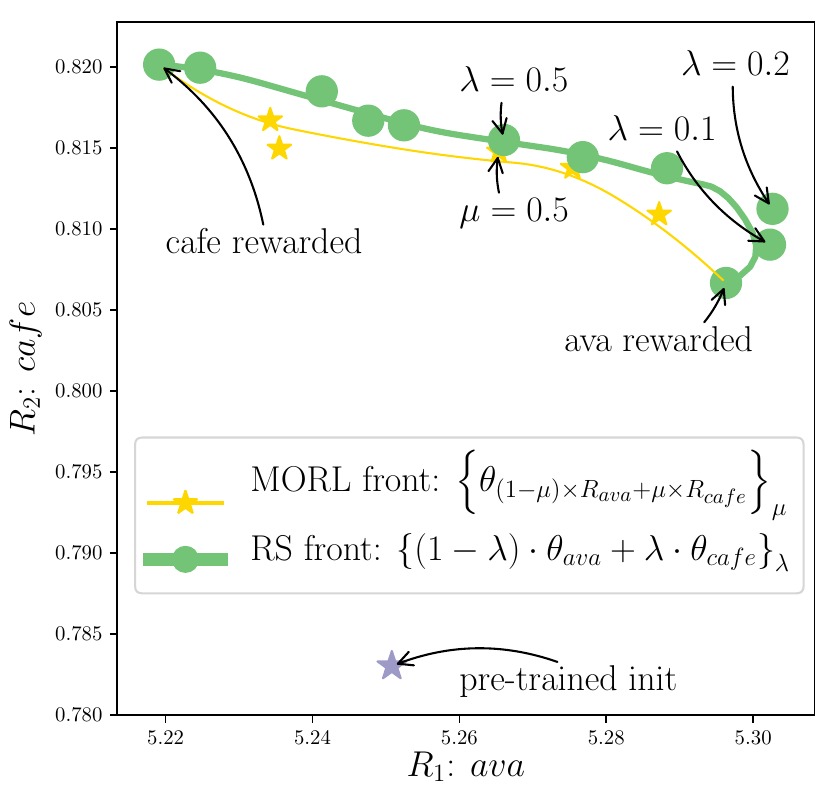}
            \caption{Image generation: \textit{ava} and \textit{cafe}.}%
            \label{fig:pareto_diffusion_0}%
        \end{subfigure}%
        \hfill%
        \begin{subfigure}[b]{0.325\textwidth}%
            \includegraphics[width=0.93\textwidth]{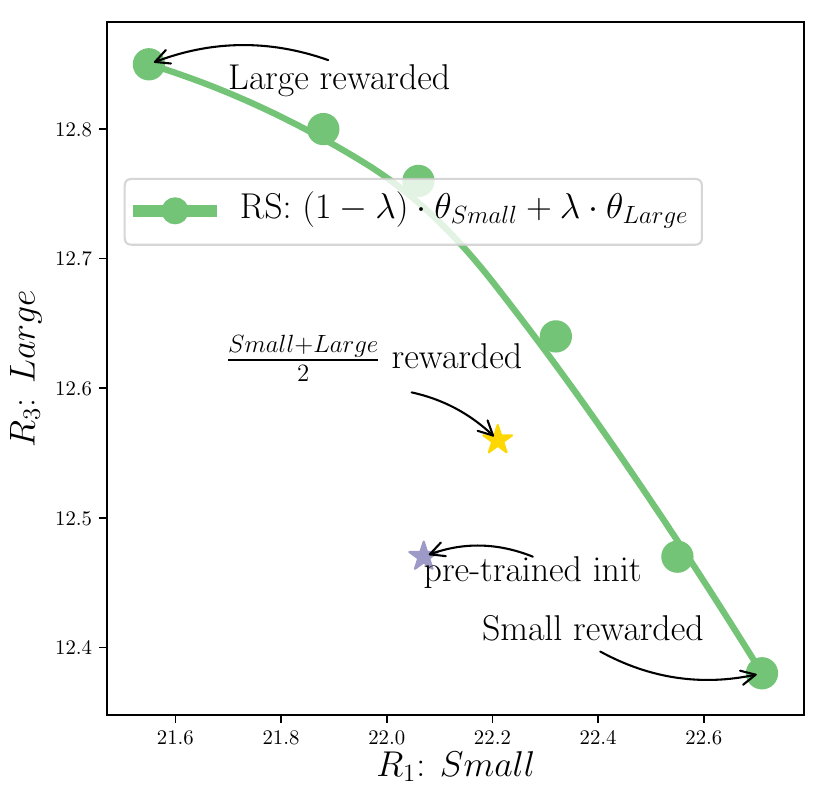}%
            \caption{VG: Small and Large.}%
            \label{fig:pareto_refcoco_small_large}%
        \end{subfigure}
    \end{center}%
    \caption{%
        \Cref{fig:pareto_diffusion_0} reports our RLHF experiments on text-to-image generation with diffusion models.
        From the pre-trained initialization, we learn $\theta_{ava}$ and $\theta_{cafe}$ by optimizing the two reward models \textit{ava} and \textit{cafe}. Interpolation between them reveals the green Pareto-optimal front, above the yellow MORL front. \Cref{fig:pareto_refcoco_small_large} report our results in visual grounding (VG) on RefCOCO+ \cite{yu2016modelingrefcoco}, where we optimize to predict boxes with IoU$>0.5$ \wrt the ground-truth, for objects of either small, medium or large size.}%
    \label{fig:pareto_refcoco}%
\end{figure}%

\FloatBarrier

\subsection{Text-to-box: visual grounding of objects with diverse sizes}%
\label{expe:refcoco}%
We now consider visual grounding (VG) \cite{yu2016modelingrefcoco}: the task is to predict the bounding box of the region described by an input text. We use UnIVAL \cite{shukor2023unifiedunival}, a seq-to-seq model that predicts the box as a sequence of location tokens \cite{wang2022ofa}.
This model is pre-trained on a large image-text dataset, then fine-tuned with cross-entropy for VG; finally, we use a weighted loss between the cross-entropy and REINFORCE in the RL stage.
As the main evaluation metric for VG is the accuracy (\ie intersection over union (IoU) $> 0.5$), we consider 3 non-differentiable rewards: the accuracy on small, medium, and large objects.
We design this experiment because improving results on all sizes simultaneously is challenging, as shown in \Cref{fig:spider_refcoco}, where MORL performs similarly to the initialization.
The results in \Cref{fig:pareto_refcoco_small_large} confirm that optimizing for small objects degrades performance on large ones; fortunately, interpolating can trade-off.
In conclusion, we can adapt to users' preferences at test time by adjusting $\lambda$, which in turn changes the object sizes that the model effectively handles.
On the one hand, if focusing on distant and small objects, a large coefficient should be assigned to $\theta_{Small}$.
On the other hand, to perform well across all sizes, we can recover initialization's performances by averaging uniformly (in~\Cref{fig:spider_refcoco}).
More details are in~\Cref{app:refcoco}.%
\ifthenelse{\boolean{isshort}}{}{\clearpage
\subsection{Text\&image-to-text: VQA with diverse statistical rewards}
\label{expe:vqa}%
We explore visual question answering (VQA), where the task is to answer questions about images. The models are usually trained with cross-entropy, as a classification or text generation task, and evaluated using the VQA accuracy: it compares the answer to ten ground truth answers provided by different annotators and assigns a score depending on the number of identical labels. Here, we explore the fine-tuning of models using the BLEU (1-gram) and METEOR metrics: in contrast with accuracy, these metrics enable assigning partial credit if the ground truth and predicted answers are not identical but still have some words in common.
In practice, we use the OFA model \cite{wang2022ofa} (generating the answers token-by-token), on the VQA v2 dataset, pre-trained with cross-entropy, and fine-tuned with REINFORCE during the RL stage. More details can be found in~\Cref{app:vqa}.%

Our results in \Cref{fig:exp-vqa} validate the observations already made in previous experiments: RL is efficient to optimize those two rewards, and RS reveals a Pareto-optimal front.}
\begin{figure}[h!]
    \begin{center}
        \begin{subfigure}[b]{0.325\textwidth}%
            \includegraphics[width=0.93\textwidth]{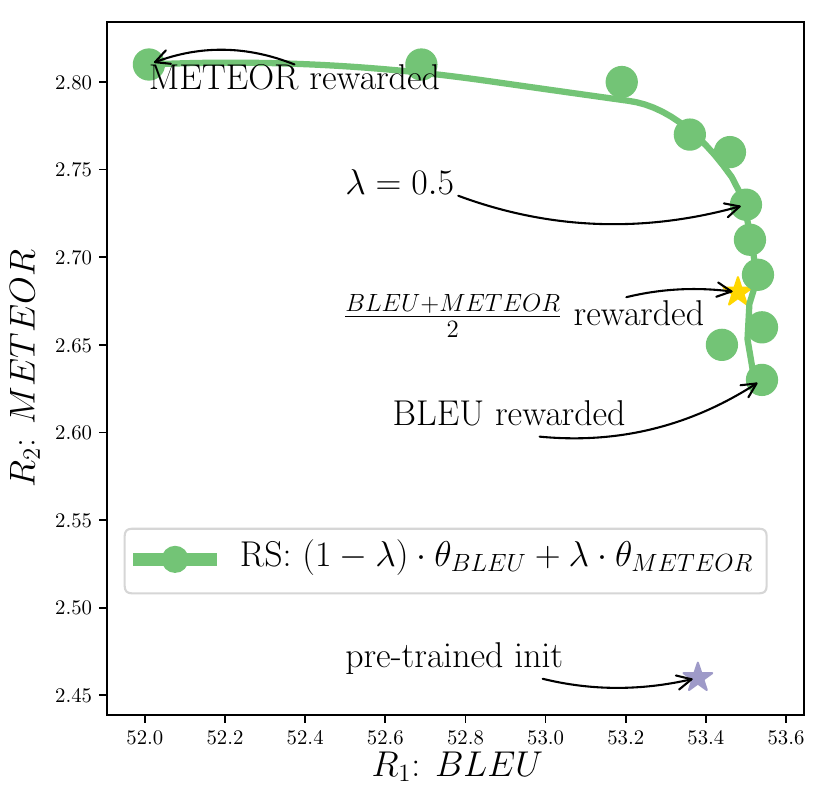}%
            \caption{VQA: BLEU and METEOR.}%
        \label{fig:exp-vqa}
        \end{subfigure} 
        \hskip 0pt plus 0.5fill
        \begin{subfigure}[b]{0.325\textwidth}%
            \includegraphics[width=0.93\textwidth]{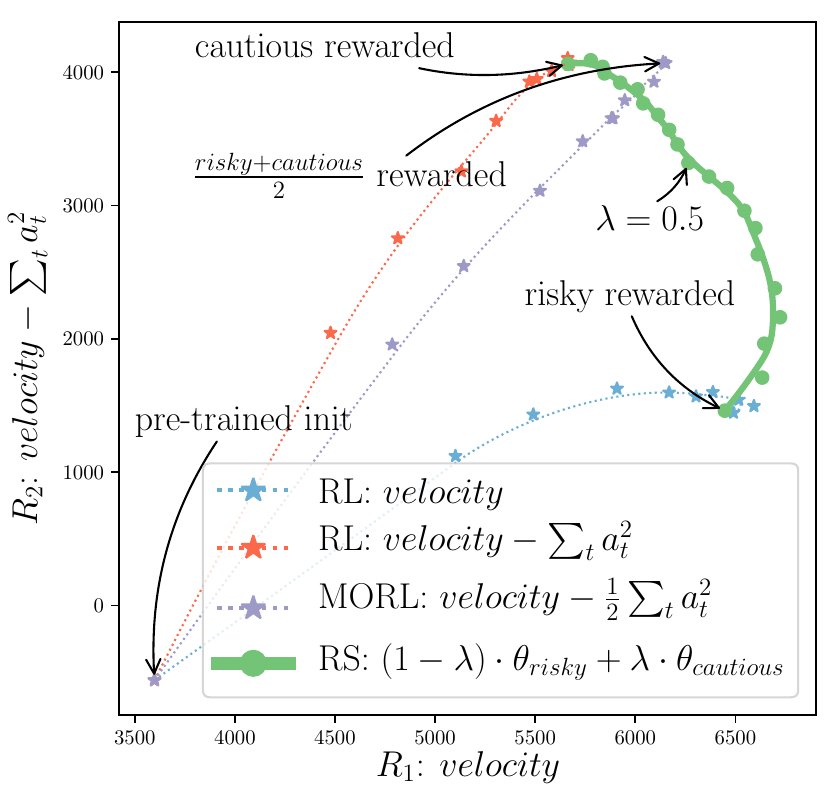}
            \caption{Locomotion: risky and cautious.}%
            \label{fig:pareto_locomotion_riskprudence}
        \end{subfigure}%
    \end{center}%
    \caption{%
        \Cref{fig:exp-vqa} report our results for visual question answering.
        \Cref{fig:pareto_locomotion_riskprudence} report our results from \Cref{expe:locomotion} for the locomotion task with humanoids.}%
    \label{fig:pareto_refcoco}%
\end{figure}%
\FloatBarrier
\subsection{Locomotion with diverse engineered rewards}%
\label{expe:locomotion}%
Teaching humanoids to walk in a human-like manner \cite{Duan2016BenchmarkingDR} serves as a benchmark to evaluate RL strategies \cite{ng1999policy} for continuous control.
One of the main challenges is to shape a suitable proxy reward \cite{dorigo1994robot,Dewey2014ReinforcementLA}, given the intricate coordination and balance involved in human locomotion.
It is standard \cite{todorov2012mujoco} to consider dense rewards of the form ${R=velocity-\alpha \times \sum_t a^{2}_{t}}$, controlling the agent's velocity while regularizing the actions $\{a_{t}\}_t$ taken over time.
Yet, the penalty coefficient $\alpha$ is challenging to set.
To address this, we devised two rewards in the Brax physics engine \cite{brax2021github}: a risky $R_1$ with $\alpha=0$, and a more cautious $R_2$ with $\alpha=1$.

Like in all previous tasks, RS's front in \Cref{fig:pareto_locomotion_riskprudence} exceeds the interpolated rewards, as per \Cref{hyp:lmc}. Moreover, the front defined by RS indicates an effective balance between risk-taking and cautiousness, providing empirical support for \Cref{hyp:pareto}, although MORL with $\mu=0.5$ (\ie $\alpha=0.5$) slightly surpasses RS's front.
We provide animations of our RL agent's locomotion on our \ifthenelse{\boolean{isarxiv}}{}{anonymized }\href{\urlwebsite}{website}, and more details are in \Cref{app:locomotion}.

\subsection{Efficiency gain of RS over MORL}
The efficiency gain of RS versus MORL is by design; when considering $2$ rewards, RS only requires $2$ fine-tunings, while MORL actually requires an infinite number of fine-tunings to reveal the entire front of preferences.
To end this experimental section, we quantify this efficiency gain by introducing in \Cref{fig:efficiencygain_appendix} the expected reward $\mathbb{E}_{\hat{\mu}\sim Unif\left(0,1\right)} \hat{R}_{\hat{\mu}}$ where $\hat{R}_{\hat{\mu}} = (1-\hat{\mu})\times R_1 + \hat{\mu} \times R_2$ and the expectation is over all the possible user's preferences $\hat{\mu}$. We then measure the difference between the expected rewards for RS (with $2$ runs) and MORL (with $M$ runs). Plotting this expected reward advantage for different values of $M$ shows that MORL needs $M \gg 2$ to match~RS.
\begin{figure}[h!]
	\begin{center}
		\begin{subfigure}[b]{0.32\textwidth}
			\centering
			\includegraphics[width=0.8\textwidth]{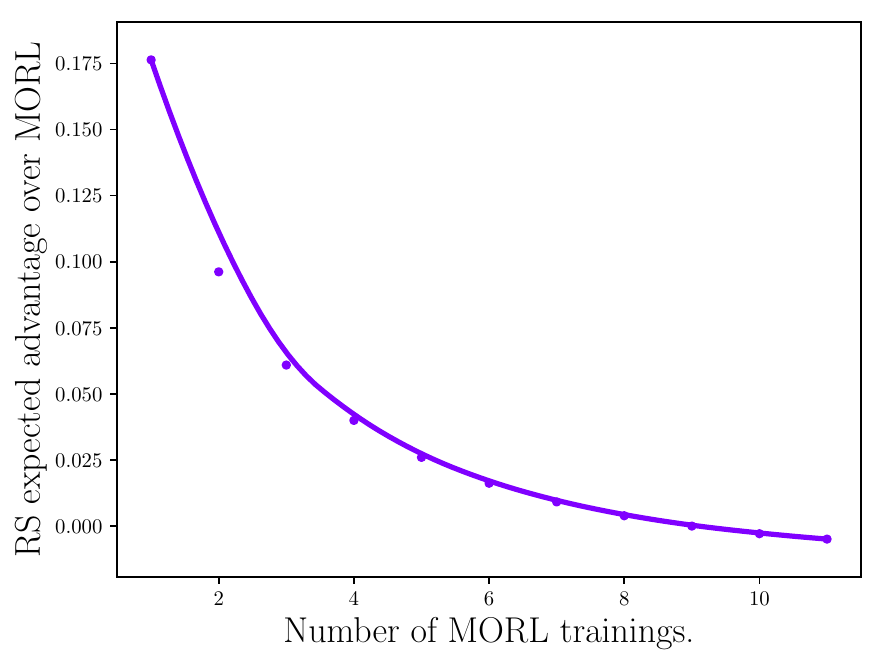}
			\caption{News summary.}
		\end{subfigure}
		\hfill
		\begin{subfigure}[b]{0.32\textwidth}
			\centering
			\includegraphics[width=0.8\textwidth]{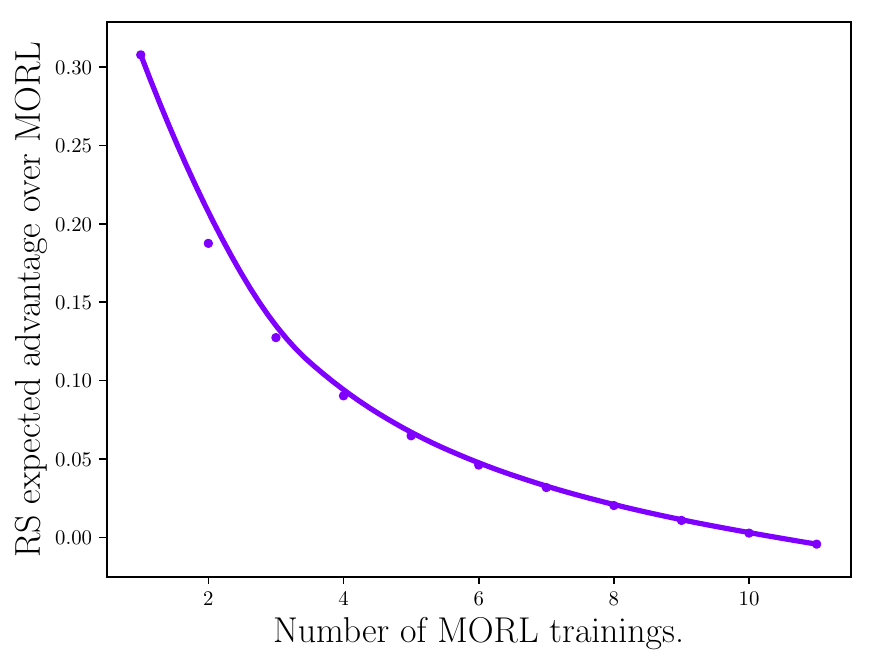}
			\caption{Reddit summary.}
		\end{subfigure}
		\hfill
		\begin{subfigure}[b]{0.32\textwidth}
			\centering
			\includegraphics[width=0.8\textwidth]{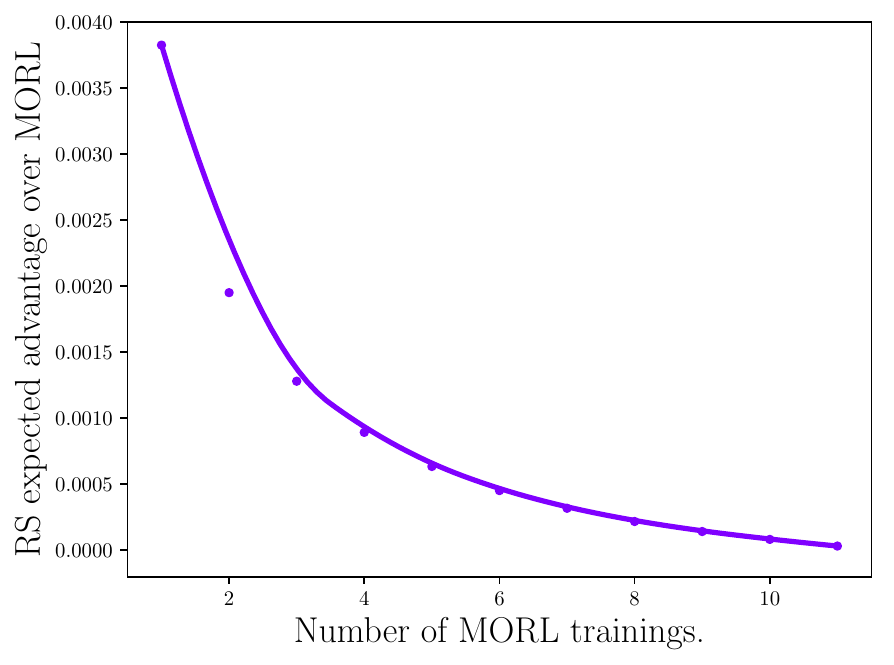}
			\caption{Captioning.}
		\end{subfigure}
	\end{center}%
	\caption{Expected reward advantage of RS (always requiring only $2$ trainings) over MORL (with $M$ trainings), defined as {\footnotesize $\mathbb{E}_{\hat{\mu}\sim Unif\left(0,1\right)} \left[max_{\lambda\in \Lambda} \hat{R}_{\hat{\mu}}(\theta_{\lambda}^{RS}) - \mathbb{E}_{\Lambda_M} \left[max_{\mu \in \Lambda_M} \hat{R}_{\hat{\mu}}(\theta_{\mu}^{MORL})\right]\right]$}, where ${\hat{R}_{\hat{\mu}} = (1-\hat{\mu})\times R_1 + \hat{\mu} \times R_2}$ is the user reward for user linear preference $\hat{\mu}$ sampled uniformly between $0$ and $1$, $\Lambda = \{0, 0.1, \mydots, 1.0\} $ is the set of the $11$ possible values for $\lambda$, and where the expectation for the MORL term is over the ${11 \choose M}$ possible combinations $\Lambda_M$ of $M$ elements from $\Lambda$ (representing the $M$ linear weightings $\mu$ used for MORL training). We observe that MORL matches RS only for $M$ sufficiently big.}%
	\label{fig:efficiencygain_appendix}%
\end{figure}%

\FloatBarrier%

\section{Related work}
\label{sec:related}
Our RS approach leans on two key components from traditional DRL.
The first is \textbf{proxy rewards}, whose design is challenging.
Statistical metrics (the standard in captioning \cite{rennie2017self}) are not practical to measure human concepts \cite{kwon2023reward} such as helpfulness \cite{bai2022training,askell2021general}. Thus recent RLHF works \cite{stiennon2020learning,ouyang2022training,christiano2017deep} leverage human comparison of prediction to learn a reward model.
Second, RS relies on existing \textbf{RL algorithms} to maximize the given rewards.
RS succeeds with variants of two of the most common, REINFORCE \cite{williams1992simple} and PPO \cite{schulman2017proximal}, suggesting it could be applied to others \cite{go2023aligning,yuan2023rrhf}.
When dealing with multiple objectives in deep learning, the common strategy is to combine them into a single reward \cite{roijers2017multi,ruadulescu2020multi}. For example, \cite{glaese2022improving} sum the predictions of a preference RM (as a proxy for helpfulness) and a rule RM (detecting rules breaking); \cite{wu2023finegrained} assign different weightings to the relevance/factuality/completeness rewards, thereby customizing how detailed and lengthy the LLMs responses should be.
Yet, those \textbf{single-policy} approaches (optimizing over a single set of linear preferences) force a priori and uncertain decisions about the required trade-offs \cite{kirk2023personalisation,hayes2022practical}, as further detailed in \Cref{app:discussion:single_policy}.
The \textbf{multi-policy} alternatives \cite{barrett2008learning,li2020deep,tanaka2003multitask,van2014multi,marta2023aligning} are not suitable because of the computational costs required to learn set of policies.
To reduce the cost, \cite{won2020scalable,yang2020multi,abdolmaleki2020distributional,lin2022pareto} build experts and then train a new model to combine them;
\cite{mossalam2016multi,wilson2007multi,nguyen2020multi} share weights across experts;
\cite{castelletti2013multiobjective,NEURIPS2019_4a46fbfc,abels2019dynamic,peschl2021moral} directly train a single model;
the recent and more similar work \cite{hua2023simple} learns one linear embedding per (locomotion) task.
Yet, all those works were developed for academic benchmarks \cite{todorov2012mujoco,vamplew2011empirical}; moreover, in terms of Pareto-optimality, they perform equal or worse than the linearized MORL.
As far as we know, the only approaches that might improve performances are those inspired from the multitask literature \cite{caruana1997multitask}, tackling gradients conflicts \cite{yu2020gradient,liu2021conflict} or different variance scales \cite{pmlr-v80-espeholt18a,teh2017distral} across tasks. Though they succeed for games such as ATARI \cite{Bellemare_2013}, our attempts to apply \cite{yu2020gradient} in our setups failed.
Overall, as previous MORL works modify the training procedure and usually introduce specific hyperparameters, adapting them to RLHF for foundation models with PPO is complex; in contrast, RS can be used on top of any RLHF system.
Finally, performance and simplicity are not the only advantages of RS over other MORL approaches; in brief, and as discussed in \Cref{app:discussion:multi_policy}, RS is compatible with the iterative alignment process.

Recent works extended the \textbf{linear mode connectivity} when fine-tuning on different tasks \cite{ilharco2022patching,choshen2022cold,rame2022recycling,dimitriadis2022pareto}, modalities \cite{shukor2023unifiedunival} or  losses \cite{rame2022diwa,croce2023seasoning}, while \cite{juneja2022linear} highlighted some failures in text classification.
In contrast, we investigate the LMC in RL. The most similar works are for control system tasks: \cite{lawson2023merging} averaging decision transformers and \cite{gaya2021learning} explicitly enforcing connectivity in subspaces of policies trained from scratch on a single reward.
When the LMC holds, combining networks in weights combines their abilities \cite{2022arXiv221204089I,daheim2023elastic}; \eg averaging an English summarizer and an English-to-French translator can summarize in French \cite{jang2023exploring}.
In domain generalization, \cite{Wortsman2022ModelSA,rame2022diwa,arpit2021ensemble} showed that WI reduces model misspecification \cite{d2020underspecification}; by analogy, we show that RS reduces~reward~misspecification.%

\section{Discussion: limitations and societal impacts}%
\label{sec:limitations}%
The recent and rapid scaling of networks presents both opportunities and major concerns~\cite{amodei2016concrete,hendrycks2022x,hendrycks2023natural}.
Our approach is a step towards better \textbf{empirical alignment} \cite{taylor2016alignment,ngo2022alignment}. Yet, many challenges remain untackled.
First, proxy rewards may lack robustness \cite{gao2022scaling} or be hacked \cite{skalse2022defining} via adversarial exploitation, making them unreliable.
Second, overfitting during training may lead to poor generalization, with a risk of goal misgeneralization \cite{shah2022goal,di2022goal}.
RS could alleviate the impact of some badly shaped proxy rewards and some failed optimizations, as well as tackling Goodhart's law \cite{multiobjective2021alignment}.
Yet, without constraint on the test distribution, complete alignment may be impossible \cite{wolf2023fundamental}, for example for LLMs with prompts of arbitrary (long) length.

\textbf{Theoretical guarantees} for alignment are also needed \cite{rodriguez2021guaranteeing}. Yet, RS (as all weight interpolation strategies) relies on an empirical finding: the LMC \cite{Frankle2020}, which currently lacks full theoretical guarantees, even in the simplest case of moving averages \cite{izmailov2018}. That's why we state explicitly our \textit{Working Hypotheses} \ref{hyp:lmc} and \ref{hyp:pareto} in \Cref{sec:analysis}.
Nonetheless, we want to point out that in \Cref{app:theory_quadratic} we provide theoretical guarantees for the near-optimality of RS when considering quadratic rewards; specifically, in \Cref{lemma2}, we bound the reward difference between the optimal policy and our interpolated policy.
A remaining limitation is that we theoretically fix issues only for $\TR$ linear over the proxy rewards.
Such \textbf{linearization} follows the \textit{linear utility functions} setup from the MORL literature \cite{ruadulescu2020multi}, that cannot encapsulate all types of (human) preferences \cite{vamplew2018human,vamplew2008limitations}.
Nonetheless, we showed in \Cref{fig:analysis_captioning_lambda,fig:lambda_captioning} that RS improves results even when $\TR$ is not linear.
We may further improve results by continually training on new and diverse proxy rewards, to capture the essential aspects of all possible rewards, such that their linear mixtures have increasingly good coverage.

Finally, our a posteriori alignment with users facilitates \textbf{personalization} \cite{salemi2023lamp} of models. As discussed in \Cref{app:discussion:single_policy} and in \cite{kirk2023personalisation}, this could increase usefulness by providing tailored generation, notably to under-represented groups.
Moreover, the distributed nature of RS makes it parallelizable thus practical in a federated learning setup \cite{mcmahan2017communication} where data must remain private.
Yet, this personalization comes with risks for individuals of \enquote{reinforcing their biases [\mydots] and narrowing their information diet}\cite{kirk2023personalisation}. This may worsen the polarization of the public sphere. Under these concerns, we concur with the notion of \enquote{personalization within bounds} \cite{kirk2023personalisation}, with these boundaries potentially set by weights fine-tuned on diverse and carefully inspected~rewards.

\section{Conclusion}
As AI systems are increasingly applied to crucial real-world tasks, there is a pressing issue to align them to our specific and diverse needs, while making the process more transparent and limiting the cultural hegemony of a few individuals.
In this paper, we proposed rewarded soup, a strategy that efficiently yields Pareto-optimal solutions through weight interpolation after training.
Our experiments have consistently validated our working hypotheses for various significant large-scale learning tasks, demonstrating that rewarded soup can mitigate reward misspecification.
We hope to inspire further research in exploring how the generalization literature in deep learning can help for alignment, to create AIs handling the diversity of opinions, and benefit society as a~whole.

\ifthenelse{\boolean{isopen}}{
\subsubsection*{Acknowledgments}
This work was granted access to the HPC resources of IDRIS under the allocations AD011011953R1 and A0100612449 made by GENCI. Sorbonne Université acknowledges the financial support by the ANR agency in the chair VISA-DEEP (ANR-20-CHIA-0022-01).
}{}
\clearpage
\newpage
\bibliographystyle{unsrt}
\bibliography{main}

\clearpage
\newpage
\appendix
\hrule
\begin{center}
    \Large Rewarded soups: towards Pareto-optimal\ifthenelse{\boolean{isarxiv}}{ alignment}{ity} by interpolating weights fine-tuned on diverse rewards
\end{center}

\begin{center}
    \large Supplementary material
\end{center}
\hrule
\vskip 0.5cm
This supplementary material is organized as follows:
\begin{itemize}
    \item \Cref{app:discussion} further discusses the practical benefits of rewarded soups.
    \item \Cref{app:faq} anticipates questions that might arise from readers.
    \item \Cref{app:theory} details some theoretical guarantees.
    \item \Cref{app:nlp} details our text-to-text generation experiments.
    \item \Cref{app:captioning} enriches our image captioning experiments.
    \item \Cref{app:diffusion} enriches our image generation experiments.
    \item \Cref{app:refcoco} enriches our visual grounding experiments.
    \ifthenelse{\boolean{isshort}}{}{\item \Cref{app:vqa} enriches our visual question answering experiments.}
    \item \Cref{app:locomotion} enriches our locomotion experiments.
\end{itemize}
\ifthenelse{\boolean{isarxiv}}{The shareable code is released on \href{\urlcode}{github}.}{}
Moreover, you can find additional qualitative results of our experiments on this \ifthenelse{\boolean{isopen}}{}{anonymized }\href{\urlwebsite}{website}.

\section{Discussion}
\label{app:discussion}

In this section we discuss the benefits of our rewarded soup (RS) approach with respect to the two families of strategies: the \textbf{single-policy} and the \textbf{multi-policy} approaches.
\subsection{Compared to single-policy approaches}
\label{app:discussion:single_policy}
The main reason why single-policy approaches are not suitable is because they optimize over a single set of preferences. In contrast, we build a coverage set of Pareto-optimal policies. This is important for the following reasons, mostly first discussed in Kirk \textit{et al.} \cite{kirk2023personalisation} and in Hayes \textit{et al.} \cite{hayes2022practical}.

Indeed, the user's true reward is highly uncertain before training. This \enquote{semi-blind} \cite{hayes2022practical} manual process forces a priori and uncertain decisions about the required trade-offs. It \textbf{shifts the responsibility} from the problem stakeholders to the system engineers, who need to anticipate the impact of their choices on the final performance.
Critically, the RLHF process may cause the \enquote{tyranny of the crowdworker} \cite{kirk2023personalisation}, as models are \enquote{tailored to meet the expectations of [...] a small number of crowdworkers primarily based in the US, with little to no representation of broader human cultures, geographies or languages.} \cite{kirk2023personalisation}.
Moreover, biased are caused by chaotic engineering choices, and \enquote{are exacerbated by a lack of [\mydots] documentation} \cite{kirk2023personalisation}.
In contrast, our approach makes \textbf{personalization explicit}, as argued by \cite{kirk2023personalisation}.
Moreover, we could \textbf{support decision-making} to find a good balance between (potentially conflicting) parties' interests. This value pluralism \cite{tetlock1986value} can lead to \textbf{fairer} and more equitable outcomes \cite{vamplew2018human,siddique2020learning}.
Single-policy cannot adapt to test time requirements; in contrast, RS facilitates personalized assistances \cite{salemi2023lamp}.
This is all the more important as human preferences change from time to time.
In this \textbf{dynamic utility function} scenario, RS can quickly adapt with fewer data, by simply adjusting the $\lambda$ to match new preferences (rather than the full network).
Finally, RS could also improve the \textbf{interpretability} and \textbf{explainability} of the decisions.
Letting the users decide would make the process more \textbf{transparent} \cite{gabriel2021challenge}, which is essential to ensure that the development process is fair, unbiased, and inclusive \cite{abadi2016deep}.

\subsection{Compared to multi-policy approaches}
\label{app:discussion:multi_policy}

The main reason why existing multi-policy approaches through multitasking are not suitable is because of their \textbf{computational costs} required to learn a dense set of policies.
In contrast, RS only trains the proxy rewards  independently and enables the selection of the interpolating coefficient a posteriori. This is especially useful with large number of rewards and thus growing number of combinations.
Second, multitask \cite{caruana1997multitask} is challenging; for example, even if the true reward is actually a linear weighted sum of some proxy rewards and those coefficients are known, using those preferences during training can lead to suboptimal results \cite{van2014novel}, because of conflicting gradients \cite{yu2020gradient,liu2021conflict} or different variance scales \cite{pmlr-v80-espeholt18a,teh2017distral}. This has been tackled in RL, but so far mostly for games such as ATARI \cite{Bellemare_2013}.
Third, our strategy is compatible with the inherent \textbf{iterative engineering process} of alignment. Indeed, RS can continually include adjusted opinions while preventing forgetting of the old behaviours. This relates to the \textbf{continual learning} challenge, and the empirical observations that weight averaging can reduce catastrophic forgetting \cite{stojanovski2022momentum,eeckt2022weight}. Moreover, as shown in \cite{2022arXiv221204089I} and confirmed in \Cref{fig:analysis_captioning_extra}, negative editing by weight interpolation can fix and force the removal of some behaviours.
Finally, RS is computationally effective, requiring \textbf{no communication across servers}, thus enabling \enquote{embarrassingly simple parallelization} \cite{li2022branch}.
This facilitates its use in \textbf{federated learning} scenario \cite{mcmahan2017communication} where the data should remain private.
Actually, RS follows the \textbf{updatable machine learning paradigm} \cite{updatablemachinelearning}, \enquote{allowing for the collaborative creation of increasingly sophisticated AI system} \cite{rame2022recycling}.
In the future, we may develop open-source personalized models, rewarded on decentralized private datasets, and combine them continuously.

\section{FAQs}
\label{app:faq}

We addressed below questions that might arise from readers.

\subsection{What is the difference between rewarded soups and model soups?}

Rewarded soups (RS) and model soups (MS) \cite{Wortsman2022ModelSA} both average weights of models fine-tuned from a shared pre-trained initialization.
That's why we chose the same terminology as \enquote{model soups} and named our method \enquote{rewarded soups}.
Yet, we want to clarify that RS and MS tackle different problems, have different goals, leading to different methods and implementations.
\begin{itemize}
	\item RS challenges single-policy approaches to improve alignment in reinforcement learning, and aims at reducing reward misspecification by revealing a Pareto front of solutions across the entire space of preferences: thus RS considers different training objectives for fixed hyperparameters across runs, and non-uniform interpolating coefficients $\lambda$ set a posteriori.
	\item MS challenges the standard model selection after a grid search to improve generalization in supervised learning, and aims at reducing model underspecification and reducing variance by combining all fine-tuned models: thus MS considers different hyperparameters for a fixed training objective across runs, and (usually) uniform interpolating coefficients $\lambda=\frac{1}{M}$.
\end{itemize}
These differences mean that MS cannot be applied to reduce reward misspecification, as validated empirically in \Cref{fig:analysis_captioning_soup} for the captioning task.
This \Cref{fig:analysis_captioning_soup} also shows that RS and MS are actually complementary and can combine their benefits; specifically, reward misspecification and variance reduction.

\FloatBarrier

\subsection{Limitations for the LMC?}

\subsubsection{Limitations for the design of networks for the LMC?}

In our experiments, we consider different network architectures (transformers, CNNs, and MLPs).
We also investigate different training procedures: with low-rank adapters, partial or end-to-end fine-tunings.
We do so for many different tasks and modalities: text generation, image captioning, image-to-test generation, visual grounding, etc.
Our empirical observation is that, across those setups, the LMC is architecture-agnostic, procedure-agnostic, task-agnostic and modality-agnostic.

The main condition we require is the shared pre-trained initialization \cite{Neyshabur2020}, so that the weights remain close (as detailed in \Cref{remark:1}).
As a side note, there is another condition suggested by the literature \cite{li2022branch,2022arXiv221204089I}: the LMC would work better when the architecture has enough trainable parameters. For example, according to \cite{2022arXiv221204089I}, larger networks may facilitate the orthogonality of the fine-tuned updates; then \cite{2022arXiv221204089I} "speculate that this [orthogonality] enables the combination of task vectors via addition with minimal interference".

\FloatBarrier

\subsubsection{Limitations for the number of training steps for the LMC?}

As argued above, good performances are guaranteed when weights remain close; thus longer trainings may be worrisome, as the models may potentially diverge in the weight space.
We investigate this question in \Cref{fig:epochs_appendix}, for the news summarization and the captioning task; we double the number of training steps, and report multiple RS fronts over the course of fine-tuning.
Fortunately, we consistently observe good performances for RS along fine-tuning.
This confirms that the only condition for the LMC is the shared pre-trained initialization \cite{Neyshabur2020}.

\begin{figure}[h!]
    \begin{center}
        \begin{subfigure}[b]{0.32\textwidth}
            \centering
            \includegraphics[width=0.9\textwidth]{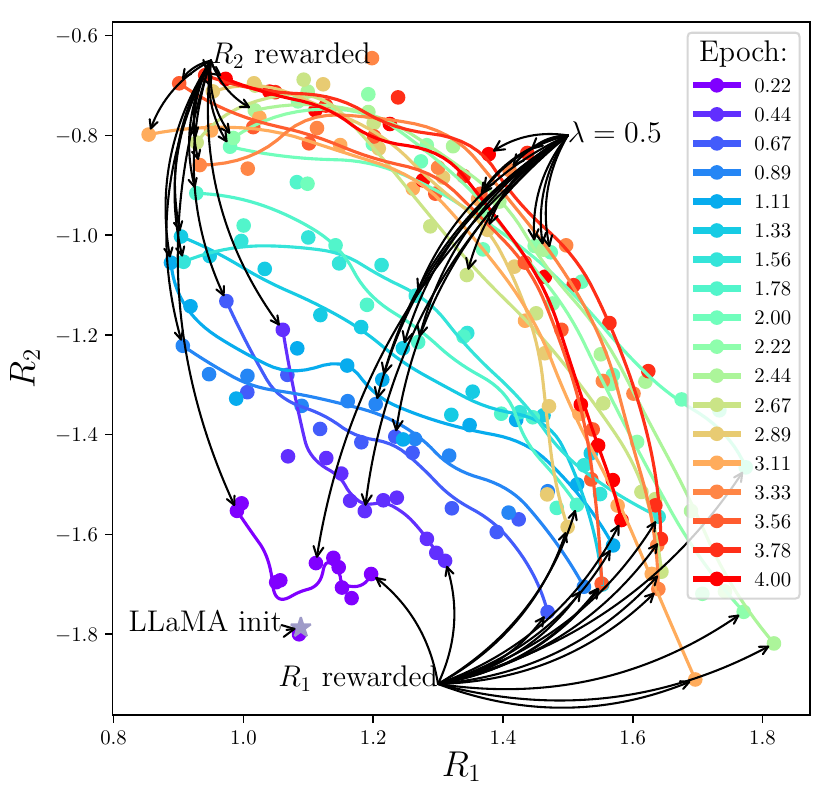}
            \caption{News summary.}
        \end{subfigure}
        \begin{subfigure}[b]{0.32\textwidth}
            \centering
            \includegraphics[width=0.9\textwidth]{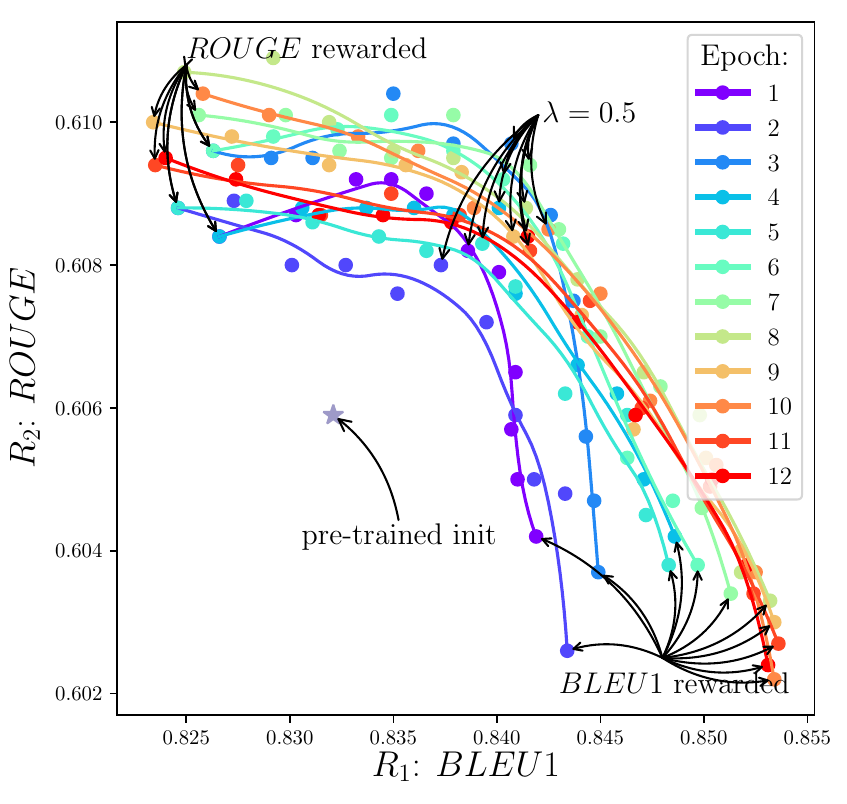}
            \caption{Captioning.}
        \end{subfigure}
    \end{center}
	\caption{Those figures show how RS's fronts evolve over the course of fine-tuning, and confirms the LMC even when doubling the number of training epochs (previously $2$ for news summarization and $6$ for image captioning).}%
	\label{fig:epochs_appendix}%
\end{figure}%

\FloatBarrier

\subsubsection{How does the number of rewards (and networks) affects the LMC?}

For visualization clarity, the fronts were mostly shown for $N=2$ rewards, one of the $x$-axis, the other on the $y$-axis. Yet, RS can scale and trade-off between more rewards. We validated this empirically in the spider maps from \Cref{fig:spider_assistant} (for text generation), from \Cref{fig:spider_captioning,fig:spiders_appendix_captioning} (for image captioning), and from \Cref{fig:spider_refcoco} (for visual grounding), where we respectively consider up to $N=4$, $N=5$ and $N=3$ networks fine-tuned on $N$ different rewards, one reward each.

\subsection{Comparison of MORL and RS}

\subsubsection{How to evaluate Pareto-optimality?}

Given a fixed preference $\hat{\mu}$ between two rewards $R_1$ and $R_2$, we would like to compare our RS policy to an oracle policy maximizing $(1-\hat{\mu}) \times R_1 + \mu \times R_2$ in test. Yet, this oracle policy (and the true Pareto front) is unknown in real-world applications.

That's why, in practice, and as argued in \Cref{remark:2}, we presented empirical support for \Cref{hyp:pareto} by considering the MORL's solutions fine-tuned to optimize $(1-\hat{\mu})\times R_1 + \mu \times R_2$ in train, for $0\leq\mu\leq 1$.
In other words, the linearized MORL is our reference to evaluate Pareto optimality.
Overall, in \Cref{sec:expe}, MORL and RS usually perform similarly (with small differences further discussed below in \Cref{app:faq:diversityrewards,app:faq:morlvsrs}).
Our conclusion is that rewarded soup is an empirical solution \textbf{towards} Pareto-optimality, with indeed an experimental limitation highlighted in the paper's name.

\subsubsection{How does reward diversity affect the effectiveness of RS?}
\label{app:faq:diversityrewards}

Our experiments in captioning and image generation provide empirical evidence that the more similar the rewards, the higher the gains of RS versus MORL.

In the captioning experiment, by analyzing the transfer abilities across rewards in the spider maps from \Cref{fig:spider_captioning}, we can deduce that BLEU4 and ROUGE are more similar than BLEU1 and ROUGE, while METEOR is an outlier (fine-tuning on METEOR worsens the results for the other rewards). Then, we can observe that the gains of RS versus MORL are consistent with these similarities across rewards.
Specifically, when considering $R_2=ROUGE$, the RS green front is more convex and significantly above the MORL yellow front in \Cref{fig:pareto_captioning_bleu4rouge} (with $R_1=BLEU4$) than in \Cref{fig:pareto_captioning_bleu1rouge} (with $R_1=BLEU1$).
In \Cref{fig:lambda_captioning_bleu1meteor}, with $R_2=METEOR$, MORL performs better than RS.

Similarly, in the image generation experiment, when we consider two (arguably similar) aesthetic rewards in \Cref{fig:pareto_diffusion_0} to fine-tune a diffusion model, RS's front is to the right and above MORL's front. In contrast, performances get worse in \Cref{fig:pareto_diffusion_1} where we also include an \textit{nsfw} reward inversely correlated with image quality.

In conclusion, despite using diverse and heterogeneous rewards that are in tension, we consistently obtain positive results.
Yet, in the case where rewards are fully antagonist, we acknowledge that RS is likely to produce less favorable results. This empirical limitation of weight interpolation can be explained in two different ways. (i) Intuitively from a loss landscape perspective: weights fine-tuned on antagonist rewards will be more distant, thus potentially breaking the linear mode connectivity. (ii) Theoretically thanks to \Cref{lemma2}, where we bound the difference between the optimal reward and RS's reward by a RHS term growing the maximum of eigenvalues ratio for rewards' Hessians: if the rewards are more diverse, their Hessians would have more different eigenvalues, thus maximum of eigenvalues ratio would grow, the RHS term would grow in \Cref{lemma2}, and our guarantees for the optimality of RS would get loose.

As a final note, to tackle this limitation under antagonist rewards, the complementarity of MORL and RS appears as a promising research direction; this is further discussed in the legend of \Cref{fig:analysis_captioning_multi} for the captioning task and in \Cref{app:diffusion_results} for the image generation task.

\subsubsection{Why RS is sometimes superior to MORL?}
\label{app:faq:morlvsrs}
We observe a few times that the RS solutions are actually above the linearized MORL solutions. We speculate this is related to the multiple benefits of weight interpolation.
The main benefit that we discuss in our paper is the ability to interpolate between different policies: from this benefit, we would expect RS to perform similarly to MORL.
The second benefit from weight averaging is the implicit regularization, causing variance reduction and stabilizing performances \cite{izmailov2018,arpit2021ensemble}. This is the main focus of the traditional weight averaging literature, for example in model soups \cite{Wortsman2022ModelSA}.
In conclusion, we speculate that this second benefit (combined with the first) can explain why RS sometimes outperforms MORL.

\FloatBarrier
\newpage

\section{Theoretical insights}
\label{app:theory}

\subsection{Proof of \Cref{lemma:tr}}
\label{proof:lemma:tr}
\begin{proof}
	Considering $\theta$ maximizing $\TR$, we first show that $\theta$ is on the PF of $\{R_i\}_{i}$.
	Otherwise, considering $\theta^{\prime} >_N \theta$ and as $\forall i, \hat{\mu}_i\geq 0$, we have $\sum_i \hat{\mu}_i R_i\left(\theta^{\prime}\right) > \sum_i \hat{\mu}_i R_i\left(\theta\right)$.
	This implies that $\theta^{\prime}$ would produce a better policy than $\theta$ for $\TR = \sum_i \hat{\mu}_i R_i$ and thus the contradiction.
	Finally, as $\theta$ is on the PF and by definition of a PCS, there exists $\lambda$ \sut $\forall k, R_k\left(\sum_i \lambda_i \cdot \theta_{i} \right)=R_k(\theta)$.%
\end{proof}%
\subsection{Theoretical guarantees with quadratic rewards}
\label{app:theory_quadratic}
In this section, we provide theoretical guarantees for the near-optimality of RS when considering quadratic rewards. This simplification amounts to replacing the rewards by their second-order Taylor approximation, which is a realistic assumption when the weights remain within a small neighborhood.
\subsubsection{Simple case with Hessians proportional to the Identity matrix}%
For the first \Cref{lemma1}, we make the following simplifying \Cref{assumption:diagonalegal}.
\begin{assumption}[Hessians proportional to the Identity matrix.]
	\label{assumption:diagonalegal}
	Every reward $R_i$ is quadratic, with Hessians proportional to $\mathbb{I}_d$. Specifically, let $\Theta \subset \mathbb{R}^d$ be the set of possible weights, and let $\{R_i\}_{i=1}^N$ be the $N$ rewards, we can write for $i \in \{1, \mydots, N\}$:
	\begin{align}
		\forall \theta \in \Theta,\quad R_i (\theta) = R_i(\theta_i) - \eta_i \Vert \theta - \theta_i \Vert^2
	\end{align}
	where $\eta_i \in \mathbb{R}_{+}^{*}$ and $\theta_i$ is the global maximum for reward $R_i$.%
\end{assumption}%
\begin{lemma}
	\label{lemma1}
	Let $\hat{\mu} = (\hat{\mu}_1, \mydots, \hat{\mu}_N) \in \Delta_N$. Then, under \Cref{assumption:diagonalegal}, the reward $R_{\hat{\mu}} = \sum_i \hat{\mu}_i \times R_i$ is maximized on the convex hull of $\{\theta_1, \mydots, \theta_N\}$.%
\end{lemma}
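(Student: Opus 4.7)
The plan is to directly compute the maximizer of $R_{\hat\mu}=\sum_i \hat\mu_i R_i$ in closed form and verify that it lies in the convex hull of $\{\theta_1,\ldots,\theta_N\}$. Writing $R_{\hat\mu}(\theta) = \sum_i \hat\mu_i R_i(\theta_i) - \sum_i \hat\mu_i \eta_i \|\theta-\theta_i\|^2$ under \Cref{assumption:diagonalegal}, the first term is a constant independent of $\theta$, so maximizing $R_{\hat\mu}$ reduces to minimizing the weighted sum of squared distances $F(\theta) := \sum_i \hat\mu_i \eta_i \|\theta-\theta_i\|^2$.

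Next, I would exploit strict convexity. Since $\hat\mu_i\geq 0$, $\eta_i>0$, and at least one $\hat\mu_i$ is strictly positive (because $\hat\mu\in\Delta_N$), the function $F$ is a strictly convex quadratic in $\theta$, hence admits a unique global minimizer characterized by the first-order condition $\nabla F(\theta)=2\sum_i \hat\mu_i \eta_i (\theta-\theta_i)=0$. Solving yields
\begin{equation*}
\theta^{\star} = \sum_{i=1}^{N} \lambda_i \theta_i,\qquad \lambda_i := \frac{\hat\mu_i \eta_i}{\sum_{j=1}^{N}\hat\mu_j \eta_j}.
\end{equation*}

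Finally, I would check that the coefficients $\{\lambda_i\}_{i=1}^N$ form a valid convex combination: each $\lambda_i\geq 0$ since $\hat\mu_i\geq 0$ and $\eta_i>0$, the denominator $\sum_j \hat\mu_j \eta_j>0$ is well defined (again because $\hat\mu\in\Delta_N$ and the $\eta_j$ are strictly positive), and clearly $\sum_i \lambda_i = 1$. Therefore $\theta^{\star}$ belongs to the convex hull of $\{\theta_1,\ldots,\theta_N\}$, which proves the lemma. There is no real obstacle here; the only subtlety worth flagging explicitly is the necessity of $\eta_i>0$ to ensure strict concavity of each $R_i$ (so that the sum's minimizer is unique and the weights $\lambda_i$ are well-defined), together with $\hat\mu\in\Delta_N$ so that the denominator does not vanish. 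Note that, in general, $\lambda_i\neq\hat\mu_i$ unless all $\eta_i$ are equal, which is consistent with \Cref{lemma:tr} guaranteeing only existence of a suitable $\lambda$, not $\lambda=\hat\mu$.
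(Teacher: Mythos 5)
Your proof is correct and follows essentially the same route as the paper's: set the gradient of the quadratic $R_{\hat{\mu}}$ to zero, obtain the closed-form maximizer $\hat{\theta} = \sum_i \hat{\mu}_i \eta_i \theta_i / \sum_j \hat{\mu}_j \eta_j$, and observe the coefficients are nonnegative and sum to one. Your added remarks (strict concavity via $\eta_i > 0$, nonvanishing denominator via $\hat{\mu} \in \Delta_N$, and that $\lambda_i \neq \hat{\mu}_i$ in general) are accurate refinements of the same argument, not a different approach.
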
%
\begin{proof}
	The function $R_{\hat{\mu}}$ is quadratic thus has an unique global maximum $\hat{\theta}$, that we find analytically:
	\begin{align*}
		\nabla_{\theta} R_{\hat{\mu}}(\hat{\theta}) = 0 & \implies \sum_{i=1}^N \mu_i \eta_i \cdot (\hat{\theta} - \theta_i) = 0                                         \\
		                                                & \implies \hat{\theta} = \frac{\sum_{i=1}^N \hat{\mu}_i \eta_i \cdot \theta_i}{\sum_{i=1}^N \hat{\mu}_i \eta_i}
	\end{align*}
	Since all the $\hat{\mu}_i \eta_i$ are positive or zero, and at least one is greater than zero, $\hat{\theta}$ is indeed in the convex hull of $\{\theta_1, \mydots, \theta_N\}$.%
\end{proof}
\begin{remark}
	Under \Cref{assumption:diagonalegal}, the reward functions are concave; thus we can reasonably assume that each fine-tuning procedure for $R_i$ reaches its global optimum $\theta_i$ for $i\in\{1,\mydots,N\}$. Then, \Cref{lemma1} tells us that the maximum value for linear user's reward $R_{\hat{\mu}}$ is obtainable by weight interpolation between the $\{\theta_i\}_{i=1}^N$: the interpolating coefficients in $\Delta_N$ such that $\lambda_i\propto \hat{\mu}_i \eta_i$ make rewarded soups optimal.
\end{remark}
\subsubsection{Advanced case with diagonal Hessians}
We now consider the more complex case with the relaxed \Cref{assumption:relaxed}. For simplicity, we only consider $N=2$ rewards $R_1$ and $R_2$.%
\begin{assumption}[Diagonal Hessians]%
	\label{assumption:relaxed}%
	The rewards are quadratic, with Hessians diagonal negative definite. Specifically, we can write for $i \in \{1, 2\}$:%
	\begin{align}
		\forall \theta=(\theta^1, \mydots, \theta^d) \in \Theta, \quad R_i(\theta) = R_i(\theta_i) - \sum_{j=1}^d \eta_i^j (\theta^j - \theta_i^j)^2,
	\end{align}
	where $(\eta_i^1, \mydots \eta_i^d) \in \{\mathbb{R}_{+}^{*}\}^d$ and $\theta_i=(\theta_i^1, \mydots, \theta_i^d)$ is the global maximum for reward $R_i$.
\end{assumption}

\begin{remark}
	This diagonal \Cref{assumption:relaxed} of the Hessian is common: for example in optimization \cite{series/lncs/LeCunBOM12,kingma2014adam}, to prune networks \cite{e9ee2143e19d49cf9cbe8861950b6b2a} or in out-of-distribution generalization \cite{rame_fishr_2021}.
	This strong assumption is supported by the empirical observation \cite{becker1988improving} that Hessians are diagonally dominant, in particular at the end of training.
	Also, we note that our findings remain valid assuming only that the Hessians are co-diagonalizable.
\end{remark}

\begin{lemma}\label{lemma2}
	We consider the user's reward $R_{\hat{\mu}} = (1 - \hat{\mu}) \times R_1 + \hat{\mu} \times R_2$ with $\hat{\mu} \in [0, 1]$, and
	\begin{align}
		\Delta R_{\hat{\mu}} = \max_{\theta \in \Theta} R_{\hat{\mu}}(\theta) - \max_{\lambda \in [0, 1]} R_{\hat{\mu}}\left(\left(1 - \lambda\right) \cdot \theta_1 + \lambda \cdot \theta_2\right).
	\end{align}
	$\Delta R_{\hat{\mu}}$ corresponds to the difference in terms of $R_{\hat{\mu}}$ between the global maximum and the maximum reachable by weight interpolation through rewarded soups (with a single interpolating coefficient for all dimensions).
	Then, under \Cref{assumption:relaxed}, we have:
	\begin{equation}
		\Delta R_{\hat{\mu}} \leq \frac{\hat{\mu}^2(1-\hat{\mu})^2(M \Delta_1 - \Delta_2)(M \Delta_2 - \Delta_1)}{\left(\hat{\mu}(1-\hat{\mu})(M-1)^2 + M\right)\left(\left( 1 - \hat{\mu}\right) \Delta_1 + \hat{\mu} \Delta_2\right)},
		\label{eq:mainlemma3}
	\end{equation}
	where
	$M = \max_{j \in \{1, \mydots, d\}} \max \left(\frac{\eta_1^j}{\eta_2^j}, \frac{\eta_2^j}{\eta_1^j}\right)$ is the maximum of eigenvalues ratio, $\Delta_1 = R_1(\theta_1) - R_1(\theta_2)$ and $\Delta_2 = R_2(\theta_2) - R_2(\theta_1)$.

	When $\Delta_1 = \Delta_2$, the bound simplifies into:
	\begin{align}
		\Delta R_{\hat{\mu}} \leq \frac{\hat{\mu}^2(1-\hat{\mu})^2(M -1)^2}{\hat{\mu}(1-\hat{\mu})(M-1)^2 + M} \Delta_1
		\label{eq:mainlemma3simplified}
	\end{align}

	Furthermore, when the Hessians are equal, then $M=1$ and $\Delta R_{\hat{\mu}}=0$: RS is optimal .
\end{lemma}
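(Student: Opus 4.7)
The plan is to exploit the diagonal structure of the Hessians to decompose the problem coordinate-wise, compute both the unconstrained maximum and the RS-constrained maximum of $R_{\hat{\mu}}$ in closed form, and then bound their difference via a Jensen-gap argument on a bounded interval of eigenvalue ratios. First I would set $\delta^j = \theta_2^j - \theta_1^j$, $A^j = \eta_1^j(\delta^j)^2$, $B^j = \eta_2^j(\delta^j)^2$, so that $\Delta_1 = \sum_j A^j$ and $\Delta_2 = \sum_j B^j$. Substituting $\theta = (1-\lambda)\theta_1 + \lambda\theta_2$ into $R_{\hat{\mu}} = (1-\hat{\mu})R_1 + \hat{\mu}R_2$ and collecting terms, the reward along the RS segment becomes $C - (1-\hat{\mu})\Delta_1\lambda^2 - \hat{\mu}\Delta_2(1-\lambda)^2$ with $C = (1-\hat{\mu})R_1(\theta_1) + \hat{\mu}R_2(\theta_2)$. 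Setting the derivative to zero yields $\lambda^\star = \hat{\mu}\Delta_2/D$ with $D = (1-\hat{\mu})\Delta_1 + \hat{\mu}\Delta_2$, so $\max_\lambda R_{\hat{\mu}}^{RS} = C - \hat{\mu}(1-\hat{\mu})\Delta_1\Delta_2/D$.

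Next, because the Hessians are diagonal, $R_{\hat{\mu}}$ is separable across coordinates, so the unconstrained maximum can be computed dimension-by-dimension: the per-coordinate optimum uses $\lambda^j = \hat{\mu}B^j / ((1-\hat{\mu})A^j + \hat{\mu}B^j)$, which differs across dimensions precisely when the Hessians are not proportional. Summing the dimension-wise optima gives $\max_\theta R_{\hat{\mu}}(\theta) = C - \hat{\mu}(1-\hat{\mu})\sum_j A^j B^j / ((1-\hat{\mu})A^j + \hat{\mu}B^j)$. Writing $\phi(r) = r/((1-\hat{\mu}) + \hat{\mu}r)$ and $r^j = B^j/A^j$, one obtains $\Delta R_{\hat{\mu}} = \hat{\mu}(1-\hat{\mu})\left[\Delta_1 \phi(\bar r) - \sum_j A^j \phi(r^j)\right]$, where $\bar r = \Delta_2/\Delta_1 = \sum_j (A^j/\Delta_1) r^j$ is the $A^j$-weighted average of the ratios.

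The key observation is that $\phi$ is concave (by direct computation, $\phi''(r) = -2\hat{\mu}(1-\hat{\mu})/((1-\hat{\mu}) + \hat{\mu}r)^3 < 0$) and each $r^j$ lies in $[1/M, M]$ by the definition of $M$. The bracket above is thus a Jensen gap for the probability measure $p^j = A^j/\Delta_1$ against the concave $\phi$. A standard extremal principle says that for fixed mean $\bar r$ the Jensen gap is maximized by the two-point distribution supported on the endpoints $\{1/M, M\}$, with weights $\alpha = (M\bar r - 1)/(M^2-1)$ and $1-\alpha$. Substituting this extremal distribution yields an upper bound involving $\phi(1/M)$, $\phi(M)$ and $\phi(\bar r)$; after clearing denominators, the factors $\bar r - 1/M$ and $M - \bar r$ turn into $M\Delta_2 - \Delta_1$ and $M\Delta_1 - \Delta_2$ (both non-negative since $\bar r \in [1/M, M]$), matching the numerator of the claim.

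The main obstacle is the final algebraic simplification: one must carefully combine the three denominators $(1-\hat{\mu}+\hat{\mu}M)$, $(M(1-\hat{\mu})+\hat{\mu})$, and $D = (1-\hat{\mu})\Delta_1 + \hat{\mu}\Delta_2$ into the stated form, and in particular verify the identity $(1-\hat{\mu}+\hat{\mu}M)(M(1-\hat{\mu})+\hat{\mu}) = \hat{\mu}(1-\hat{\mu})(M-1)^2 + M$ to recover the denominator appearing in \eqref{eq:mainlemma3}. The two special cases then follow at once: when $\Delta_1 = \Delta_2$, the numerator factors as $(M-1)^2 \Delta_1^2$ and one cancels a $\Delta_1$ against $D$ to obtain \eqref{eq:mainlemma3simplified}; when $M=1$, the ratios $r^j$ are all forced to equal $1$, so $\phi$ is evaluated at a single point and the Jensen gap vanishes, giving $\Delta R_{\hat{\mu}} = 0$ and hence optimality of RS.
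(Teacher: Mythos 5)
Your proof is correct, and it arrives at the paper's bound by a genuinely different decomposition, so let me compare the two. The paper's own proof parameterizes the global optimum by per-dimension interpolating coefficients $\hat{\lambda}^j = \hat{\mu}\eta_2^j/((1-\hat{\mu})\eta_1^j + \hat{\mu}\eta_2^j)$ and writes the gap as $\Delta R_{\hat{\mu}} = \sum_j p_j (\bar{\lambda} - \hat{\lambda}^j)^2$ with $p_j = ((1-\hat{\mu})\eta_1^j + \hat{\mu}\eta_2^j)(\theta_1^j-\theta_2^j)^2$, i.e., as the $p$-weighted \emph{variance} of the $\hat{\lambda}^j$, which it then bounds by the Bhatia--Davis inequality using $\hat{\mu}/((1-\hat{\mu})M+\hat{\mu}) \leq \hat{\lambda}^j \leq \hat{\mu}M/((1-\hat{\mu})+\hat{\mu}M)$. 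You instead change variables to the eigenvalue ratios $r^j = \eta_2^j/\eta_1^j \in [1/M, M]$ and read the gap as a \emph{Jensen gap} for the concave $\phi(r)=r/((1-\hat{\mu})+\hat{\mu}r)$ under the weights $p^j = A^j/\Delta_1$; the dictionary between the two formulations is $\hat{\lambda}^j = \hat{\mu}\,\phi(r^j)$, and the two extremal principles are cousins, since Bhatia--Davis is exactly the variance of the two-point law on the endpoints with the prescribed mean --- the same extremal configuration that saturates your chord bound --- so it is no accident the bounds coincide. I verified the algebra you flagged as the main obstacle: the identity $(1-\hat{\mu}+\hat{\mu}M)\left(M(1-\hat{\mu})+\hat{\mu}\right)=\hat{\mu}(1-\hat{\mu})(M-1)^2+M$ does hold, and substituting $\bar{r}=\Delta_2/\Delta_1$, $\phi(\bar{r})=\Delta_2/D$ and $\alpha=(M\Delta_2-\Delta_1)/\left((M^2-1)\Delta_1\right)$ (the weight on the endpoint $M$) into your chord bound yields, after the factorization $\Delta_1\Delta_2(M^2+1)-M(\Delta_1^2+\Delta_2^2)=(M\Delta_1-\Delta_2)(M\Delta_2-\Delta_1)$, exactly \Cref{eq:mainlemma3}; your two special cases then follow as you describe. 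What each approach buys: yours is more elementary and self-contained (concavity of $\phi$ plus the chord bound, no named inequality needed) and makes the tightness case transparent (the bound is attained when every ratio $r^j$ sits at an endpoint of $[1/M,M]$); the paper's variance formulation makes the interpretation crisper --- the suboptimality of RS is literally the weighted dispersion of the per-dimension ideal coefficients around the single shared $\bar{\lambda}$ --- which is what motivates its closing remark about Fisher-weighted merging removing that constraint. One implicit hypothesis you should state in a full write-up (the paper leaves it implicit too): $\theta_1 \neq \theta_2$, so that $\Delta_1, \Delta_2 > 0$, the $p^j$ form a probability law, and coordinates with $\theta_1^j=\theta_2^j$ (where $r^j$ is undefined) carry zero weight and drop out; likewise for $\hat{\mu}\in\{0,1\}$ the function $\phi$ is affine, both gaps vanish, and the bound degenerates consistently.
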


\begin{proof}
	This novel proof is in three steps.
	First, we find $\hat{\theta}$ maximizing $R_{\hat{\mu}}(\theta)$ for $\theta$ on the full set of weights $\Theta$.
	Second, we find $\bar{\lambda}$ maximizing $R_{\hat{\mu}}\left(\left(1 - \lambda\right) \cdot \theta_1 + \lambda \cdot \theta_2\right)$ for $\lambda \in [0, 1]$ and thus defining the best interpolation between the expert weights.
	Finally, we bound $\Delta R_{\hat{\mu}}$, the differences between their rewards, by applying the Bhatia-Davis inequality.

	\paragraph{First step.} Let's first find the maximum of $R_{\hat{\mu}}$ on $\Theta$.
	Denoting $S = (1 - \hat{\mu}) \times R_1 (\theta_1) + \hat{\mu} \times R_2(\theta_2)$, we have for all $\theta \in \Theta$:
	\begin{align}
		R_{\hat{\mu}} (\theta) & = S - \sum_{j=1}^d \left( (1 - \hat{\mu}) \eta_1^j \left(\theta^j - \theta_1^j\right)^2 + \hat{\mu} \eta_2^j  \left(\theta^j - \theta_2^j\right)^2 \right)
	\end{align}
	Since $R_{\hat{\mu}}$ is a sum of concave quadratic functions, it has a unique global maximum reached at a point we note $\hat{\theta}=\left(\hat{\theta}^1, \mydots, \hat{\theta}^d\right)$.
	The global maximum can be computed by differentiating $R_{\hat{\mu}}$ with respect to each variable $\theta^j$, which gives:
	$$\hat{\theta}^j = \left(1-\hat{\lambda}^j\right) \cdot \theta_1^j + \hat{\lambda}^j \cdot \theta_2^j$$ where the interpolating coefficients per dimension $\hat{\lambda}^j$ are defined for $j\in\{1, \mydots, d\}$ as:
	\begin{equation}\label{nudef}
		\hat{\lambda}^j = \frac{\hat{\mu}\eta_2^j}{(1 -\hat{\mu}) \eta_1^j + \hat{\mu}\eta_2^j} \in [0, 1].
	\end{equation}

	\paragraph{Second step.} With $\lambda \in [0,1]$ and $\theta = (1 - \lambda) \cdot \theta_1 + \lambda \cdot \theta_2$, we can write $R_{\hat{\mu}} (\theta)$ as a function of $\lambda$:
	\begin{align}\label{maineq}
		R_{\hat{\mu}} (\theta) & = S - \sum_{j=1}^d \left( \left(\left(1-\hat{\mu}\right) \eta_1^j + \hat{\mu} \eta_2^j\right)\left(\lambda - \hat{\lambda}^j\right)^2 \nonumber + \frac{\hat{\mu}(1-\hat{\mu})\eta_1^j\eta_2^j}{(1 - \hat{\mu}) \eta_1^j + \hat{\mu} \eta_2^j} \right) \left( \theta_1^j - \theta_2^j \right)^2 \\
		                       & = R_{\hat{\mu}}(\hat{\theta}) - \sum_{j=1}^d p_j \left(\lambda - \hat{\lambda}^j\right)^2
	\end{align}
	where $p_j$ is defined as $p_j = \left(\left(1 - \hat{\mu}\right) \eta_1^j + \hat{\mu} \eta_2^j\right) \left(\theta_1^j - \theta_2^j\right)^2$.

	From \Cref{maineq}, we can compute the maximum reward obtainable for weight averaging $\max_{\lambda \in [0, 1]} R_{\hat{\mu}}\left(\right(1 - \lambda\left) \cdot \theta_1 + \lambda \cdot \theta_2\right)$. Since the function $\lambda \mapsto R_{\hat{\mu}}\left(\left(1-\lambda\right) \cdot \theta_1+ \lambda \cdot \theta_2\right)$ is a concave quadratic function, there is a unique value $\bar{\lambda}$ maximizing $R_{\hat{\mu}}$ equal to
	\begin{align}
		\label{mu_expect}
		\bar{\lambda} = \frac{\sum_{j=1}^d p_j \hat{\lambda}^j}{\sum_{j=1}^d p_j}.
	\end{align}

	Since all $p_j$ are positive and all $\hat{\lambda}^j$ are between $0$ and $1$, $\bar{\lambda}$ is also between $0$ and $1$. Therefore, $R_{\hat{\mu}}\left((1-\bar{\lambda}) \cdot \theta_1 + \bar{\lambda} \cdot \theta_2\right)$ is indeed the maximum reward for rewarded soups.

	\paragraph{Third step.} Applying \Cref{maineq} to $\bar{\lambda}$ gives:
	\begin{align}
		\Delta R_{\hat{\mu}} & =R_{\hat{\mu}}(\hat{\theta}) - R_{\hat{\mu}}\left((1-\bar{\lambda})\cdot\theta_1 + \bar{\lambda}\cdot\theta_2\right)                     \\
		                     & = \sum_{j=1}^d p_j \left(\bar{\lambda} - \hat{\lambda}^j\right)^2                                                                        \\
		                     & =  \left(\sum_{j=1}^d \frac{p_j}{\sum_{i=1}^n p_i} \left(\bar{\lambda} - \hat{\lambda}^j\right)^2\right) \left( \sum_{j=1}^n p_j \right)
		\label{eq:deltar}
	\end{align}

	The second term in \Cref{eq:deltar} can be simplified as:
	\begin{align}
		\sum_{j=1}^d p_j = (1 - \hat{\mu}) \Delta_1 + \hat{\mu} \Delta_2.
	\end{align}

	The core component of this proof is the upper bounding of the first term in \Cref{eq:deltar}. The key idea is to recognize the variance of a discrete random variable $\Lambda$ with ${\mathbb{P}(\Lambda = \hat{\lambda}_i) = \frac{p_i}{\sum_{j=1}^n p_j}}$; then, $\bar{\lambda}$ from \Cref{mu_expect} is actually the expectation of $\Lambda$. Then, we can apply the \textbf{Bhatia-Davis inequality}, as recalled in \Cref{eq:bhatia}, on the variance of a bounded random variable $a\leq \Lambda \leq b$:
	\begin{align}
		Var(\Lambda) \leq \left(b - \mathbb{E}(\Lambda)\right)\left(\mathbb{E}(\Lambda)-a\right)
		\label{eq:bhatia}
	\end{align}

	Therefore \Cref{eq:deltar} is bounded by:
	\begin{align}
		\Delta R_{\hat{\mu}} \leq \left(\max_{1\leq j \leq d} \hat{\lambda}^j - \bar{\lambda}\right)\left(\bar{\lambda} - \min_{1\leq j \leq d} \hat{\lambda}^j\right) \left(\left(1 - \hat{\mu}\right) \Delta_1 + \hat{\mu} \Delta_2\right).
	\end{align}

	Now, we bound the variables $\hat{\lambda}^j$, since $1/M \leq \eta_1^j / \eta_2^j \leq M$. Then for all $j$ we have:
	\begin{align}
		\frac{\hat{\mu}}{(1 - \hat{\mu})M + \hat{\mu}} \leq \hat{\lambda}^j \leq \frac{\hat{\mu}M}{(1 - \hat{\mu}) + \hat{\mu}M},
	\end{align}
	and thus:
	\begin{align}
		\Delta R_{\hat{\mu}} \leq \left(\frac{\hat{\mu}M}{1 + \hat{\mu}(M-1)} - \bar{\lambda}\right)\left(\bar{\lambda} - \frac{\hat{\mu}}{M - \hat{\mu}(M-1)} \right) \left(\left( 1 - \hat{\mu}\right) \Delta_1 + \hat{\mu} \Delta_2\right).
		\label{eq:finaldelatrmu}
	\end{align}

	Finally, noting that $\Delta_i=\sum_{j=1}^d \eta_i^j \left(\theta_2^j - \theta_1^j\right)^2$, we deduce from \Cref{mu_expect} that $\bar{\lambda} = \frac{\hat{\mu} \Delta_2}{(1 - \hat{\mu}) \Delta_1 + \hat{\mu} \Delta_2}$. Replacing this in the previous \Cref{eq:finaldelatrmu} gives the final \Cref{eq:mainlemma3}, concluding the proof.
\end{proof}

\begin{remark}
	As a final remark, please note that the suboptimality of RS comes from the need of having one single interpolating coefficient $\bar{\lambda}$ for all $d$ parameters $(\theta^1, \mydots, \theta^d)$ of the network.
	Yet, the advanced merging operations in \cite{mergefisher21} remove this constraint, with interpolating coefficients proportional to the eigenvalues of the Fisher matrices \cite{fisher1922mathematical}, which actually approximate the eigenvalues of the Hessian \cite{schraudolph2002fast,interplayinfomatrix2020}.
	Combining \cite{mergefisher21} and our RS is a promising research direction, the key issue being the computation of the Fisher matrices \cite{NEURIPS2019_46a558d9} for networks with billions of parameters.%
\end{remark}
\subsubsection{Bound visualization}
We visualize in \Cref{fig:bound_viz} the bound given by \Cref{lemma2}. We show that for small values of $M$ like $M=2$, the value of $R_{\hat{\mu}}$ for RS is quite close to the global optimum.
Also, recall that RS theoretically matches this upper bound when $M=1$.
For larger values like $M=10$, the bound is less tight, and we note that the maximum value of $R_{\hat{\mu}}$ approaches the constant function 1 as $M \rightarrow \infty$.

\begin{figure}[h!]
	\begin{center}
	\includegraphics[width=\textwidth]{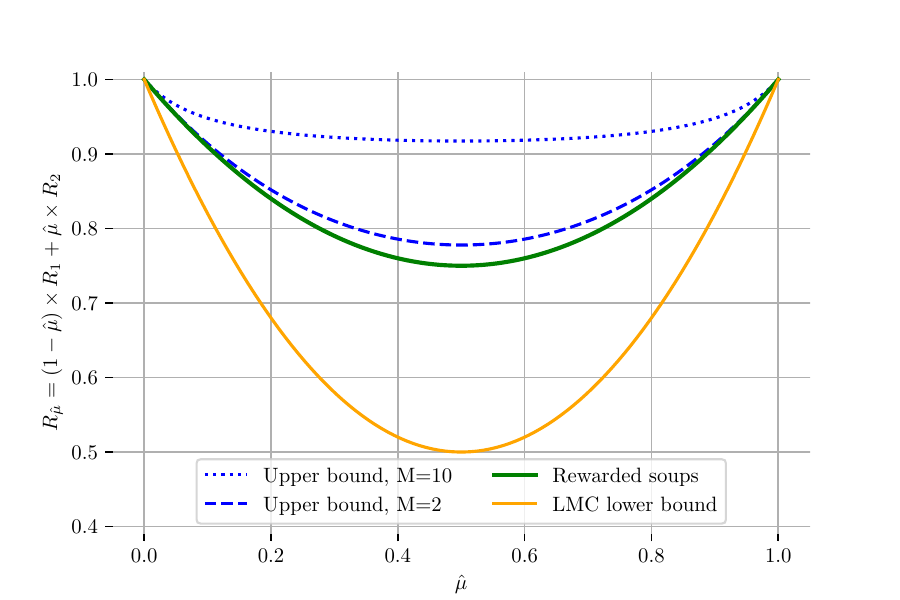}
	\caption{Illustration of the bound given by \Cref{lemma2} under \Cref{assumption:relaxed}. For simplicity, we showcase the case where $R_1(\theta_1) = R_2(\theta_2)=1$, $R_1(\theta_2) = R_2(\theta_1)=0$, thus $\Delta_1 = \Delta_2 = 1$. In green, we plot the rewards obtained with rewarded soups for the optimal $\bar{\lambda}$, \ie $R_{\hat{\mu}}\left((1-\bar{\lambda})\cdot \theta_1 + \bar{\lambda}\cdot\theta_2\right)$, whose value is independent of $M$ in this case. In blues, we plot the maximum value of $\R_{\hat{\mu}}$ given by \Cref{eq:mainlemma3simplified} in \Cref{lemma2}, for $M=2$ and $M=10$. For reference, we also plot the values for the lower bound in the LMC \Cref{hyp:lmc}, \ie equal to $(1-\hat{\mu}) (1-\bar{\lambda}) R_1(\theta_1) + \hat{\mu} \bar{\lambda} R_2(\theta_2)$. As RS outperforms this lower bound, it validates \Cref{hyp:lmc} in this case.}
	\label{fig:bound_viz}
	\end{center}
\end{figure}
\clearpage
\subsection{Similarity between weight interpolation and functional ensembling}
\begin{lemma}[$\lambda$-interpolation of weights approximates the $\lambda$-ensembling of predictions. Adapted from \cite{Wortsman2022ModelSA,rame2022diwa,izmailov2018}.]
	Given $\theta_1$ and $\theta_2$ optimized for $R_1$ and $R_2$ \sut they remain close, \ie $\left\|\theta_1-\theta_{2}\right\|_2 \approx 0$.
	Denoting $\theta_{\lambda}$ the interpolated weights ${\theta_{\lambda}=(1-\lambda) \cdot \theta_1 + \lambda \cdot \theta_2}$
	and $f_{\lambda}$ the ensembling of predictions ${f_{\lambda}(\cdot)=(1-\lambda) \cdot f(\cdot, \theta_1) + \lambda \cdot f(\cdot, \theta_2)}$:%
	\begin{equation*}
		f(\cdot, \theta_{\lambda}) \approx f_{\lambda}(\cdot)
	\end{equation*}%
	and for $k \in \{1, 2\}$:%
	\begin{equation*}
		R_k\left(f(\cdot, \theta_{\lambda})\right) \approx R_k\left(f_{\lambda}(\cdot)\right)
	\end{equation*}
	\label{lemma:ens}
\end{lemma}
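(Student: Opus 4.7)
}
The plan is to perform a first-order Taylor expansion of $f(\cdot,\theta)$ in its second argument around the convex combination $\theta_\lambda=(1-\lambda)\theta_1+\lambda\theta_2$ and observe that the linear terms cancel exactly by convexity. Concretely, I would write, for each fixed input, the expansions
\begin{align*}
f(\cdot,\theta_1) &= f(\cdot,\theta_\lambda) + \nabla_\theta f(\cdot,\theta_\lambda)^\top(\theta_1-\theta_\lambda) + O(\|\theta_1-\theta_\lambda\|^2),\\
f(\cdot,\theta_2) &= f(\cdot,\theta_\lambda) + \nabla_\theta f(\cdot,\theta_\lambda)^\top(\theta_2-\theta_\lambda) + O(\|\theta_2-\theta_\lambda\|^2),
\end{align*}
assuming $f(\cdot,\theta)$ is twice differentiable in $\theta$ (which is standard for the smooth architectures discussed in the paper, modulo the usual almost-everywhere caveat for ReLU).

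Taking the $\lambda$-convex combination of these two expansions, the gradient term becomes $\nabla_\theta f(\cdot,\theta_\lambda)^\top\bigl((1-\lambda)(\theta_1-\theta_\lambda)+\lambda(\theta_2-\theta_\lambda)\bigr)$, and by construction of $\theta_\lambda$ this inner vector vanishes identically. One is therefore left with
\begin{equation*}
f_\lambda(\cdot) = f(\cdot,\theta_\lambda) + O\bigl(\max(\|\theta_1-\theta_\lambda\|^2,\|\theta_2-\theta_\lambda\|^2)\bigr),
\end{equation*}
and since both $\|\theta_1-\theta_\lambda\|=\lambda\|\theta_1-\theta_2\|$ and $\|\theta_2-\theta_\lambda\|=(1-\lambda)\|\theta_1-\theta_2\|$ are $O(\|\theta_1-\theta_2\|)$, the remainder is $O(\|\theta_1-\theta_2\|^2)$. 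Under the closeness hypothesis $\|\theta_1-\theta_2\|_2\approx 0$ this gives the first display in the lemma.

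To transfer the approximation from the predictions to the rewards, I would assume $R_k$ is continuous (Lipschitz suffices, which is reasonable for the reward models and statistical metrics considered in the paper), so that $R_k(f(\cdot,\theta_\lambda))-R_k(f_\lambda(\cdot))$ is controlled by the same $O(\|\theta_1-\theta_2\|^2)$ quantity, yielding the second display. The only subtlety I anticipate, and therefore the main step to handle carefully, is the smoothness requirement on $f$ with respect to $\theta$: strictly speaking, architectures with ReLUs or attention softmaxes are only piecewise smooth, so the Taylor argument must be stated in an almost-everywhere sense or after a mild smoothing, and I would flag this as an informal justification rather than a fully rigorous theorem, in line with the fact that the statement is phrased with ``$\approx$'' rather than a precise bound.
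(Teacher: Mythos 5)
Your proposal is correct and follows essentially the same route as the paper's proof: a first-order Taylor expansion of $f(\cdot,\theta)$ around $\theta_\lambda$ whose linear terms cancel under the $\lambda$-convex combination, leaving an $O\left(\left\|\theta_1-\theta_2\right\|_2^2\right)$ remainder, followed by a zeroth-order expansion (equivalently, your local Lipschitz assumption) to transfer the approximation to $R_k$. Your explicit flagging of the piecewise-smoothness caveat for ReLU/attention architectures is a reasonable refinement that the paper leaves implicit.
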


\begin{proof}
	This proof follows \cite{rame2022diwa} and has two components.
	\paragraph{Functional approximation.}
	First, we perform a Taylor expansion at the first order of the models' predictions \wrt parameters $\theta$ for $x \in T$:
	\begin{align*}
		f(x, \theta_1) & =f(x, \theta_\lambda) + \nabla_{\theta} f(x, \theta_\lambda)^\intercal \left(\theta_1-\theta_\lambda\right) + \gO\left(\left\|\theta_1-\theta_\lambda\right\|_2^2\right)                    \\
		               & =f(x, \theta_\lambda) + \nabla_{\theta} f(x, \theta_\lambda)^\intercal \left(\lambda \cdot \theta_1 - \lambda \cdot \theta_2 \right) + \gO\left(\left\|\theta_1-\theta_2\right\|_2^2\right)
	\end{align*}
	and similarly:
	\begin{align*}
		f(x, \theta_2) & =f(x, \theta_\lambda) + \nabla_{\theta} f(x, \theta_\lambda)^\intercal \left(\left(\lambda-1\right) \cdot \theta_1 + \left(1- \lambda\right) \cdot \theta_2 \right) + \gO\left(\left\|\theta_1-\theta_2\right\|_2^2\right)
	\end{align*}
	Then by $\lambda$-weighted sum over $i$, the term multiplying $\nabla_{\theta} f(x, \theta_\lambda)^\intercal$ cancels out and we obtain:
	\begin{align}
		f_{\lambda}(x) = (1-\lambda) \cdot f(x, \theta_1) + \lambda \cdot f(x, \theta_2) = f(x, \theta_\lambda) + \gO\left(\left\|\theta_1-\theta_2\right\|_2^2\right).
		\label{eq:diff_fens_fwa}
	\end{align}
	\paragraph{Reward approximation.}
	Second, we obtain the reward approximation with a Taylor expansion at the zeroth order of the reward $R_k$ for $k \in \{1, 2\}$ and injecting \Cref{eq:diff_fens_fwa}:
	\begin{align*}
		R_k\left(f_{\lambda}(x)\right) & =R_k\left(f(x, \theta_{\lambda})(x)\right)+\gO\left(\left\|f_{\lambda}(x)-f(x, \theta_{\lambda})\right\|_2\right) \\
		                               & =R_k\left(f(x, \theta_{\lambda})(x)\right)+\gO\left(\left\|\theta_1-\theta_2\right\|_2^2\right).
	\end{align*}
	We obtain the results when $\theta_1$ and $\theta_2$ remain close, \ie when we can ignore the $\gO$ term.
\end{proof}

\section{Text-to-text: LLaMA with diverse RLHFs}
\label{app:nlp}
\subsection{Experimental details}
We summarize the key implementation details of our text-to-text generation experiments in \Cref{tab:expe_details_nlp}.
The pre-trained network is LLaMA-7b \cite{touvron2023llama}; then low-rank adapters \cite{hu2022lora} were fine-tuned on Alpaca \cite{alpaca2023} to follow instructions. We eventually fine-tune via PPO on the different considered tasks.
Our code is adapted from \cite{trlpeft2023}; we kept most of their hyperparameter values, only dividing by 2 the batch size to fit in our GPU and extending the output length.
For each task, we consider existing open-source datasets\footnote{For example, the TL;DR dataset is a previously existing dataset extracted and obtained by \cite{volske2017tl} that contains preprocessed comments posted on the social network Reddit and hosted on HuggingFace.} and available reward models, that we download from \href{https://huggingface.co/models}{HuggingFace}.
Regarding the reward models, in summarization tasks, $R_1$ was open-sourced in an effort to reproduce the Summarize from Human Feedback paper \cite{stiennon2020learning}, while $R_2$ \cite{chen2021improving} aimed at improved \enquote{faithfulness in abstractive summarization with contrast candidate generation}.
For other dialog tasks, we mostly rely on different reward models from \href{https://open-assistant.io/}{OpenAssistant} \cite{kopf2023openassistant}; though they all aim at evaluating whether an answer is adequate given a question, they differ in their predictions due to differences in their architecture and training procedures.
In practice, we leverage these reward models as block-box classification pipelines, implemented in the transformers library \cite{wolf-etal-2020-transformers}.
\begin{table}[b]%
	\centering%
	\caption{LLaMA with RLHF experiments: key implementation details.}%
	\centering
	\resizebox{1.0\textwidth}{!}{%
		\begin{tabular}{cc}%
			\toprule
			\multicolumn{2}{c}{\textbf{Model}}                                                                                                                                                                             \\
			\midrule
			Architecture         & Transformer \cite{NIPS2017_3f5ee243}                                                                                                                                                    \\
			Pre-training         & LLaMA-7b \cite{touvron2023llama}                                                                                                                                                        \\
			Instruction FT       & Alpaca \cite{alpaca2023}                                                                                                                                                                \\
			\midrule
			\multicolumn{2}{c}{\textbf{RL procedure}}                                                                                                                                                                      \\
			\midrule
			Fine-tuning strategy & LoRA \cite{hu2022lora}                                                                                                                                                                  \\
			                     & \textit{following Alpaca-LoRA \cite{alpacalora2023}}                                                                                                                                    \\
			LoRA alpha           & 16                                                                                                                                                                                      \\
			LoRA dropout         & 0.05                                                                                                                                                                                    \\
			                     & \textit{following trl-peft \cite{vonwerra2022trl,trlpeft2023}}                                                                                                                          \\
			Optimizer            & Adam \cite{kingma2014adam}                                                                                                                                                              \\
			Learning rate        & 1.41e-5                                                                                                                                                                                 \\
			Batch size           & 128                                                                                                                                                                                     \\
			Output length        & Uniformly sampled between 16 and 32                                                                                                                                                     \\
			RL algorithm         & PPO \cite{schulman2017proximal}                                                                                                                                                         \\
			KL PPO               & 0.05 for summary tasks else 0.2                                                                                                                                                         \\
			Epochs               & 2 for Reuter summary else 1                                                                                                                                                             \\
			Hardware             & NVIDIA RTX A6000 49 Go                                                                                                                                                                  \\
			Compute budget       & 4000 GPUh                                                                                                                                                                               \\
			\midrule
			Task name            & \textbf{Reuter summary}                                                                                                                                                                 \\
			Description          & Generate a concise and clear summary of newspaper articles from Reuters.                                                                                                                \\
			Prompt               & \enquote{Generate a one-sentence summary of this post.}                                                                                                                                 \\
			Dataset              & Reuter news from \cite{ahmed2017detecting,ahmed2018detecting} from \href{https://huggingface.co/datasets/argilla/news-summary}{news-summary}                                            \\
			$R_1$                & \href{https://huggingface.co/Tristan/gpt2_reward_summarization}{gpt2-reward-summarization} trained \href{https://github.com/lvwerra/trl/tree/main/examples/summarization}{here}.        \\
			$R_2$                & \href{https://huggingface.co/CogComp/bart-faithful-summary-detector}{bart-faithful-summary-detector} \cite{chen2021improving}                                                           \\
			Figure               & \Cref{fig:pareto_nlp_summarynews,fig:pareto_nlp_summarynews_full}                                                                                                                       \\
			\midrule
			Task name            & \textbf{Reddit TL;DR summary}                                                                                                                                                           \\
			Description          & Generate a concise and clear summary of posts from Reddit across a variety of topics (subreddits).                                                                                      \\
			Prompt               & \enquote{Generate a one-sentence summary of this post.}                                                                                                                                 \\
			Dataset              & Reddit crawl from the TL;DR dataset \cite{volske2017tl} from \href{https://huggingface.co/datasets/openai/summarize_from_feedback}{summarize-from-feedback} \cite{stiennon2020learning} \\
			$R_1$                & \href{https://huggingface.co/Tristan/gpt2_reward_summarization}{gpt2-reward-summarization} trained \href{https://github.com/lvwerra/trl/tree/main/examples/summarization}{here}.        \\
			$R_2$                & \href{https://huggingface.co/CogComp/bart-faithful-summary-detector}{bart-faithful-summary-detector} \cite{chen2021improving}                                                           \\
			Figure               & \Cref{fig:pareto_nlp_summary}                                                                                                                                                           \\
			\midrule
			Task name            & \textbf{Stack Exchange}                                                                                                                                                                 \\
			Description          & Answer accurately to technical questions from Stack Exchange.                                                                                                                           \\
			Prompt               & No prompt, only users' questions.                                                                                                                                                       \\
			Dataset              & Q\&A from Stack Exchange \cite{h4stackexchange,beeching2023stackllama} from \href{https://huggingface.co/datasets/HuggingFaceH4/stack-exchange-preferences}{stack-exchange-preferences} \\
			$R_1$                & \href{https://huggingface.co/OpenAssistant/reward-model-deberta-v3-base}{reward-model-deberta-v3-base}                                                                                  \\
			$R_2$                & \href{https://huggingface.co/OpenAssistant/reward-model-electra-large-discriminator}{reward-model-electra-large-discriminator}                                                          \\
			Figure               & \Cref{fig:pareto_nlp_stack}                                                                                                                                                             \\
			\midrule
			Task name            & \textbf{Movie review}                                                                                                                                                                   \\
			Description          & Generate movie reviews that accurately describe a movie.                                                                                                                                \\
			Prompt               & \enquote{Generate a movie review.}                                                                                                                                                      \\
			Dataset              & IMDB reviews \cite{maas-EtAl:2011:ACL-HLT2011} from \href{https://huggingface.co/datasets/imdb}{IMDB}                                                                                   \\
			$R_1$                & \href{https://huggingface.co/OpenAssistant/reward-model-deberta-v3-base}{reward-model-deberta-v3-base}                                                                                  \\
			$R_2$                & \href{https://huggingface.co/OpenAssistant/reward-model-electra-large-discriminator}{reward-model-electra-large-discriminator}                                                          \\
			Figure               & \Cref{fig:pareto_nlp_review}                                                                                                                                                            \\
			\midrule
			Task name            & \textbf{Helpful assistant}                                                                                                                                                              \\
			Description          & Provide helpful and harmless answers to potentially complex and sensitive questions.                                                                                                    \\
			Prompt               & No prompt, only users' questions.                                                                                                                                                       \\
			Dataset              & Helpfulness and harmlessness datasets \cite{bai2022training} from \href{https://huggingface.co/datasets/Anthropic/hh-rlhf}{hh-rlhf}                                                     \\
			$R_1$                & \href{https://huggingface.co/OpenAssistant/reward-model-deberta-v3-large-v2}{reward-model-deberta-v3-large-v2}                                                                          \\
			$R_2$                & \href{https://huggingface.co/OpenAssistant/reward-model-electra-large-discriminator}{reward-model-electra-large-discriminator}                                                          \\
			$R_3$                & \href{https://huggingface.co/theblackcat102/reward-model-deberta-v3-base-v2}{reward-model-deberta-v3-base-v2}                                                                           \\
			$R_4$                & \href{https://huggingface.co/OpenAssistant/reward-model-deberta-v3-base}{reward-model-deberta-v3-base}                                                                                  \\
			Figure               & \Cref{fig:pareto_nlp_assistant,fig:spider_assistant}                                                                                                                                    \\
			\bottomrule
		\end{tabular}
	}
	\label{tab:expe_details_nlp}
\end{table}%
\FloatBarrier
\subsection{Inspection of the predictions by the interpolated models}

This supplementary material includes multiple visualizations of predictions generated by models with interpolated weights.
For example, \Cref{table:nlpgenerations} provides generated summaries, that remain grammatically coherent.
We include more generated samples on this \href{\urlwebsite}{website}.

To quantitatively validate that samples generated by interpolated models do not suffer from reduced quality, we provide evaluations with general-purpose quality metrics.
Specifically, in \Cref{fig:nlpgenerationsv2}, we measure the perplexity (of MLMS \cite{salazar2019masked} or of GPT2 \cite{radford2019language}) for summaries generated when $\lambda$-interpolating between two LLMs fine-tuned on the two summary rewards.
We observe that, by interpolating the weights, we also interpolate the metrics.
In conclusion, we confirm quantitatively that RS does not deteriorate quality.

\begin{table}[h]%
	\caption{Examples of generated summaries about vampire vigilantes in Malawi news, for $(1-\lambda)\cdot \theta_1 + \lambda \cdot \theta_2$ interpolating between $\theta_1$ and $\theta_2$ fine-tuned on $R_1$ (evaluating completeness) and $R_2$ (evaluating faithfulness).
	}
	\centering
	\adjustbox{width=1.0\textwidth}{
		\begin{tabular}{cc}\toprule
			$\lambda$ & Generation                                                                                                                                                  \\
			\midrule
			0.0       & Malawi vigilante mobs have been killing people suspected of being vampires, prompting the UN and US embassy to declare no-go zones in the affected'         \\
			0.1       & Malawi vigilante mobs have been killing people suspected of being vampires, prompting the UN and US embassy to declare no-go zones in the affected          \\
			0.2       & Malawi vigilante mobs have killed nine people since mid-September, prompting the United Nations and the US embassy to declare some parts of the country no- \\
			0.3       & Malawi vigilante mobs have killed nine people, prompting the UN and US embassy to declare parts of the country no-go zones due to widespread                \\
			0.4       & Malawi vigilante mobs have killed nine people, prompting the UN and US embassy to declare some parts of the country no-go zones.                            \\
			0.5       & Malawi vigilante mobs have arrested and killed suspected vampires, prompting the UN and US embassy to declare no-go zones and President Peter Muth          \\
			0.6       & Malawi vigilante mobs have arrested and killed suspected vampires, prompting the UN and US embassy to declare no-go zones.                                  \\
			0.7       & Malawi vigilante mobs have arrested suspected vampires, resulting in deaths and prompting the UN and US embassy to declare no-go zones.                     \\
			0.8       & Malawi vigilante mobs have arrested suspected vampires, resulting in deaths and violence.                                                                   \\
			0.9       & Malawi vigilante violence has caused widespread panic and death, prompting authorities to arrest suspected members and investigate the belief in vampirism. \\
			1.0       & Malawi vigilante violence has caused widespread panic and death.                                                                                            \\
			\bottomrule%
		\end{tabular}%
	}%
	\label{table:nlpgenerations}%
\end{table}%
\FloatBarrier

\begin{figure}[h!]
	\begin{center}
		\includegraphics[width=0.55\textwidth]{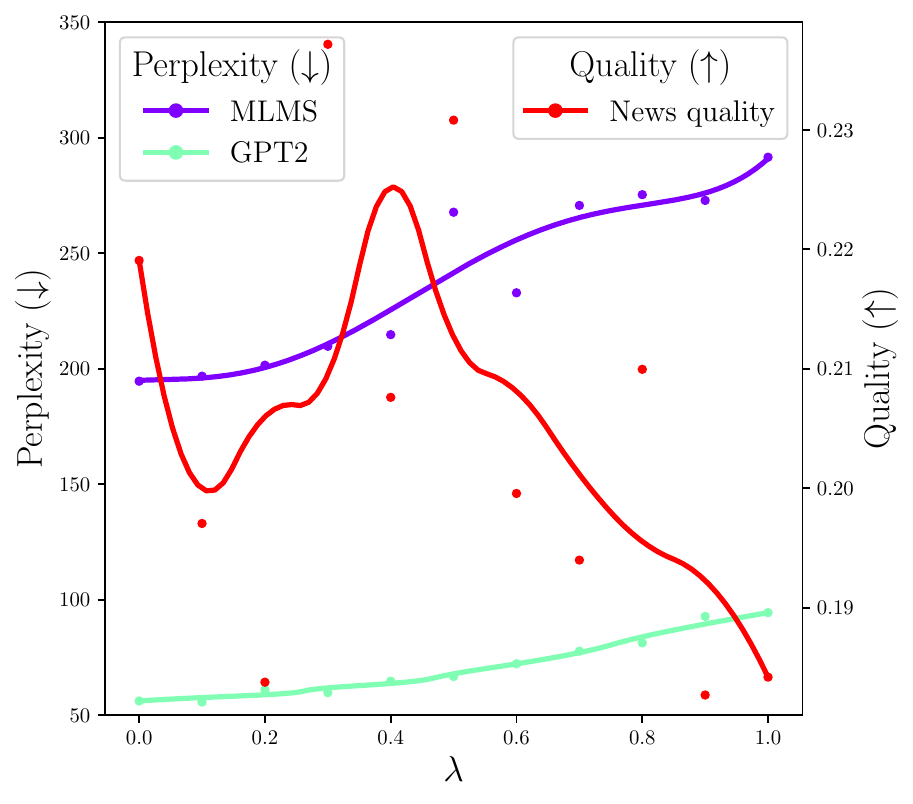}
		\caption{News summaries generated by \mbox{$\lambda$-interpolated} LLMs evaluated in terms of perplexity (by MLMS \cite{salazar2019masked} or by GPT2 \cite{radford2019language}) or news quality by this \href{https://huggingface.co/valurank/distilbert-quality}{classifier}.}
		\label{fig:nlpgenerationsv2}
	\end{center}
\end{figure}

\FloatBarrier
\newpage
\section{Image-to-text: captioning with diverse statistical rewards}
\label{app:captioning}

\subsection{Experimental details}

We summarize the key implementation details of our captioning experiments in \Cref{tab:expe_details_captioning}.
In short, we took the state-of-the-art network \cite{hu2022expansionnet} for captioning on COCO, fine-tuned with their code and only changed the reward.
In more details, since the \textit{self-critical} paper \cite{rennie2017self} (a variant of REINFORCE \cite{williams1992simple} with a specific estimation of the baseline score) it is now common in captioning to optimize the CIDEr reward \cite{vedantam2015cider} after a first step of supervised fine-training.
The recent ExpansionNetv2 \cite{hu2022expansionnet} follows this strategy to reach state-of-the-art results, with a Swin Transformer \cite{liu2021swinv2} visual encoder and a block static expansion for efficiency.
We investigate whether additional RL trainings on the other traditional statistical metrics can help.
We use the code from \cite{hu2022expansionnet} and their hyperparameters, only reducing the batch size from 24 to 18 to fit in our GPUs and consequently adapting the learning rate.

\begin{table}[h!]%
    \centering%
    \caption{Captioning experiments: key implementation details.}%
    \centering
    \resizebox{\textwidth}{!}{%
        \begin{tabular}{cc}%
            \toprule
            \multicolumn{2}{c}{\textbf{Model}}                                                                                                                                           \\
\midrule
            Architecture                & ExpansionNetv2 \cite{hu2022expansionnet}                                                                                                       \\
            Visual encoder              & Swin Transformer \cite{liu2021swinv2}                                                                                                          \\
            Visual encoder pre-training & ImageNet 22k \cite{imagenet_cvpr09}                                                                                                            \\
            Fine-tuning                 & Cross-entropy then CIDEr RL \cite{rennie2017self} on COCO \cite{lin2014microsoft}                                                              \\
            \midrule
            \multicolumn{2}{c}{\textbf{RL procedure}}
            \\
\midrule
            Fine-tuning strategy        & Usually frozen visual backbone, but end-to-end in \Cref{fig:analysis_captioning_e2e}                                                           \\
            RL algorithm                & Self-critical \cite{rennie2017self}, a variant of REINFORCE \cite{williams1992simple}                                                          \\
            Optimizer                   & Radam \cite{Liu2020On}                                                                                                                         \\
            Dataset                     & COCO \cite{lin2014microsoft} and Karpathy split \cite{karpathy2015deep}                                                                        \\
            Rewards                     & BLEU \cite{bleu2002} (with 1-gram or 4-grams), ROUGE \cite{lin2003automatic}, METEOR \cite{banerjee2005meteor}, CIDEr \cite{vedantam2015cider} \\
            Learning rate               & 1e-5                                                                                                                                           \\
            Batch size                  & 18                                                                                                                                             \\
            Gradient accumulation       & 2                                                                                                                                              \\
            Warmup                      & Anneal 0.8 during 1 epoch                                                                                                                      \\
            Epochs                      & 6                                                                                                                                              \\
            Hardware                    & GPU V100 32G                                                                                                                                   \\
            Compute budget              & 1500 GPUh                                                                                                                                     \\
            \bottomrule
        \end{tabular}
    }
    \label{tab:expe_details_captioning}
\end{table}%
\subsection{Additional results}
\label{app:captioning:additionalresults}
\begin{figure}[h!]
    \centering
    \begin{center}
        \begin{subfigure}[b]{0.325\textwidth}
            \includegraphics[width=1.0\textwidth]{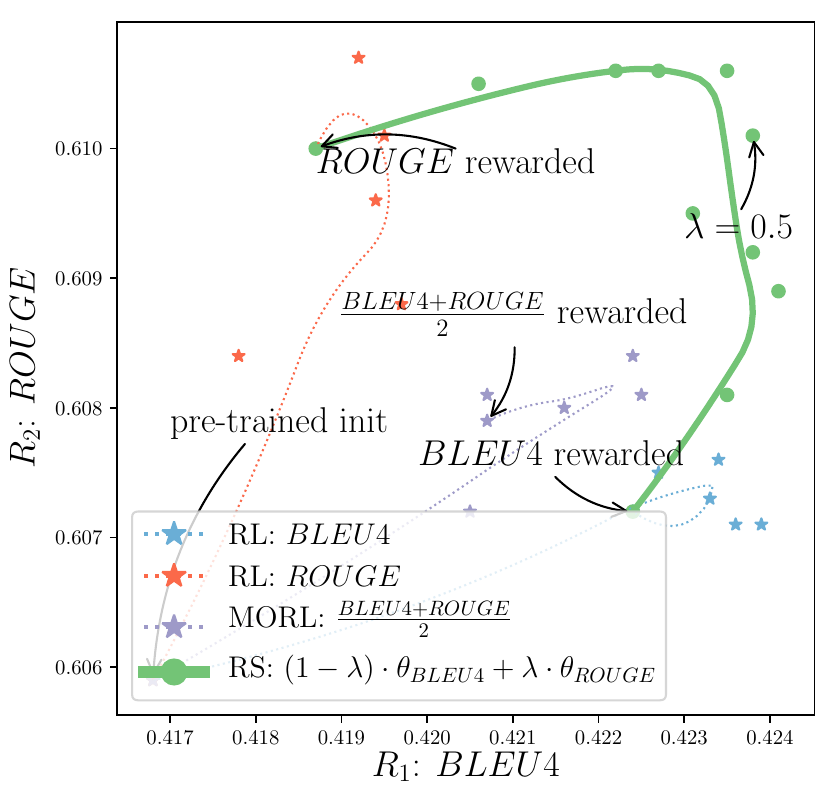}
            \caption{BLEU4 and ROUGE.}
            \label{fig:pareto_captioning_bleu4rouge}
        \end{subfigure}
        \hfill
        \begin{subfigure}[b]{0.325\textwidth}
            \includegraphics[width=1.0\textwidth]{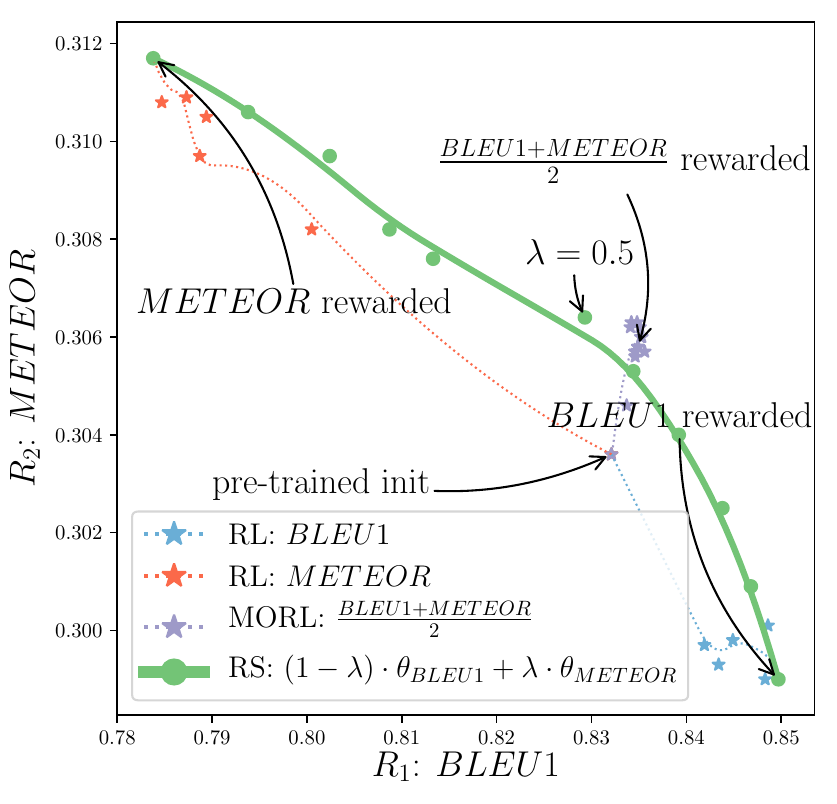}
            \caption{BLEU1 and METEOR.}
            \label{fig:pareto_captioning_bleu1meteor}
        \end{subfigure}
        \hfill
        \begin{subfigure}[b]{0.325\textwidth}
            \includegraphics[width=1.0\textwidth]{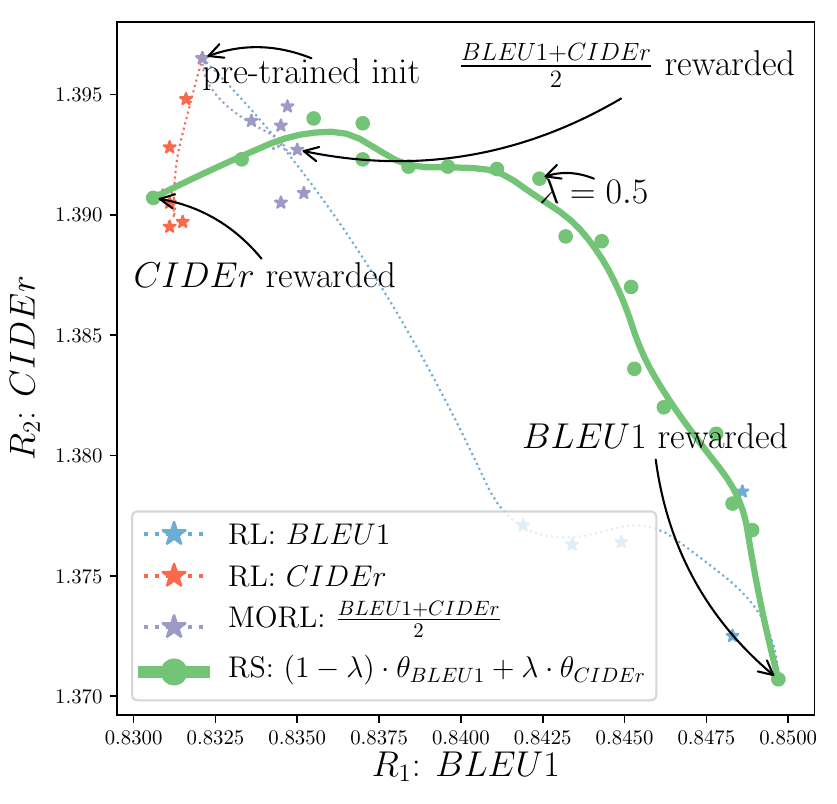}
            \caption{BLEU1 and CIDEr.}
            \label{fig:analysis_captioning_bleu1cider}
        \end{subfigure}
    \end{center}
    \caption{
        Additional results in captioning with more rewards, complementing \Cref{fig:pareto_captioning}.
        Specifically, \Cref{fig:pareto_captioning_bleu4rouge} uses $R_1=BLEU4$ and $R_2=ROUGE$; then, with $R_1=BLEU1$, \Cref{fig:pareto_captioning_bleu1meteor} uses  $R_2=METEOR$  and \Cref{fig:analysis_captioning_bleu1cider} uses $R_2=CIDEr$.
        In particular, the latter shows the failure when optimizing CIDEr; indeed, let's recall that the pre-trained initialization \cite{hu2022expansionnet} has already been trained by optimizing CIDEr \cite{rennie2017self}.
        Thus optimizing CIDEr a second time does not help, neither in CIDEr nor in other rewards.
        That's why in \Cref{fig:spider_captioning} we consider the initialization as the network parametrization optimized for CIDEr.}
    \label{fig:pareto_appendix_captioning}
\end{figure}

\begin{figure}[h!]
    \begin{center}
        \begin{subfigure}[b]{0.24\textwidth}
            \includegraphics[width=1.0\textwidth]{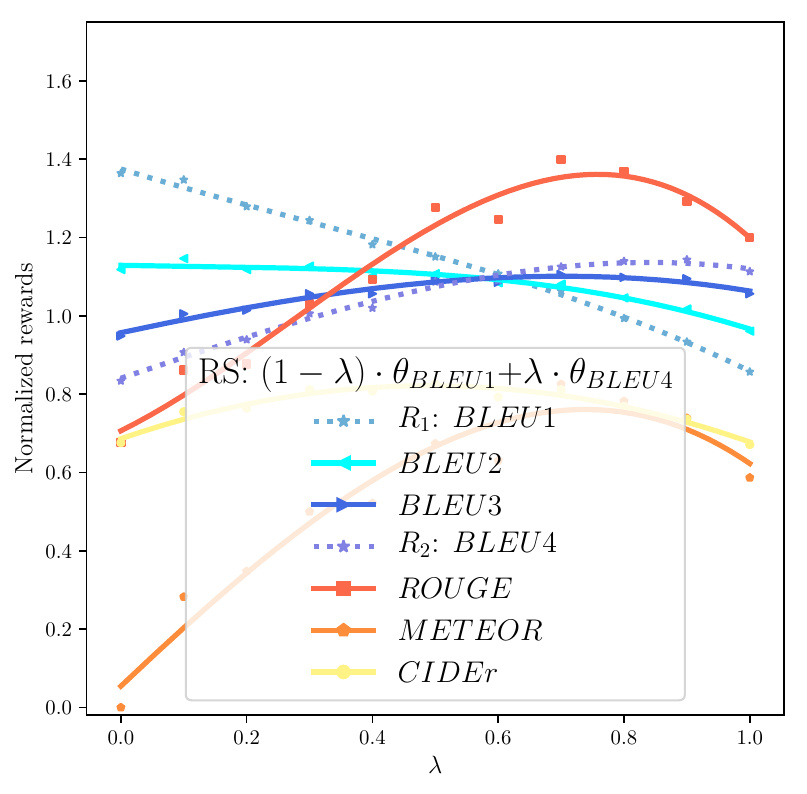}
            \caption{$R_2=BLEU4$.}
            \label{fig:lambda_captioning_bleu1bleu4}
        \end{subfigure}
        \hfill
        \begin{subfigure}[b]{0.24\textwidth}
            \includegraphics[width=1.0\textwidth]{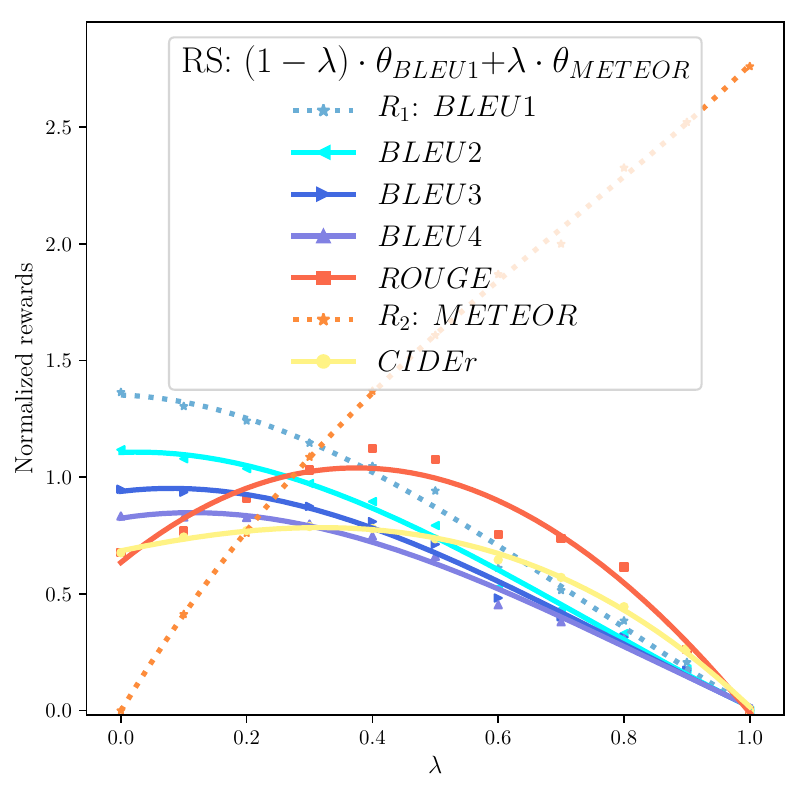}
            \caption{$R_2=METEOR$.}
            \label{fig:lambda_captioning_bleu1meteor}
        \end{subfigure}
        \hfill
        \begin{subfigure}[b]{0.24\textwidth}
            \includegraphics[width=1.0\textwidth]{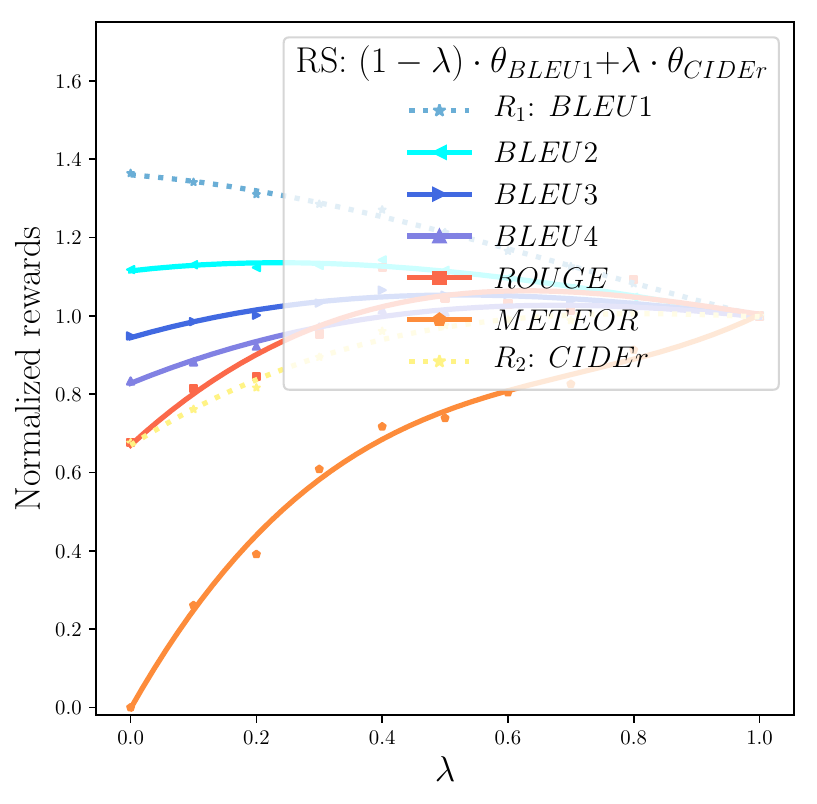}
            \caption{$R_2=CIDEr$.}
            \label{fig:lambda_captioning_bleu1cider}
        \end{subfigure}
        \hfill
        \begin{subfigure}[b]{0.24\textwidth}
            \includegraphics[width=1.0\textwidth]{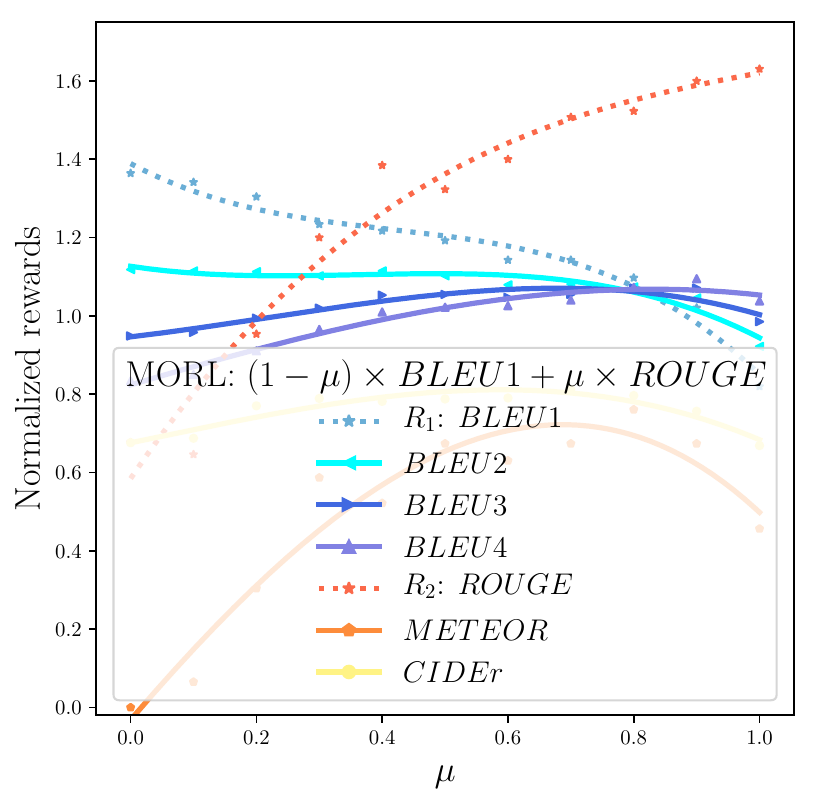}
            \caption{MORL.}
            \label{fig:lambda_captioning_morl_bleu1rouge}
        \end{subfigure}
    \end{center}
    \caption{Additional results in captioning when measuring performances on all rewards and varying the interpolating coefficients, complementing \Cref{fig:analysis_captioning_lambda}.
    In \Cref{fig:lambda_captioning_bleu1bleu4,fig:lambda_captioning_bleu1meteor,fig:lambda_captioning_bleu1cider}, we extend the results for RS with $R_1=BLEU1$ and for varying $R_2$; the optimal $\lambda$ depends on the similarity between the evaluation metric and $R_1$ and $R_2$.
    We also see in \Cref{fig:lambda_captioning_bleu1cider} that all rewards are normalized to $1$ for the CIDEr-initialization.
    In \Cref{fig:lambda_captioning_morl_bleu1rouge}, we perform the same analysis for MORL while varying the weighting $\mu$ over the proxy rewards $R_1=BLEU1$ and $R_2=ROUGE$; we recover similar curves than in \Cref{fig:analysis_captioning_lambda} for RS.}
    \label{fig:lambda_captioning}
    \begin{center}
        \begin{subfigure}[b]{0.45\textwidth}
        \centering
            \includegraphics[width=0.8\textwidth]{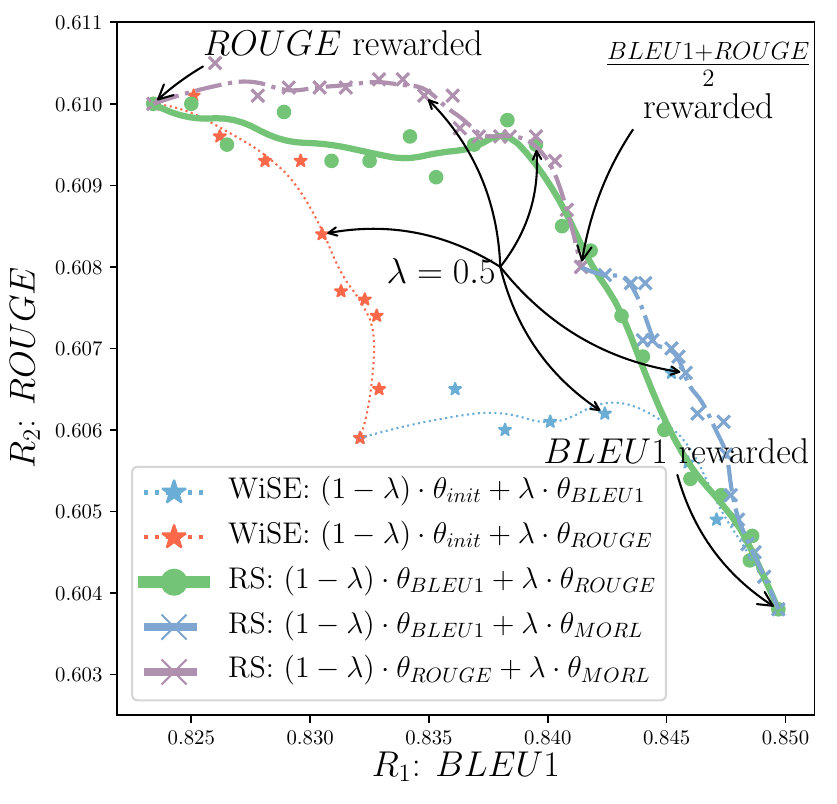}
            \caption{Exploring new WI strategies.}
            \label{fig:analysis_captioning_multi}
        \end{subfigure}
        \hfill
        \begin{subfigure}[b]{0.45\textwidth}
                \centering
            \includegraphics[width=0.8\textwidth]{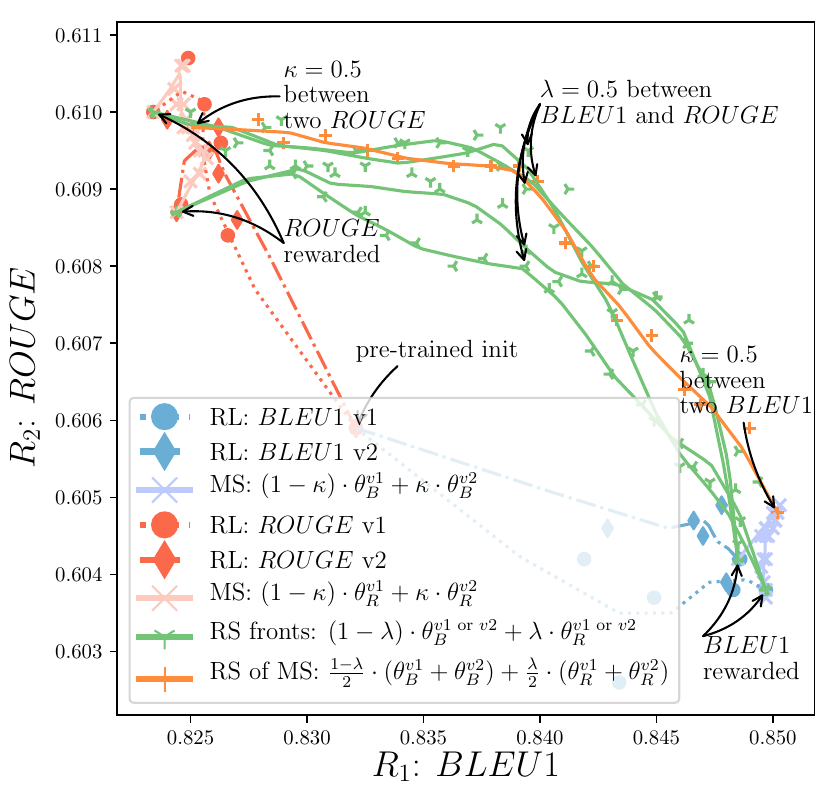}
            \caption{Results variances and model soups (MS).}
            \label{fig:analysis_captioning_soup}
        \end{subfigure}
    \end{center}
    \begin{center}
        \begin{subfigure}[b]{0.45\textwidth}
                \centering
            \includegraphics[width=0.8\textwidth]{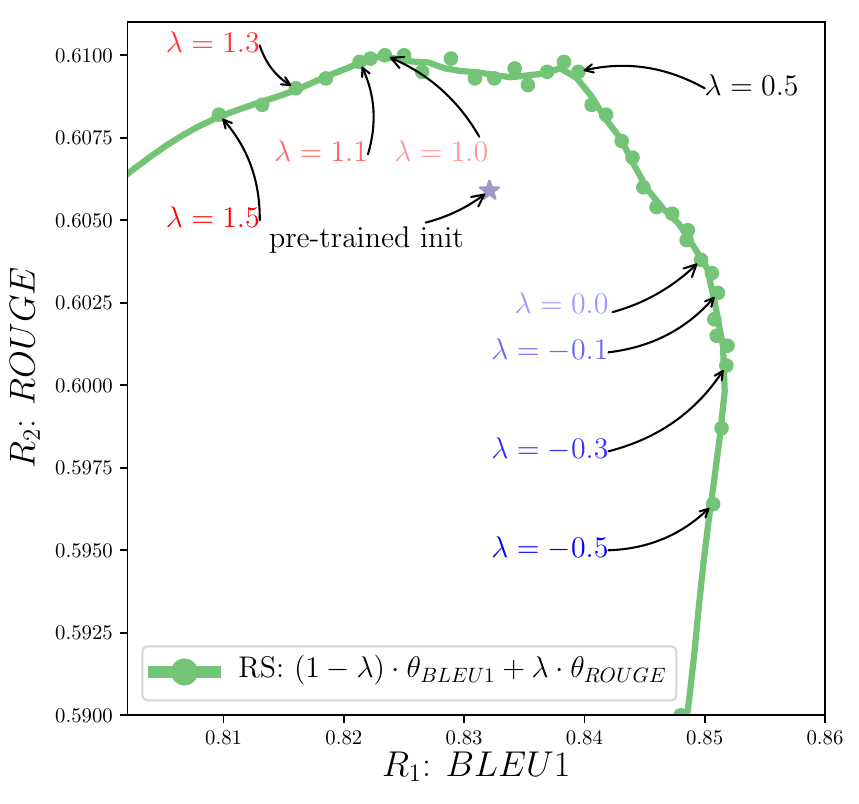}
            \caption{Extrapolation with $\lambda$ outside of $[0, 1]$.}
            \label{fig:analysis_captioning_extra}
        \end{subfigure}
        \hfill
        \begin{subfigure}[b]{0.45\textwidth}
                \centering
            \includegraphics[width=0.8\textwidth]{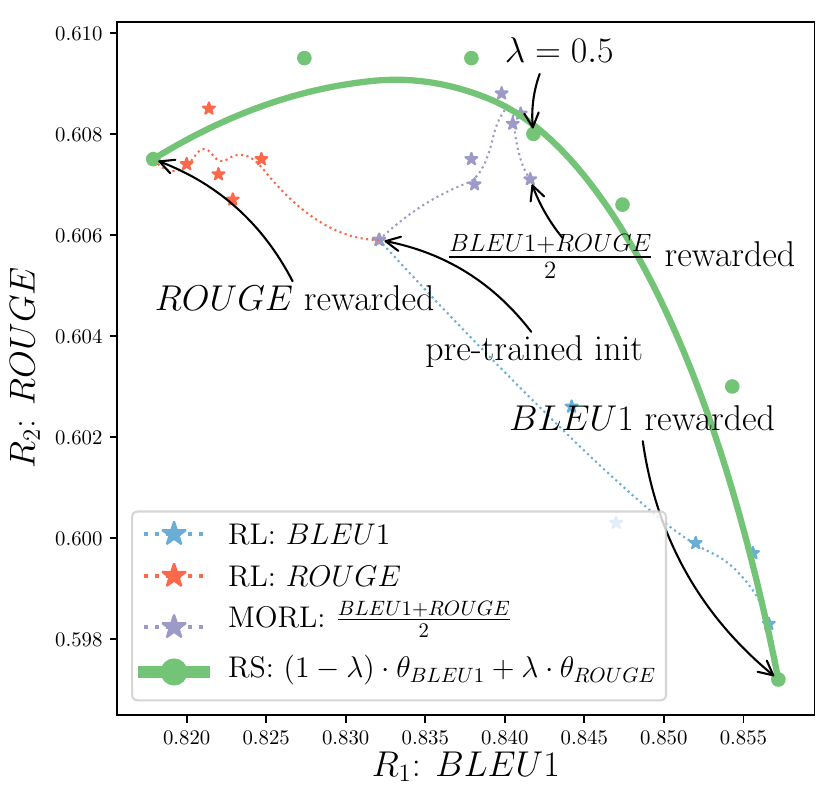}
            \caption{End-to-end training.}
            \label{fig:analysis_captioning_e2e}
        \end{subfigure}
    \end{center}
    \caption{Additional results in captioning with $R_1=BLEU1$ and $R_2=ROUGE$.
        In \Cref{fig:analysis_captioning_multi}, we investigate interpolating the fine-tuned networks with the pre-trained initialization as in WiSE~\cite{Wortsman2022robust}; this only reveals a small portion of the front. In contrast, the interpolation with $\theta_{MORL}$ ($\mu=0.5$) solution improves RS's front: this highlights some limitations in \Cref{hyp:pareto} and strict Pareto optimality of RS.
        Adding the MORL solutions as \textit{intermediate} weights may help interpolate between two weights too distant. This suggests some practical complementarity between RS and MORL; given a training budget larger than the number of rewards, one may learn a few MORL for varying $0\leq\mu\leq1$, and then interpolate the obtained solutions.
        \Cref{fig:analysis_captioning_soup} shows results' variance with two RL trainings for BLEU1, and two for ROUGE, each time with a different seed defining the data ordering and augmentations. Though we observe some randomness, the \Cref{hyp:lmc} is consistently validated.
        Moreover, it presents the fronts described when we interpolate weights fine-tuned on a shared reward, as in model soups (MS) \cite{Wortsman2022ModelSA,rame2022diwa}; it mostly reduces variance and reveals only a small portion of the spectrum of preferences, validating the need to fine-tune on different rewards (as proposed in RS) to reveal the front across the entire space of preferences.
        Finally, the orange line shows that RS and MS can be complementary, by $\lambda$-interpolating the MS for BLEU1 and the MS for ROUGE with $\kappa=0.5$.
        \Cref{fig:analysis_captioning_extra} presents the extrapolation results when $\lambda$ goes outside of $[0, 1]$.
        This suggests that we can artificially reduce a reward with negative coefficients, as studied in \cite{2022arXiv221204089I}.
        Finally, \Cref{fig:analysis_captioning_e2e} shows the results when the networks are trained end-to-end, rather than keeping the backbone frozen. This validates the efficiency of rewarded soups in a new more general setting where all layers are trainable.
    }%

    \label{fig:analysis_appendix_captioning}%
\end{figure}%

\FloatBarrier

\section{Text-to-image: diffusion models with diverse RLHFs}
\label{app:diffusion}

\subsection{Experimental details}
\textbf{Task description.}
Several works have studied the problem of aligning the output of diffusion models with human feedbacks \cite{lee2023aligning,wu2023better,xu2023imagereward}. Notably, diffusion models can be fine-tuned to match human aesthetic perception. As for any subjective metric, there is a variety of reward models capturing different aesthetics. In our experiments, the two first reward models were trained in a supervised setting to match human quality ratings collected on large image datasets. Specifically, the first $R_1$ is the \textit{ava} aesthetic model, available \href{https://github.com/christophschuhmann/improved-aesthetic-predictor/}{here}, trained on 250.000 images from the AVA dataset \cite{murray2012ava}, based on CLIP features. The second $R_2$ is the \textit{cafe} aesthetic model, available \href{https://huggingface.co/cafeai/cafe_aesthetic}{here}, trained on 3500 real-life and anime/manga images. Moreover, in \Cref{fig:pareto_diffusion_1}, we also consider a \textit{nsfw} detector, estimating the probability of an image being \textit{safe} by computing the cosine similarity with the CLIP embeddings of a set of \textit{unsafe} words, as already done to filter the LAION dataset \cite{schuhmann2021laion}.

\textbf{Implementation details.} We use a 2.2B parameters diffusion model trained on an internal dataset of 300M images, which reaches similar generation quality as Stable Diffusion \cite{rombach2022high} in terms of CLIP alignment and FID scores on prompts from the 5000 images of the COCO test dataset (CLIPScore 30.0 vs 30.2 for Stable Diffusion, FID 19.0 vs 19.1 for Stable Diffusion).
Given a reward model $R$, we first generate 10000 images with the pre-trained diffusion model on prompts from the COCO dataset, and compute the rewards for every generated image.
For computational efficiency, we keep only a dataset $\mathcal{D'}$ containing the 50\% images with the best scores, and rescale rewards $R$ linearly into $r$ so that $\min_{\mathbf{x}_0 \in \mathcal{D'}} r(x_0) = 0$ and $\frac{1}{|\mathcal{D}'|}\sum_{\mathbf{x}_0 \in \mathcal{D'}} r(x_0) = 1$.
Then, we \textbf{fine-tune the diffusion model} on the reward-weighted negative log-likelihood \cite{lee2023aligning}:
\begin{equation}
	\mathcal{L} = \mathbb{E}_{(\mathbf{x}_0, Q) \in \mathcal{D}, \epsilon \sim \mathcal{N}(0, 1), t \sim Uniform(0, T)} \quad r(\mathbf{x}_0) \times \Vert \epsilon_\theta(\mathbf{x}_t, t, Q) - \mathbf{\epsilon} \Vert^2,
\end{equation}
where $\epsilon_\theta$ is the noise estimation network, $T$ is the total number of training steps, $r(\mathbf{x}_0)$ is the rescaled reward of image $\mathbf{x}_0$ and $Q$ is the text associated to image $\mathbf{x}_0$. As a side note, on-policy RL would require performing loops of image generations and model fine-tunings \cite{dong2023raft}, but we only perform a single \textit{offline} iteration for simplicity. Moreover, for efficiency, we only fine-tune 10\% of the diffusion model's weights \cite{xie2023difffit} corresponding to the cross-attention layers and the bias/scaling parameters.
As further described in \Cref{tab:expe_details_diffusion}, we apply the Adam \cite{kingma2014adam} optimizer for 4000 steps with a batch size of 64 and a learning rate of 5e-6. To report results for each model (fine-tuned or interpolated via RS), we generate 1000 images from a held-out set of COCO prompts and then we average the scores given by the reward models. To reduce the variance in image generation, each prompt has a unique seed for all models, so that the input noise given to the diffusion model only depends on the text prompt.

\begin{table}[h!]%
	\centering%
	\caption{Image generation experiments: key implementation details.}%
	\centering
	\resizebox{0.7\textwidth}{!}{%
		\begin{tabular}{cc}%
			\toprule
			\multicolumn{2}{c}{\textbf{Model}}                                                            \\
			\midrule
			Architecture          & GLIDE (2.2B parameters)                                               \\
			Pre-training          & Internal dataset of 300M captioned images                             \\
			\midrule
			\multicolumn{2}{c}{\textbf{RL Procedure}}                                                     \\
			\midrule
			Fine-tuning objective & Reward-weighted diffusion loss                                        \\
			Fine-tuned parameters & Cross-attention layers and bias/scale                                 \\
			Optimizer             & Adam \cite{kingma2014adam}                                            \\
			Dataset               & Generated with COCO prompts                                           \\
			Rewards               & \textit{ava} \cite{murray2012ava} and \textit{cafe} and \textit{nsfw} \\
			Learning rate         & 5e-6                                                                  \\
			Batch size            & 64                                                                    \\
			Epochs                & 25                                                                    \\
			Hardware              & Single GPU V100 32G                                                   \\
			Compute budget        & 500 GPUh                                                              \\
			\bottomrule
		\end{tabular}
	}
	\label{tab:expe_details_diffusion}
\end{table}%
\subsection{Additional results}
\label{app:diffusion_results}
RS can trade-off between the two aesthetic rewards in \Cref{fig:pareto_diffusion_0}, allowing adaptation to the user's preferences at test time. Yet, we show some limitations in the spider map of \Cref{fig:pareto_diffusion_1}, when computing MORL and RS on all three rewards: \textit{ava}, \textit{cafe} and also the \textit{nsfw}. In this case, MORL has higher scores than RS. We speculate this is because the \textit{nsfw} is very different from aesthetic preferences. Actually, the \textit{nsfw} is inversely correlated with image quality: lower quality images result are less flagged as \textit{unsafe}.
This shows some limitations of weight interpolation when combining antagonist rewards.
An improved strategy would first learn the MORL of the $N=3$ rewards, and then optimize each reward independently from this improved initialization, before applying RS.
\begin{figure}[h!]%
	\begin{center}%
		\includegraphics[width=0.325\textwidth]{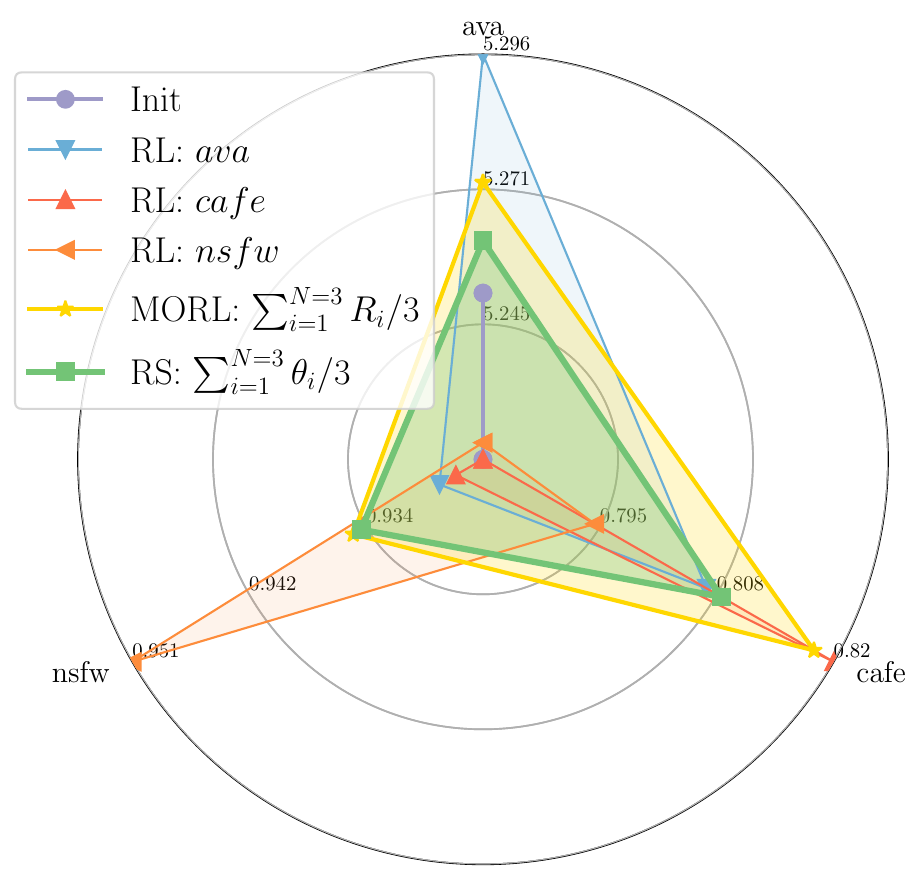}%
	\end{center}%
	\caption{Image generation: spider map, with \textit{ava}, \textit{cafe} and \textit{nsfw} reward models.}%
	\label{fig:pareto_diffusion_1}%
\end{figure}%
\FloatBarrier%
\subsection{Visualization of generated images from interpolated models}%
We show in \Cref{fig:viz_interp} images generated by rewarded soups when varying the interpolation coefficient $\lambda$ between the two models fine-tuned for the \textit{ava} and the \textit{cafe} aesthetic rewards.
You can find additional qualitative results for this experiment on this \ifthenelse{\boolean{isopen}}{}{anonymized }\href{\urlwebsite}{website}.

Moreover, in \Cref{fig:diffusionquality}, we measure the FID \cite{heusel2017gans} and the CLIPScore \cite{hessel2021clipscore} of the images generated by the same interpolated models.
This confirms quantitatively that images generated by interpolated models remain coherent.

\def\myim#1{\includegraphics[width=18mm,height=18mm, valign=c]{figures/diffusion/viz_jpeg/#1}}
\begin{figure}[b!]
	\centering
	\label{fig:viz_interp}
	\setlength{\tabcolsep}{0pt}
	\renewcommand{\arraystretch}{1}
	\begin{tabular}{p{2cm}|cccccc}
		Prompt                    & $\lambda = 0.0$         & $\lambda = 0.2$         & $\lambda = 0.4$         & $\lambda = 0.6$         & $\lambda = 0.8$         & $\lambda = 1.0$         \\
		\begin{minipage}[c]{1.9cm}
			\scriptsize a dog stands inside of a boat as it stares at a camera
		\end{minipage} & \myim{9bbdc711f7_5.jpg} & \myim{9bbdc711f7_4.jpg} & \myim{9bbdc711f7_3.jpg} & \myim{9bbdc711f7_2.jpg} & \myim{9bbdc711f7_1.jpg} & \myim{9bbdc711f7_0.jpg} \\
		\begin{minipage}[c]{1.9cm}
			\scriptsize A family room with wood floor and beige walls and a mattress leaning against a stone wall.
		\end{minipage} & \myim{576a8b2407_5.jpg} & \myim{576a8b2407_4.jpg} & \myim{576a8b2407_3.jpg} & \myim{576a8b2407_2.jpg} & \myim{576a8b2407_1.jpg} & \myim{576a8b2407_0.jpg} \\
		\begin{minipage}[c]{1.9cm}
			\scriptsize A man sitting on top of a chair holding up a cell phone.
		\end{minipage} & \myim{22550e7610_5.jpg} & \myim{22550e7610_4.jpg} & \myim{22550e7610_3.jpg} & \myim{22550e7610_2.jpg} & \myim{22550e7610_1.jpg} & \myim{22550e7610_0.jpg} \\
		\begin{minipage}[c]{1.9cm}
			\scriptsize A man sitting underneath an umbrella and other structures.
		\end{minipage} & \myim{dd1fe83e32_5.jpg} & \myim{dd1fe83e32_4.jpg} & \myim{dd1fe83e32_3.jpg} & \myim{dd1fe83e32_2.jpg} & \myim{dd1fe83e32_1.jpg} & \myim{dd1fe83e32_0.jpg}
	\end{tabular}
	\caption{Visualization of images generated with rewarded soups for a varying interpolation coefficient $\lambda$ between the two models fine-tuned for the \textit{ava} (corresponding to $\lambda=0$) and \textit{cafe} (corresponding to $\lambda=1$) reward models. We can see that all interpolated models produce images of similar quality compared to fine-tuned models, demonstrating linear mode connectivity between the two fine-tuned models.}
\end{figure}

\begin{figure}[h!]
	\begin{center}
		\includegraphics[width=0.55\textwidth]{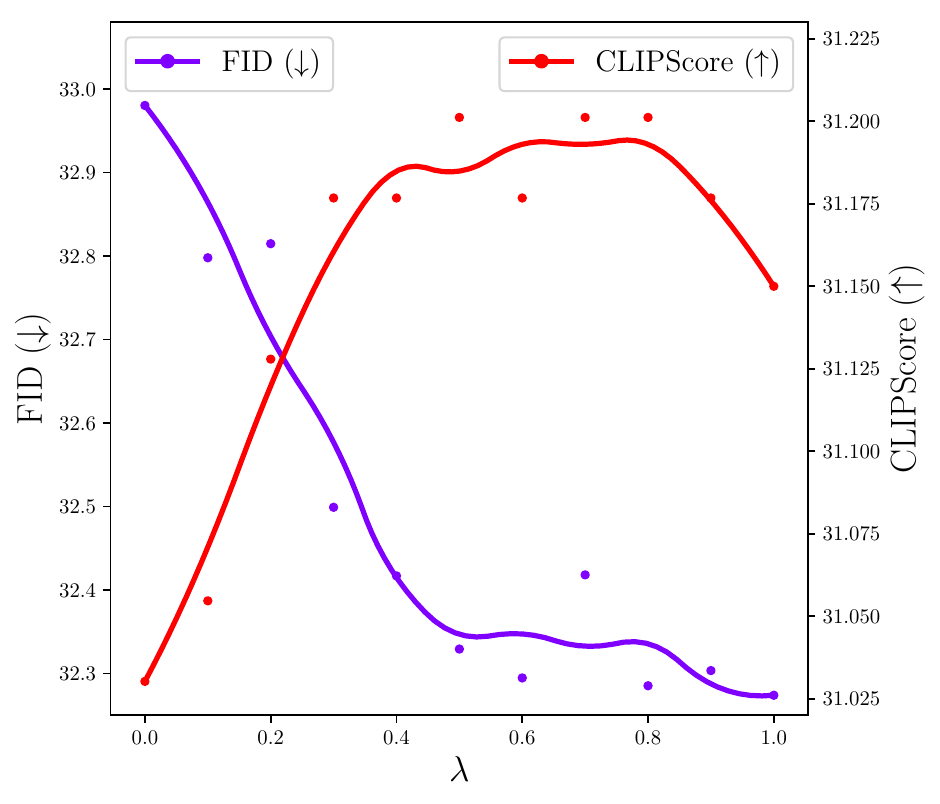}
		\caption{Images generated by $\lambda$-interpolated diffusion models evaluated in terms of realism by FID \cite{heusel2017gans} or text alignment by CLIPScore \cite{hessel2021clipscore}.}
	\end{center}
	\label{fig:diffusionquality}
\end{figure}

\FloatBarrier

\section{Text-to-box: visual grounding of objects with diverse sizes}
\label{app:refcoco}
\subsection{Experimental details}
We show the implementation details in \Cref{tab:expe_details_refcoco}. We use UnIVAL \cite{shukor2023unifiedunival}, a model pre-trained solely on public benchmarks, to solve a variety of multimodal tasks such as VQA, visual grounding and image captioning. It is then fine-tuned on RefCOCO+ dataset for visual grounding. During the last fine-tuning phase, we complement the cross-entropy loss with an additional REINFORCE \cite{williams1992simple} term rewarding accuracy when the object is of the considered size. This means that the loss for $\theta_{Small}$ is $-\left(log\left(\hat{y}\right) + 5 \times 1_{\{\text{area}\left(\hat{y}\right)\text{~is small}\}} \times  1_{AUC\left(y, \hat{y}\right) > 0.5} \times log\left(y\right) \right)$ for an object with ground-truth box $\hat{y}$ and prediction ${y}$. The image is discretized into $1000\times1000$ bins before calculating the box areas. The task is illustrated in Figure \ref{fig:visu_refcoco}.
\begin{figure}[h]
    \centering
    \includegraphics[width=\linewidth]{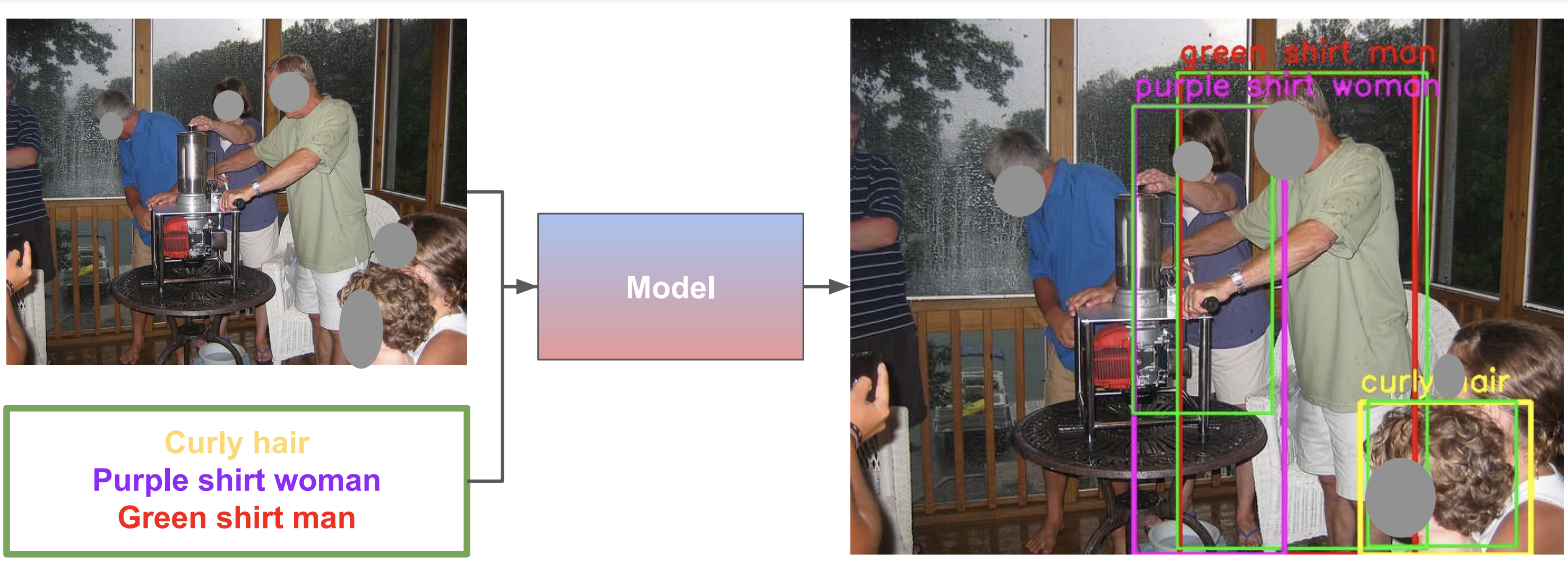}
    \caption{Illustration of the visual grounding task. The RS model results from the average of $N=3$ weights specialized to detect respectively \textcolor{bananayellow}{small}, \textcolor{blueviolet}{medium} and \textcolor{cadmiumred}{large} objects. The model takes a text (one description at a time) as input and outputs the bounding box in the corresponding region of the image. We show an example of \textcolor{bananayellow}{small}, \textcolor{blueviolet}{medium} and \textcolor{cadmiumred}{large} predictions, and the associated ground truths in \textcolor{green}{green}. These texts and image are from the validation set of RefCOCO+ \cite{yu2016modelingrefcoco}.}
    \label{fig:visu_refcoco}
\end{figure}

\begin{table}[h!]%
    \centering%
    \caption{Visual grounding experiments: key implementation details.}%
    \centering
        \begin{tabular}{lc}%
            \toprule
            \multicolumn{2}{c}{\textbf{Model}}                                                \\ \midrule
            Architecture         & UnIVAL \cite{shukor2023unifiedunival}        \\
            Visual encoder       & ResNet-101                                                 \\
            Pre-training          & Cross-Entropy on Public datasets (VQA, VG, Captioning)     \\
            Supervised fine-tuning           & Cross-Entropy on RefCOCO+ \cite{yu2016modelingrefcoco}     \\
            \midrule
            \multicolumn{2}{c}{\textbf{RL procedure}}                                         \\\midrule
            Fine-tuning strategy & end-to-end                                                 \\
            Dataset              & RefCOCO+ \cite{yu2016modelingrefcoco}                      \\                 
            RL algorithm         & Cross-entropy + $5\times$ REINFORCE \\
            Reward Small         & IoU>0.5 for object with $\text{area} < 30000$ \\
            Reward Medium        & IoU>0.5 for object with $30000 \leq \text{area} < 100000$ \\
            Reward Large         & IoU>0.5 for object with $100000 \leq \text{area}$ \\                  
            Optimizer            & Adam                                                       \\            
            Learning rate        & 3e-5                                                       \\
            Batch size           & 256                                                        \\
            Epochs               & 10                                                         \\
            Hardware             & 8 GPU 60GB                                                 \\
            Compute budget       & 800 GPUh                                                   \\
            \bottomrule
        \end{tabular}
    \label{tab:expe_details_refcoco}
\end{table}%
\FloatBarrier
\newpage
\subsection{Additional results}
\begin{figure}[h!]
    \begin{center}
        \begin{subfigure}[b]{0.24\textwidth}
            \includegraphics[width=1.0\textwidth]{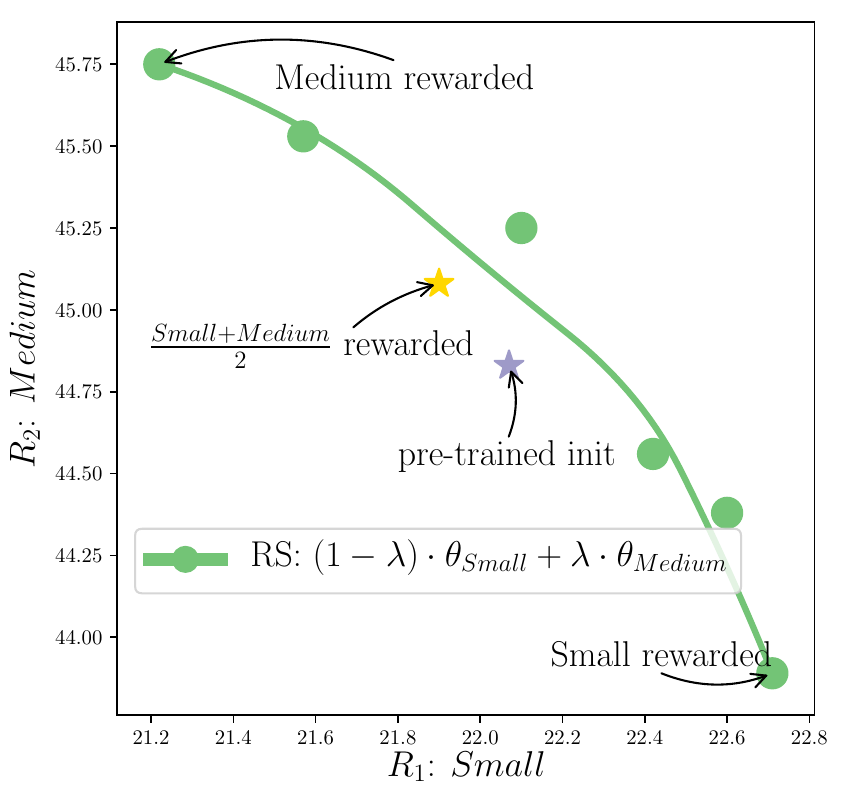}
            \caption{Small and Medium.}
            \label{fig:pareto_refcoco_small_medium}
        \end{subfigure}
        \hfill
        \begin{subfigure}[b]{0.24\textwidth}
            \includegraphics[width=1.0\textwidth]{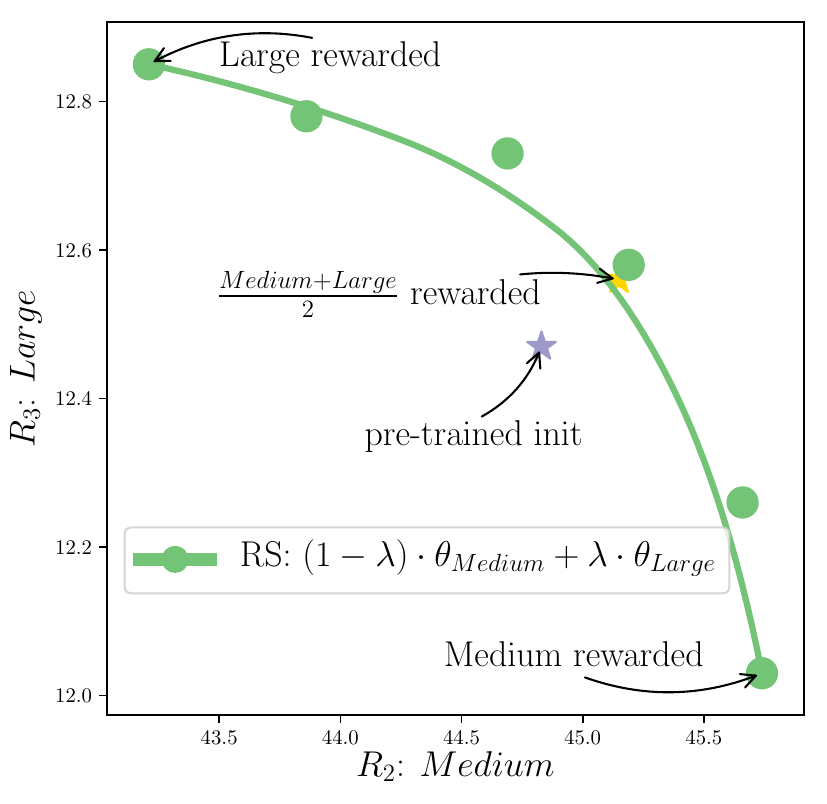}
            \caption{Medium and Large.}
            \label{fig:pareto_refcoco_medium_large}
        \end{subfigure}
        \hfill
        \begin{subfigure}[b]{0.24\textwidth}%
            \includegraphics[width=1.0\textwidth]{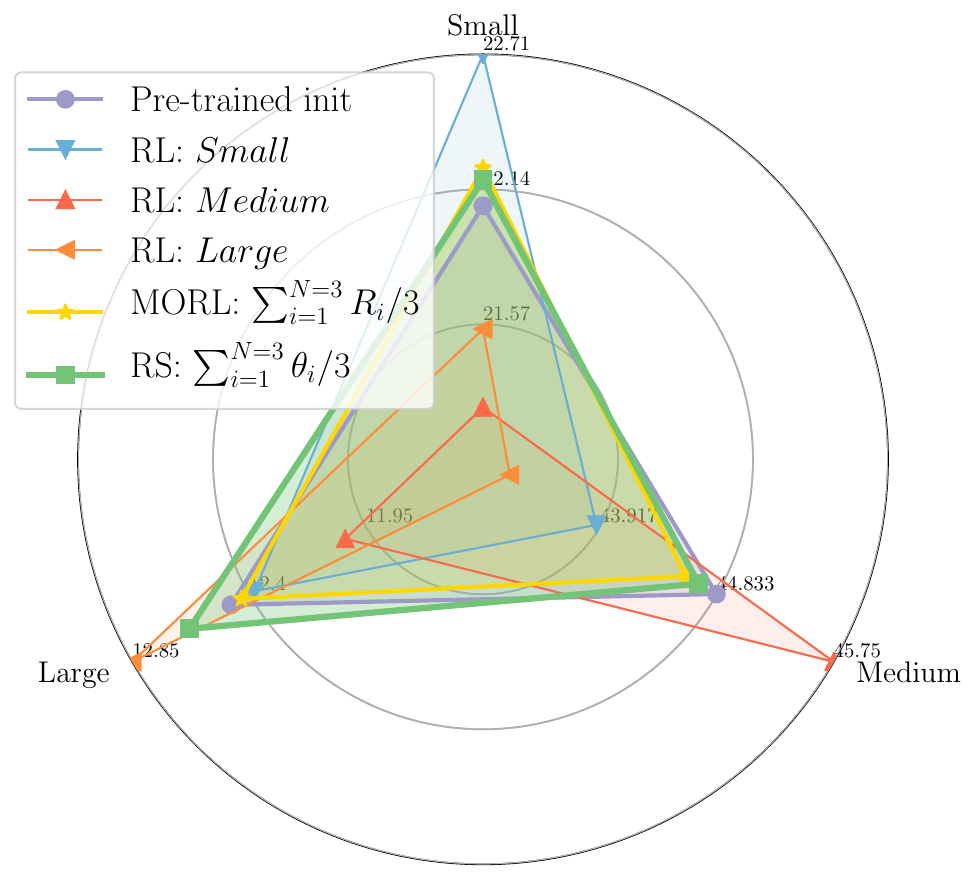}
            \caption{Spider map.}%
            \label{fig:spider_refcoco}%
        \end{subfigure}%
        \hfill
        \begin{subfigure}[b]{0.24\textwidth}
            \includegraphics[width=1.0\textwidth]{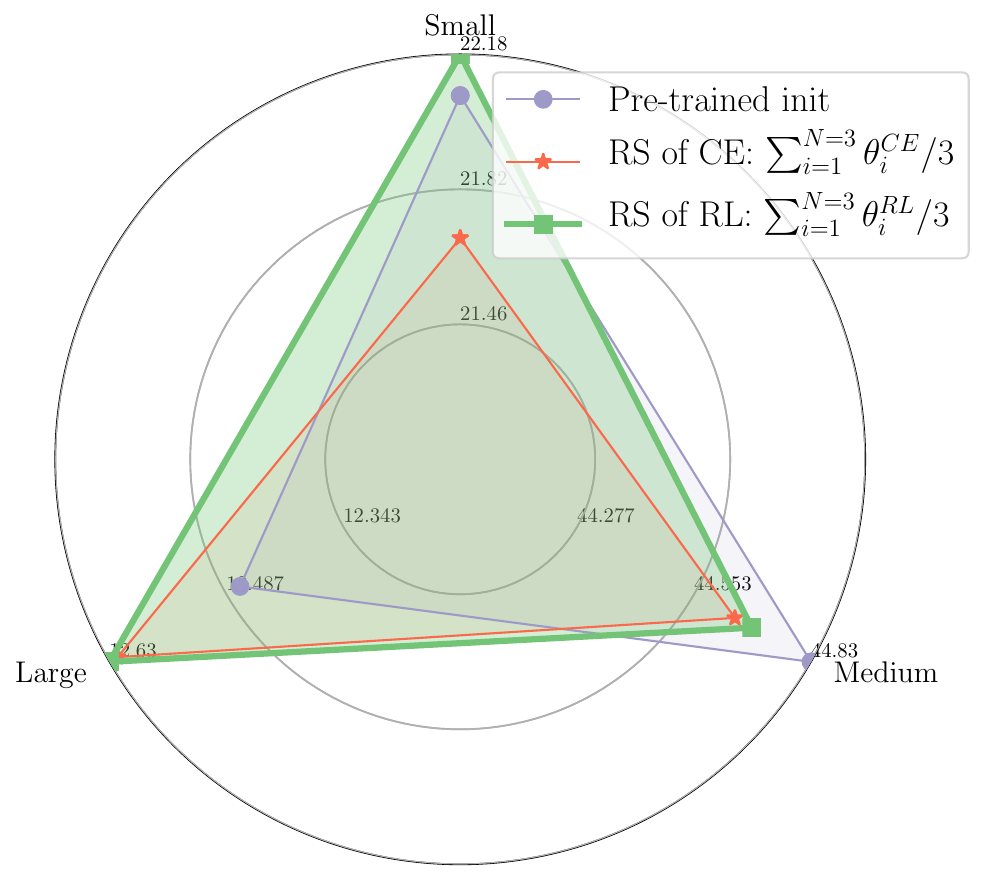}
            \caption{CE \versus RL.}
            \label{fig:spider_refcoco_supervised_small_medium_large}
        \end{subfigure}
    \end{center}
    \caption{Results in visual grounding on RefCOCO+ \cite{yu2016modelingrefcoco}. We use REINFORCE \cite{williams1992simple} to improve directly the non-differentiable accuracy, \ie predict boxes with IoU$>0.5$ \wrt the ground-truth. Fine-tunings are specialized on either small, medium, or large objects. These experiments complement \Cref{fig:pareto_refcoco_small_large,fig:spider_refcoco}.
    \Cref{fig:spider_refcoco} shows that improving results on all sizes simultaneously is challenging, as MORL performs similarly to the initialization.
    Finally, \Cref{fig:spider_refcoco_supervised_small_medium_large} motivates the use of RL to fine-tune on different sizes. Indeed, the results for (the proposed) RS of RL are significantly better than the results for RS of CE, where we average weights specialized on different sizes by fine-tuning with cross-entropy (rather than with REINFORCE).}
    \label{fig:appendix_refcoco}
\end{figure}

\ifthenelse{\boolean{isshort}}{}{\FloatBarrier
\section{Text\&image-to-text: VQA with diverse statistical rewards}
\label{app:vqa}
We detail our VQA experiments in \Cref{tab:expe_details_vqa}, where the goal is to answer a question \wrt an image.
Our pre-trained model is OFA \cite{wang2022ofa}, which was trained on a variety of multimodal tasks such as VQA, visual grounding and image captioning. 
We then fine-tune it only on the VQA v2 dataset using the cross-entropy as the loss.
Finally, we fine-tune with REINFORCE \cite{williams1992simple} on the different rewards.
We use a held-out set for the RL fine-tuning that was not used to train the main model. 
The rewards are BLEU1 and METEOR: as there are 10 ground-truth answers for each VQA example, the final reward is the average score over all those answers.

\begin{table}[h!]%
    \centering%
    \caption{Visual question answering experiments: key implementation details.}%
    \centering
        \begin{tabular}{lc}%
            \toprule
            \multicolumn{2}{c}{\textbf{Model}}                                                \\ \midrule
            Architecture         & OFA Medium \cite{wang2022ofa} \\
            Pre-training         & Public datasets (multimodal, text-only, image-only) \cite{wang2022ofa} \\
            Supervised fine-tuning           & Cross-Entropy fine-tuning on VQA v2 \cite{goyal2017vqa2}     \\
            \midrule
            \multicolumn{2}{c}{\textbf{RL procedure}}                                         \\\midrule
            Fine-tuning strategy & end-to-end                                                 \\
            Dataset              & VQA v2 \cite{goyal2017vqa2} \\
            RL algorithm         &  REINFORCE \\
            Reward         & BLEU1 and METEOR\\                  
            Optimizer            & Adam \\            
            Learning rate        & 1e-5 \\
            Batch size           & 32 \\
            Epochs               & 5 \\
            Hardware             & 4 GPU 32G \\
            Compute budget       & 20GPUh \\
            \bottomrule
        \end{tabular}
    \label{tab:expe_details_vqa}
\end{table}%
\FloatBarrier
\newpage
}
\section{Locomotion with diverse engineered rewards}
\label{app:locomotion}

\textbf{Task description.}
This experiment takes on the intricate challenge of controlling a running humanoid in the Brax \cite{brax2021github} physics engine. The complexities involved in achieving natural or fast movement in continuous control environments serve as a testament to the robustness of our approach. The fine-tuning procedure is carried out on two distinct reward functions, with the aim of refining the running behavior of the humanoid, potentially resulting in smoother motion patterns. You can find qualitative results of this experiment on this \ifthenelse{\boolean{isarxiv}}{}{anonymized }\href{\urlwebsite}{website}.

\textbf{Pre-training.} According to \Cref{remark:1}, the LMC requires pre-training the base policy before fine-tuning. Thus, as the pre-training task, we use the default dense reward implemented in Brax: ${R=velocity- 0.1 \times \sum_t a^{2}_{t}}$. This pre-training phase also serves to collect statistics about observations and normalize them before inputting to the model (as it facilitates training). We used the Brax implementation of PPO \cite{schulman2017proximal}. The pre-trained policy is saved while the value function is discarded.

\textbf{Fine-tuning.} We keep the same environment as in pre-training. We also use the normalization procedure inherited from pre-training but freeze the statistics.
Two reward functions are designed: a \textit{risky} one for ${R_{1}=velocity}$ and a \textit{cautious} one where ${R_{2}=velocity-\sum_t a^{2}_{t}}$. We tried a few hyperparameters (see the values in brackets in \Cref{tab:expe_details_locomotion}) but results (see \Cref{fig:locomotion_gridsearch}) remain close and consistently validate our working hypotheses.

\begin{table}[h!]%
    \centering%
    \caption{Locomotion experiments: key implementation details.}%
    \centering
    \begin{tabular}{ll}%
        \toprule
        \multicolumn{2}{c}{\textbf{PPO Pre-training}}                                                  \\
        \midrule
        Interactions & 5e8 \\
        Reward Scaling & 1.0 \\
        Episode Length & 1000 \\
        Unroll Length & 10 \\
        Discounting & 0.99 \\
        Learning Rate & 5e-5 \\
        Entropy Cost & 1e-3 \\
        Number of environments in parallel & 4096 \\
        Batch Size & 1024 \\
        Hardware & 1GPU Tesla V100-SXM2-16GB \\
        Runtime per experiment & ~80min \\
        \toprule
        \multicolumn{2}{c}{\textbf{PPO Fine-tuning}}                                                  \\
        \midrule
        Interactions & 1e8 \\
        Reward Scaling & 1. \\
        Normalize observations & True \\
        Unroll Length & 10 \\
        Discounting & \{0.97, 0.99, 0.999\} \\
        Learning Rate & \{1e-5, 3e-5, 1e-4\} \\
        Entropy Cost & \{1e-3, 3e-3, 1e-2\} \\
        Number of environments in parallel & 4096 \\
        Batch Size & 1024 \\
        Hardware & 1GPU Tesla V100-SXM2-16GB \\
        Runtime per experiment & ~20min \\
        \toprule
        \multicolumn{2}{c}{\textbf{Model architecture}}
        \\
        \midrule
        \textbf{Policy} & \\
        Architecture & MLP\\
        Nb of Layers & 6 \\
        Hidden Size & 512 \\
        \textbf{Value} & \\
        Architecture & MLP\\
        Nb of Layers & 5 \\
        Hidden Size & 256 \\
        \bottomrule
    \end{tabular}
    \label{tab:expe_details_locomotion}
\end{table}%
\begin{figure}[h!]
    \begin{center}
        \includegraphics[width=0.75\textwidth]{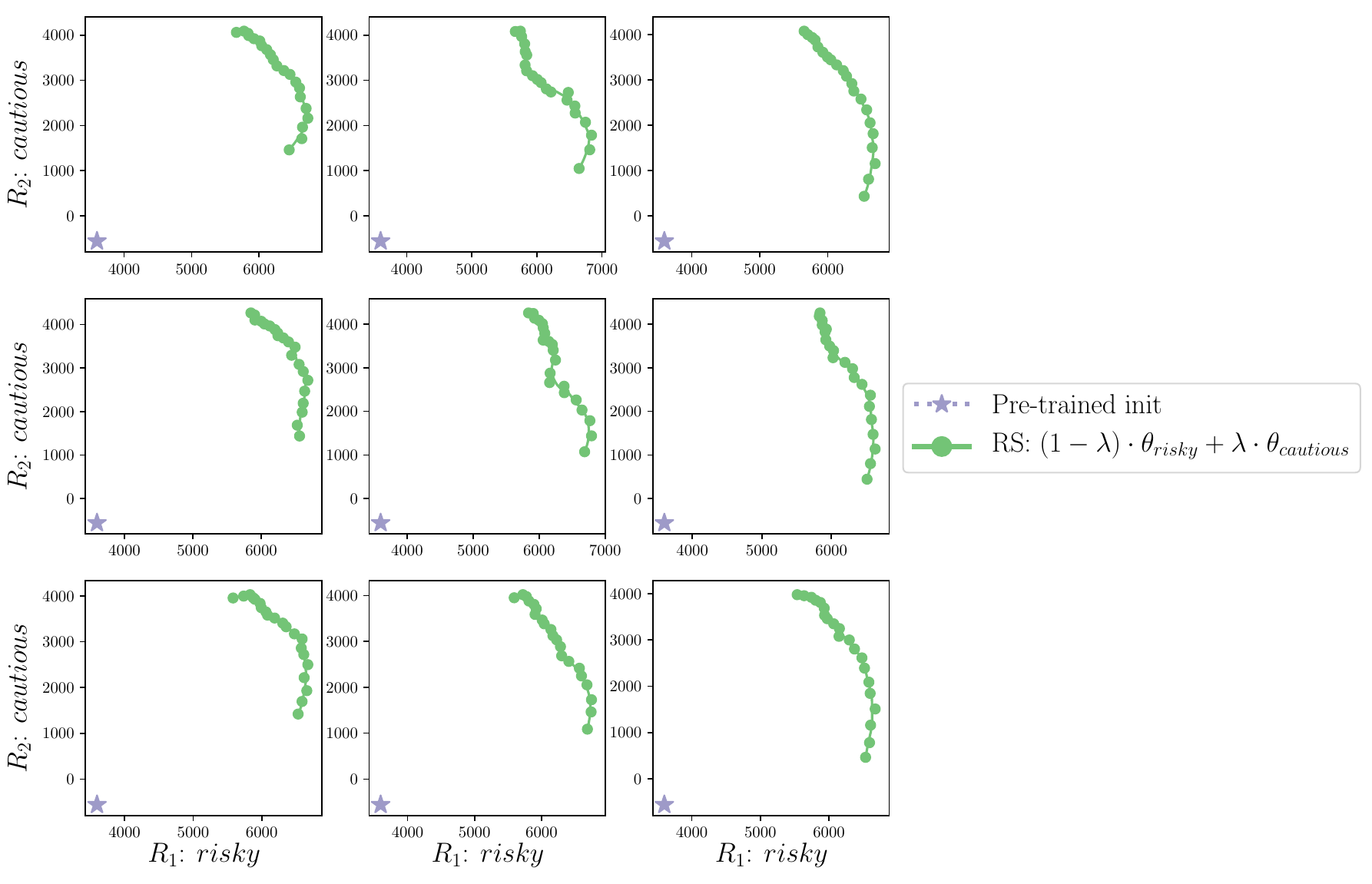}
    \end{center}
    \caption{Analysis of results' variance for the locomotion task when varying the hyperparameters. Each column $i$ corresponds to the $i$-th $\theta_{risky}$, interpolated in case $(i, j)$ towards the $j$-th $\theta_{cautious}$. The \Cref{fig:pareto_locomotion_riskprudence} is actually the plot from case $(1,1)$.}
    \label{fig:locomotion_gridsearch}
\end{figure}

\end{document}